\newtheorem{theorem}{Theorem}
\newcommand{\T}{\mathrm{T}}
\newcommand{\F}{\mathrm{F}}
\title{Logical Neural Networks}
\author{%
    Ryan Riegel \\
    IBM Research - Watson
    \And
    Alexander Gray \\
    IBM Research - Watson
    \And
    Francois Luus \\
    IBM Research - Africa
    \AND
    Naweed Khan \\
    IBM Research - Africa
    \And
    Ndivhuwo Makondo \\
    IBM Research - Africa
    \And
    Ismail Yunus Akhalwaya \\
    IBM Research - Africa
    \AND
    Haifeng Qian \\
    IBM Research - Watson
    \And
    Ronald Fagin \\
    IBM Research - Almaden
    \And
    Francisco Barahona \\
    IBM Research - Watson
    \AND
    Udit Sharma \\
    IBM Research - India
    \And
    Shajith Ikbal \\
    IBM Research - India
    \And
    Hima Karanam \\
    IBM Research - India
    \AND
    Sumit Neelam \\
    IBM Research - India
    \And
    Ankita Likhyani \\
    IBM Research - India
    \And
    Santosh Srivastava \\
    IBM Research - India
}
\begin{document}

\maketitle

\begin{abstract}
We propose a novel framework seamlessly providing key properties of both neural nets (learning) and symbolic logic (knowledge and reasoning).  Every neuron has a meaning as a component of a
formula in a weighted real-valued logic, yielding a highly intepretable disentangled representation.  Inference is omnidirectional rather than focused on predefined target variables, and
corresponds to logical reasoning, including classical first-order logic theorem proving as a special case.  The model is end-to-end differentiable, and learning minimizes a novel loss
function capturing logical contradiction, yielding resilience to inconsistent knowledge.  It also enables the open-world assumption by maintaining bounds on truth values which can have probabilistic semantics,
yielding resilience to incomplete knowledge.
\end{abstract}

\section{Introduction and related work}

We present \emph{Logical Neural Networks} (LNNs), a neuro-symbolic framework designed to simultaneously provide key properties of both neural nets (NNs) {\em(learning)} and symbolic logic {\em(knowledge and reasoning)} -- toward direct interpretability, utilization of rich domain knowledge realistically, and the general problem-solving ability of a full theorem prover. The central idea is to create a 1-to-1 correspondence between neurons and the elements of logical formulae, 
using the observation that the weights of neurons can be constrained to act as, e.g.~AND or OR gates.
While the view of neurons as logical gates was inherent in the seminal \cite{McCullochPitts43}, it appears to have remained relatively unexploited since; an exception is the idea of converting logical statements into NN forms \cite{pinkas1994translate,garcez1999connectionist}; the best known of these is \cite{towell1994knowledge}, where the final model's neurons do not necessarily retain logical gate behaviors.  In LNNs, no conversions are needed because they are identical.
Inputs include a propositional or first-order logic (FOL) knowledge base (KB), including the usual training data (feature-value pairs) as a special case, and which variables should be predicted from which.

\textbf{Per-neuron interoperability, via full logical expressivity.}  Many approaches are based on Markov random fields (MRFs), such as Markov logic networks
\cite{richardson2006markov}, probabilistic soft logic \cite{bach2017hinge}, and those based on ILP/SLPs e.g.~\cite{cohen2016tensorlog}, where each logical clause has a weight; the clauses are atomic i.e.~ their internal logical structure is not represented.  
Obtaining probabilities from them requires an unwieldy satisfiability problem to be solved, e.g.~via MCMC in \cite{richardson2006markov}.  LNN inference is deterministic/repeatable and provably convergent in finite steps.  
In LNNs, every neuron represents an element in a clause, and is either a concept (e.g.~"cat") or a logical connective (e.g.~AND, OR), with weights on the connecting edges.  Thus each neuron represents 1) a \emph{meaning}, raising the level of interpretability versus previous approaches, 2) a way to identify the importance of relationships
between variables and 3) more parameters, defining a richer model space for potentially more accurate prediction.  The network structure is thus \emph{compositional} and \emph{modular}, e.g.~able to represent that one clause may be a sub-clause of another.  The representation is \emph{disentangled}, versus approaches such as \cite{serafini2016logic,rocktaschel2017end} that use a vector representation, sacrificing interpretability of the network.  Where many approaches only allow the representational power of propositional logic or Horn clauses, LNNs allow full function-free \emph{first-order logic} with \emph{real values} $0 \leq x \leq 1$, and classical 0/1 logic as a special case.

\textbf{Tolerance to incomplete knowledge, via truth bounds.}  The line of approach embodied by MRFs make a closed-world assumption, i.e.~that if a statement doesn't appear in the KB, it is
false.  LNN does not require complete specification of all variables' exact degree of truth, more generally maintaining upper and lower bounds for each variable -- allowing the
\emph{open-world assumption} that complete knowledge of the world is not realistic in general.  Bounds also contain more interpretable information than single values, and we show that they can represent \emph{probabilistic semantics}.

\textbf{Many-task generality, via omnidirectional inference.}  LNN neurons express bidirectional relationships with each  neighbor, allowing inference in any direction.   This allows task generality versus typical single-task NNs, and allows full-fledged \emph{theorem proving}.  MRF approaches that hide the internal logical
structure of clauses cannot draw the same conclusions that a theorem prover can.  Many/most neuro-symbolic approaches, e.g.~those based on embeddings, are arguably only ``logic-like'' and typically do not
demonstrate reliably precise  deduction. We show evidence of such capability on sample tasks used to test state-of-the-art theorem provers.

\section{Overview}

A logical neural network (LNN) is a form of recurrent neural network with a 1-to-1 correspondence to a set of logical formulae in any of various systems of \emph{weighted, real-valued logic}, in which evaluation performs logical inference.
Key innovations that set LNNs aside from other neural networks are
    1) neural activation functions \textbf{constrained} to implement the truth functions of the logical operations they represent, i.e.~\(\land, \lor, \neg, \rightarrow\), and, in FOL, \(\forall\) and \(\exists\),
    2) results expressed in terms of \textbf{bounds} on truth values so as to distinguish known, approximately known, unknown, and contradictory states,
    and 3) \textbf{bidirectional inference} permitting, e.g., \(x \rightarrow y\) to be evaluated as usual in addition to being able to prove \(y\) given \(x\) or, just as well, \(\neg x\) given \(\neg y\).
The nature of the modeled system of logic depends on the family of activation functions chosen for the network's neurons, which implement the logic's various atoms
and operations.
In particular, it is possible to constrain the network to behave exactly classically when provided classical input.
Computation is characterized by tightening \emph{bounds} on truth values at neurons pertaining to subformulae in \emph{upward} and \emph{downward passes} over the represented formulae's syntax trees.
Bounds tightening is monotonic; accordingly, computation cannot oscillate and necessarily converges for propositional logic.
Because of the network's modular construction, it is possible to partition and/or compose networks, inject formulae serving as logical constraints or queries, and control which parts of the network (or individual neurons) are trained or evaluated.

Inputs are initial truth value bounds for each of the neurons in the network; in particular, neurons pertaining to predicate atoms may be populated with truth values taken from KB data. Additional inputs may take the form of injected formulae representing a query or specific inference problem.
Outputs are typically the final computed truth value bounds at one or more neurons pertaining to specific atoms or formulae of interest.
In other problem contexts, the outputs of interest may instead be the neural parameters themselves --- serving as a form of inductive logic programming (ILP) --- after learning with a given loss function and input training data set.

\section{Model structure}
\label{sec:model}

\begin{figure}[tbp]%
    \centering%
    \begin{subfigure}[b]{.4\textwidth}%
    \centering%
    \captionsetup{width=.9\textwidth}%
    \includegraphics[width=.9\textwidth,trim={11cm 6cm 11cm 5cm},clip]{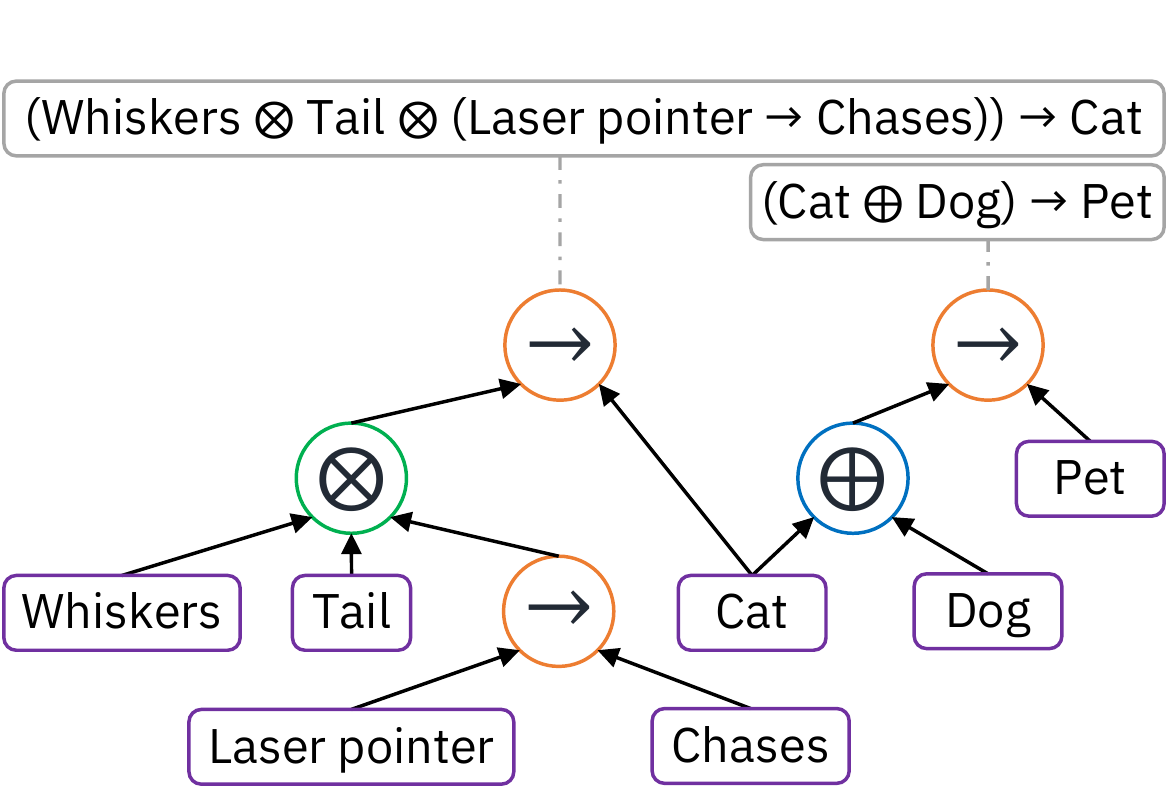}%
    \caption{%
      The LNN graph structure reflects the formulae it represents.
    }%
    \label{fig:structure}%
    \end{subfigure}%
    \begin{subfigure}[b]{.25\textwidth}%
    \centering%
    \captionsetup{width=.9\textwidth}%
    \includegraphics[width=.9\textwidth,trim={12.1cm 4.5cm 11.5cm 4.8cm},clip]{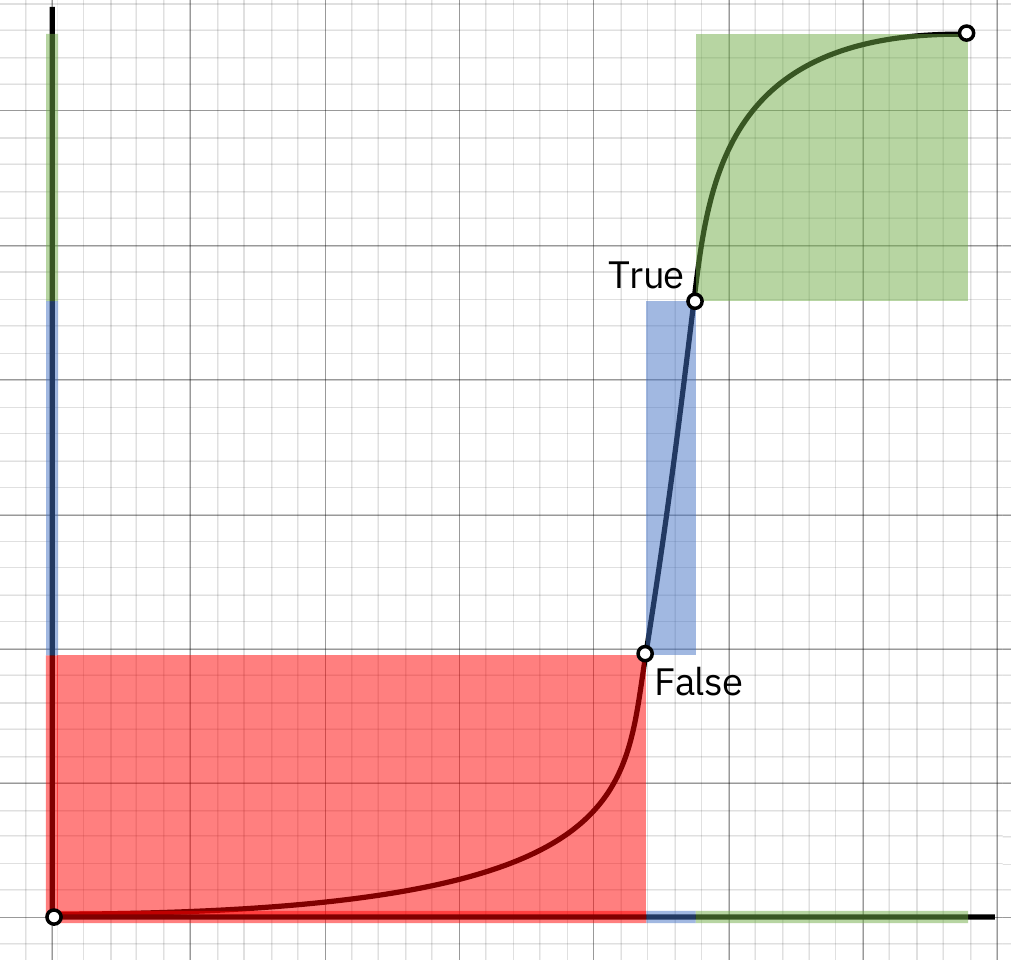}%
    \caption{%
      A logistic activation function 
    }%
    \label{fig:logistic}%
    \end{subfigure}%
    \begin{subfigure}[b]{.25\textwidth}%
    \centering%
    \captionsetup{width=.9\textwidth}%
    \includegraphics[width=.9\textwidth,trim={12.1cm 4.5cm 11.5cm 4.8cm},clip]{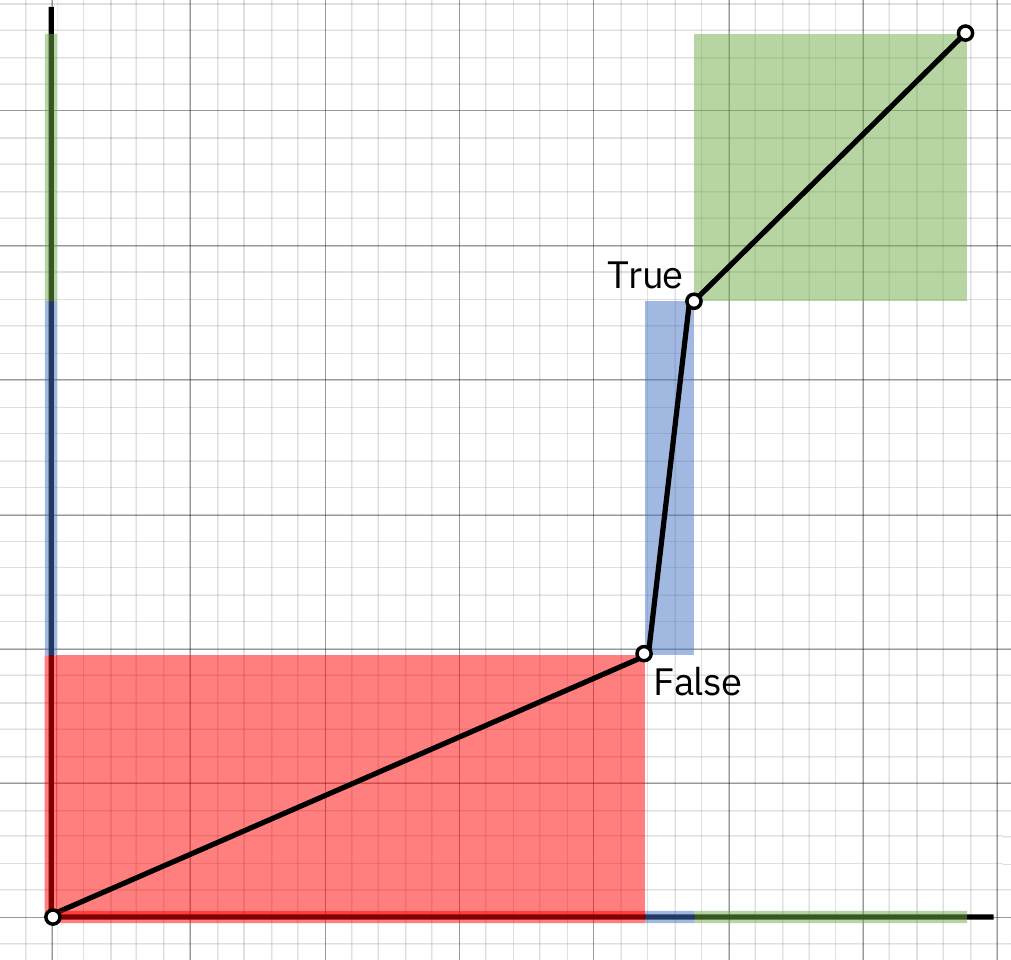}%
    \caption{%
      A linearly interpolated activation function
    }%
    \label{fig:tailored}%
    \end{subfigure}%
    \caption{Neurons (\ref{fig:structure}) with alternative activation functions (\ref{fig:logistic},~\ref{fig:tailored}) configured to match the truth functions of their corresponding operations, with established regions of unambiguously \texttt{True}, unambiguously \texttt{False}, and intermediate truth.
    }%
\end{figure}

In general, LNNs are described in terms of FOL, but it is useful to discuss LNNs restricted to the scope of propositional logic.\footnote{%
  First-order LNN expands to include neurons for predicate and quantifier symbols, with each formula grounding treated as a proposition.
  Each neuron keeps a table that maps a set of $n$-dimensional grounding tuples to truth value bounds, where $n$ is the number of unique variables in the underlying subformula.
  Quantifiers~\(\forall\) and \(\exists\) are modeled as pass-through nodes aggregating bounds over one of the $n$ dimensions via min and max, respectively.
  Inference proceeds as described for propositional LNN in Section~\ref{sec:inference}, with each grounding treated independently and special handling for \(\forall\) and \(\exists\).
  This method is similar to approaches that reduce inference in classical FOL to propositional logic \cite{modernAI}.
  Additional details are given in section \ref{sec:fol_lnn} of the supplementary material.
}
Structurally, an LNN is a graph made up of the syntax trees of all represented formulae connected to each other via neurons added for each proposition.
Specifically, as shown in Figure~\ref{fig:structure}, there exists one neuron for each logical operation occurring in each formula and, in addition, one neuron for each unique proposition occurring in any formula.
All neurons return pairs of values in the range \([0,1]\) representing lower and upper bounds on the truth values of their corresponding subformulae and propositions.
To aid interpretability of bounds, we define a threshold of truth~\(\frac{1}{2} < \alpha \leq 1\) such that a continuous truth value is considered \texttt{True} if it is greater than \(\alpha\) and \texttt{False} if it is less than \(1 - \alpha\).
Bound values identify one of four primary states that a neuron can be in, whereas secondary states offer a more-true-than-not or more-false-than-not interpretation.

\begin{table}[h]%
    \label{tab:bound-states}%
    \centering%
    \begin{small}%
    \caption{Primary truth value bound states}%
    \begin{tabular}{ c c c c c } %
    \toprule%
    Bounds & Unknown & True & False & Contradiction \\
    \midrule%
    Upper & \([ \alpha, 1 ]\) & \([ \alpha, 1 ]\) & \([ 0, 1 - \alpha ]\) &  Lower > Upper \\
    Lower & \([ 0, 1 - \alpha ]\) & \([ \alpha, 1 ]\) & \([ 0, 1 - \alpha ]\) & ~ \\

    \bottomrule%
    \end{tabular}%
    \end{small}%
\end{table}

Neurons corresponding to logical connectives accept as input the output of neurons corresponding to their operands and have activation functions configured to match the connectives' truth functions.
Neurons corresponding to propositions accept as input the output of neurons established as \emph{proofs} of bounds on the propositions' truth values and have activation functions configured to aggregate the tightest such bounds.
Proofs for propositions may be established explicitly, e.g.~as the heads of Horn clauses, though Section~\ref{sec:inference} shows how bidirectional inference permits every occurrence of each proposition in each formula to be used as a potential proof.
Negation is modeled as a pass-through node with no parameters, canonically performing \(\neg x = 1 - x\).

\subsection{Activation functions for connectives}
\label{sec:model-connectives}

Many candidate neural activation functions can accommodate the classical truth functions of logical connectives, each varying in how it handles inputs strictly between 0 and 1.
For instance, \(\min\{x, y\}\) is a suitable activation function for real-valued conjunction~\(x \otimes y\), but so are \(x \cdot y\) and \(\max\{0, x + y - 1\}\).
The choice of activation function corresponds to the implemented real-valued logic; G\"odel, product, and \L ukasiewicz logic are common examples. %

In addition to matching their corresponding connectives' truth functions, as described in Section~\ref{sec:learning}, LNN requires monotonicity.  Specifically, the activation functions for conjunction and disjunction must increase monotonically with respect to each operand, and the activation function for implication must decrease monotonically with respect to the antecedent and increase monotonically with respect to the consequent.
In addition, it is useful though not required for \(\otimes\) and real-valued disjunction~\(x \oplus y\) to be related via the De Morgan laws, for both operations to be commutative and associative, and for real-valued implication~\(x \rightarrow y\) to be the \emph{residuum} of \(\otimes\), or specifically \((x \rightarrow y) = \max \{z \mid y \geq (x \otimes z)\}\).
These properties do not guarantee that \((x \rightarrow y) = (\neg x \oplus y)\), that \(((x \rightarrow 0) \rightarrow 0) = x\), or that \((x \otimes x) = (x \oplus x) = x\), though they may individually hold for certain activation functions.

Gradient-based neural learning also requires differentiable parameters that can be tuned to improve model performance.
We introduce the concept of \emph{importance weighting}, whereby neural inputs with larger weight have more influence on neural output.
This can take many forms, as thoroughly explored in \cite{fagin2000formula}, though this paper focuses on an easily computable weighting scheme based on nonlinear functions applied to dot products.

\subsection{Weighted nonlinear logic}
\label{sec:weighted}

We introduce weighted generalizations of the standard real-valued logics in section \ref{sup:weighted_real_value_logics} of the supplementary material.
This completes the mapping of NNs to weighted real-valued logics.
Here is shown a weighted generalization of \L ukasiewicz-like logics.
The \(n\)-ary \emph{weighted nonlinear conjunctions}, used for logical AND, are given
\begin{align}
    \textstyle {}^{\beta}(\bigotimes_{i \in I} x_i^{\otimes w_i}) & \textstyle = f(\beta - \sum_{i \in I} w_i (1 - x_i))
\intertext{%
for \(f : \mathbb{R} \to [0,1]\) with \(f(1 - x) = 1 - f(x)\), input set~\(I\), bias term~\(\beta \geq 0\), weights~\(w_i \geq 0\), and inputs \(x_i \in [0,1]\).
The \(n\)-ary \emph{weighted nonlinear disjunctions}, used for logical OR, are then
}
    \textstyle {}^{\beta}(\bigoplus_{i \in I} x_i^{\oplus w_i}) & \textstyle = f(1 - \beta + \sum_{i \in I} w_i x_i).
\end{align}

Observe that \(\beta\) and the various \(w_i\) establish a hyperplane with respect to the inputs~\(x_i\), though clamped to the \([0,1]\) range.
For \(f(x) = \max\{0, \min\{1, x\}\}\), the resulting activation functions are similar to the rectified linear unit (ReLU) from neural network literature and to the \L ukasiewicz t- and s-norms.
Alternate choices of \(f\) include the logistic function as shown in Figure~\ref{fig:logistic} and a linearly interpolated \emph{tailored activation function} as shown in Figure~\ref{fig:tailored}, as further explored in Section~\ref{sec:learning}.

Bias term~\(\beta\) permits classically equivalent formulae~\(x \rightarrow y\), \(\neg y \rightarrow \neg x\), and \(\neg x \oplus y\) to be made equivalent in weighted nonlinear logic by adjusting \(\beta\).
The \emph{weighted nonlinear residuum}, used for logical implication,
is given by the residuum\footnote{%
  This solution assumes \(f(x) = \max\{0, \min\{1, x\}\}\), but for simplicity may be used with any \(f\).
}
of \(\otimes\),
\begin{equation}
    {}^{\beta}(x^{\otimes w_x} \rightarrow y^{\oplus w_y}) = f(1 - \beta + w_x (1 - x) + w_y y).
\end{equation}
Note the use of \(\otimes\) in the antecedent weight but \(\oplus\) in the consequent weight, which is meant to indicate the antecedent has AND-like weighting (scaling its distance from 1) while the consequent has OR-like weighting (scaling its distance from 0).
This residuum is most disjunction-like when \(\beta = 1\), most \((x \rightarrow y)\)-like when \(\beta = w_y\), and most \((\neg y \rightarrow \neg x)\)-like when \(\beta = w_x\).

\subsection{Activation functions for atoms}

Neurons pertaining to atoms require activation functions that aggregate truth values found for the various computations identified as proofs of the atom.
For example, each of \((x_1 \otimes x_2 \otimes x_3) \rightarrow y\), \((x_1 \otimes x_4) \rightarrow y\), and \((x_2 \otimes x_4) \rightarrow \neg y\) may constitute proofs (or disproofs) of \(y\).
A typical means of aggregating proven truth values is to return the maximum proven lower bound and minimum proven upper bound.
On the other hand, it may be desirable to employ importance weighting via weighted nonlinear logic in aggregation as well, substituting OR for max in the lower bound computation and AND for min in the upper bound.
A natural choice of weights for such aggregations is the weights of the atoms as they occur in the formulae serving as their proofs.
For example, if \(x_1^{\otimes 3} \rightarrow y^{\oplus 2}\) and \(x_2^{\otimes 1} \rightarrow y^{\oplus .5}\) are proofs of \(y\), then aggregation would use weights 2 and .5, respectively.

\section{Inference}
\label{sec:inference}

Inference refers to the entire process of computing truth value bounds for (sub)formulae and atoms based on initial knowledge, ultimately resulting in predictions made at neurons pertaining to queried formulae or other results of interest.
LNN achieves this with multiple passes over the represented formulae, propagating tightened truth value bounds from neuron to neuron until computation necessarily converges. %
The previous section already suggests the \emph{upward pass} of inference, whereby formulae compute their truth value bounds based on bounds available for their subformulae.
This section further describes the \emph{downward pass}, which permits prior belief in the truth or falsity of formulae to inform truth value bound for propositions or predicates used in said formulae.

\subsection{Bidirectional inference}
\label{sec:Bidirectional-inference}

In addition to the usual evaluation of connectives, LNN infers truth values bounds for each of a connective's inputs based on bounds on its output and other inputs.
Depending on the type of connective involved, such computations correspond to the familiar inference rules of classical logic: \\
\begin{minipage}{.45\textwidth}%
\begin{alignat*}{4}
         x,{} &&       x \rightarrow y &   && \vdash{} &      y \tag{\emph{modus ponens}} \\
    \neg y,{} &&       x \rightarrow y &   && \vdash{} & \neg x \tag{\emph{modus tollens}}
\end{alignat*}%
\end{minipage}%
\begin{minipage}{.55\textwidth}%
\begin{alignat*}{4}
         x,{} &&       \neg (x \land y & ) && \vdash{} & \neg y \tag{conjunctive syllogism} \\
    \neg x,{} &&              x \lor y &   && \vdash{} &      y \tag{disjunctive syllogism}
\end{alignat*}%
\end{minipage}%

The precise nature of these computations depends on the selected family of activation functions.
For example, as noted in Section~\ref{sec:model-connectives}, if implication is defined as the residuum, then \emph{modus ponens} is performed via the logic's t-norm, i.e.~AND.
The remaining inference rules follow a similar pattern as prescribed by the functional or logical inverses of their upward computations.

The application of these inference rules immediately suggests a means of generating proofs for atoms based on the formulae they occur in.
As further discussed in Section~\ref{subsec:recurrent_algorithm}, given
truth value bounds for a formula,
it is possible to apply
inference rules to obtain truth value bounds for each of its leaves.

\subsection{Inference rules in weighted nonlinear logic}
\label{sec:inference_rules}

In the following, \(L\) and \(U\) denote lower and upper bounds found for neurons corresponding to the formulae indicated in their subscripts, e.g.~\(L_{x \rightarrow y}\) is the lower-bound truth value for the formula~\(x \rightarrow y\) as a whole while \(U_x\) is the upper bound for just \(x\).
The bounds computations for \(\neg\) are trivial:
\begin{align*}
    L_{\neg x} & \geq \neg U_x = 1 - U_x, & L_x & \geq \neg U_{\neg x} = 1 - U_{\neg x}, \\
    U_{\neg x} & \leq \neg L_x = 1 - L_x, & U_x & \leq \neg L_{\neg x} = 1 - L_{\neg x}.
\end{align*}
The use of inequalities here acknowledges that tighter bounds for each value may be available from other sources.
For instance, both \(y\) and \(x \rightarrow \neg y\) can yield \(L_{\neg y}\); the tighter of the two would apply.

Observing that, in weighted nonlinear logic, \({}^{\beta}(x^{\otimes w_x} \rightarrow y^{\oplus w_y}) = {}^{\beta}((1 - x)^{\oplus w_x} \oplus y^{\oplus w_y})\) and \({}^{\beta}(\bigotimes_{i \in I} x_i^{\otimes w_i}) = 1 - {}^{\beta}(\bigoplus_{i \in I} (1 - x_i)^{\oplus w_i})\), it is only necessary to define one set of inference rule computations to cover all connectives.
The upward bounds computations for \({}^{\beta}(\bigoplus_{i \in I} x_i^{\oplus w_i})\) are
\begin{align*}
    L_{\bigoplus_i x_i} & \textstyle \geq {}^{\beta}(\bigoplus_{i \in I} L_{x_i}^{\oplus w_i}), & U_{\bigoplus_i x_i} & \textstyle \leq {}^{\beta}(\bigoplus_{i \in I} U_{x_i}^{\oplus w_i})
\end{align*}
while the downward bounds computations for disjunction are
\begin{alignat}{3}
    L_{x_i} & \textstyle \geq {}^{\beta/w_i}((\bigotimes_{j \neq i} (1 - U_{x_j})^{\otimes w_j/w_i}) \otimes L_{\bigoplus_i x_i}^{\otimes 1/w_i}) & \qquad\text{if } L_{\bigoplus_i x_i} & > 1 - \alpha, & \quad\text{else } 0 \label{eq:or_downward_0}\\
    U_{x_i} & \textstyle \leq {}^{\beta/w_i}((\bigotimes_{j \neq i} (1 - L_{x_j})^{\otimes w_j/w_i}) \otimes U_{\bigoplus_i x_i}^{\otimes 1/w_i}) & \qquad\text{if } U_{\bigoplus_i x_i} & < \alpha, & \quad\text{else } 1 \label{eq:or_downward_1}
\end{alignat}
where \(\alpha\) is threshold determined by \(f\) to address potential divergent behavior at \(L_{\bigoplus_i x_i} \leq 1 - \alpha\) and \(U_{\bigoplus_i x_i} \geq \alpha\).
To understand why this occurs, observe that, for \(f(x) = \max\{0, \min\{1, x\}\}\), i.e.~the ReLU case, \(x \oplus y\) can return 1 for many different values of \(x\) and \(y\); specifically, whenever \(w_x x + w_y y \geq \beta\).
Accordingly, if \(U_{\bigoplus_i x_i} = 1\), we cannot infer an upper bound for any \(x_i\).

\subsection{The Upward--Downward algorithm}
\label{subsec:recurrent_algorithm}

\begin{algorithm*}[tbp]
\caption{Upward pass to infer formula truth value bounds from subformulae bounds \label{algo-upwardpass}}%
\small%
\SetKwComment{tcp}{\#{ }}{}%
\SetAlgoLined\SetArgSty{}%
\SetFuncArgSty{}
\SetKwProg{Fn}{function}{\string:}{}%
\SetKwFunction{FRecurs}{upwardPass}%
\Fn{\FRecurs{formula $z$}}{
\For(\tcp*[f]{propagate bounds upward from leaves}){operand $x_i$ of $z$, $i \in I$}{
\FuncSty{upwardPass}$(x_i)$
}
\If(\tcp*[f]{negation}){$z = \neg x$}{
\FuncSty{aggregate}$(z, (1 - U_x,\;1 - L_x))$
}
\ElseIf(\tcp*[f]{multi-input disjunction}){$z={}^{\beta}(\bigoplus_{i \in I} x_i^{\oplus w_i})$}{
\FuncSty{aggregate}$(z, ({}^{\beta}(\bigoplus_{i \in I} L_{x_i}^{\oplus w_i}),\;{}^{\beta}(\bigoplus_{i \in I} U_{x_i}^{\oplus w_i}))$
}
\tcp{Other operations are handled as combinations of the above}
}
\SetKwFunction{FRecurs}{aggregate}%
\Fn{\FRecurs{formula $z$, $(L_z', U_z')$}}{
$(L_z, U_z) := (\max\{L_z,L_z'\},\; \min\{U_z,U_z'\})$\tcp*[f]{tighten existing bounds}
}
\end{algorithm*}

\begin{algorithm*}[tbp]
\caption{Downward pass to infer subformula truth value bounds from formula bounds \label{algo-downwardpass}}%
\small%
\SetKwComment{tcp}{\#{ }}{}%
\SetAlgoLined\SetArgSty{}%
\SetFuncArgSty{}
\SetKwProg{Fn}{function}{\string:}{}%
\SetKwFunction{FRecurs}{downwardPass}%
\Fn{\FRecurs{formula $z$}}{
\For{operand $x_j$ of $z$, $j \in I$}{
\If(\tcp*[f]{negation}){$z=\neg x$}{
\FuncSty{aggregate}$(x, (1 - U_z,\;1 - L_z))$
}
\ElseIf(\tcp*[f]{multi-input disjunction}){$z={}^{\beta}(\bigoplus_{i \in I} x_i^{\oplus w_i})$}{
$\begin{array}{r@{}lll}
L_{x_j}' & {}:= {}^{\beta/w_j}((\bigotimes_{i \neq j} (1 - U_{x_i})^{\otimes w_i/w_j}) \otimes L_{\bigoplus_i x_i}^{\otimes 1/w_j}) & \text{if } L_{\bigoplus_i x_i} > 1 - \alpha, & \text{else } 0 \\
U_{x_j}' & {}:= {}^{\beta/w_j}((\bigotimes_{i \neq j} (1 - L_{x_i})^{\otimes w_i/w_j}) \otimes U_{\bigoplus_i x_i}^{\otimes 1/w_j}) & \text{if } U_{\bigoplus_i x_i} < \alpha, & \text{else } 1
\end{array}$
\FuncSty{aggregate}$(x_j, (L_{x_j}',\;U_{x_j}'))$
}
\FuncSty{downwardPass}$(x_j)$\tcp*[f]{propagate bounds downward to leaves}
}
}
\end{algorithm*}

\begin{algorithm*}[tbp]
\caption{Recurrent inference procedure with recursive directional graph traversal \label{algo-inference}}%
\small%
\SetKwComment{tcp}{\#{ }}{}%
\SetAlgoLined\SetArgSty{}%
\SetFuncArgSty{}
\SetKwProg{Fn}{function}{\string:}{}%
\SetKwFunction{FRecurs}{inference}%
\Fn{\FRecurs{formulae $\mathbf{z}$}}{
\While(\tcp*[f]{loop until convergence}){$\sum(|\delta L_z| + |\delta U_z|) > \epsilon$}{
\For(\tcp*[f]{visit all formula roots in sequence}){$r\in$ \FuncSty{roots}($\mathbf{z}$)}{
\FuncSty{upwardPass}$(r)$\tcp*[f]{leaves-to-root traversal}

\FuncSty{downwardPass}$(r)$\tcp*[f]{root-to-leaves traversal}
}
}
}
\end{algorithm*}

Inference propagates truth value bounds from neuron to neuron along in alternating upwards and downwards passes over the syntax trees of the represented formulae.
The upward pass, shown in Algorithm~\ref{algo-upwardpass}, uses truth value bounds available at atoms to compute bounds at each subformula according to the normal evaluation of connectives based on their operands.
The downward pass, shown in Algorithm~\ref{algo-downwardpass}, uses truth value bounds known for formulae and previously computed at other subformulae to tighten bounds at each subformula (and ultimately each atom) according to the inference rules given above.
This process repeats, as shown in Algorithm~\ref{algo-inference}, until convergence, as proved in section \ref{sec:recurrent_algorithm} of the supplementary material:

\begin{theorem}
    Given monotonic \(\neg\), \(\oplus\), and \(f\), Algorithm~\ref{algo-inference} converges to within \(\epsilon\) in finite time.
\end{theorem}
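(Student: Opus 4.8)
The plan is to identify a bounded, monotone progress measure on the neurons' bounds whose total variation is consumed at a definite rate by each iteration of the outer \texttt{while} loop of Algorithm~\ref{algo-inference}. Two structural observations drive everything. First, every modification of a bound is made through \texttt{aggregate}, which assigns $(L_z, U_z) := (\max\{L_z, L_z'\},\ \min\{U_z, U_z'\})$; consequently, over the entire run, each $L_z$ is nondecreasing and each $U_z$ is nonincreasing regardless of what the proposed values $L_z', U_z'$ are, so the iteration can only ever tighten bounds and cannot oscillate. Second, all bounds stay in $[0,1]$: the up- and downward computations for $\oplus$ (and hence, via the De Morgan and residuum identities of Section~\ref{sec:inference_rules}, for every connective) apply $f$, whose codomain is $[0,1]$, while $\neg$ maps $[0,1]$ into itself via $x \mapsto 1-x$; the guarded ``else $0$''/``else $1$'' branches are in range and are in fact no-ops under \texttt{aggregate}. (Bounds may cross, $L_z > U_z$, indicating contradiction; this is harmless here.) In the first-order case the same holds provided the grounding tables — hence the number $N$ of neurons — are finite.

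Given these, set $\Phi := \sum_z (U_z - L_z)$, the sum taken over the finitely many neurons $z$. Each term lies in $[-1,1]$, so $\Phi \in [-N, N]$; and since every $U_z$ only decreases and every $L_z$ only increases, $\Phi$ is nonincreasing, so its total decrease over the whole run is at most $2N$. Now consider any iteration of the \texttt{while} loop that does not terminate the algorithm. Within that single iteration the bounds again move only monotonically, so if $\delta L_z, \delta U_z$ denote the net changes during the iteration then $|\delta L_z| = \delta L_z$ and $|\delta U_z| = -\delta U_z$, whence the decrease of $\Phi$ during the iteration equals $\sum_z(|\delta L_z| + |\delta U_z|)$, which by the loop guard exceeds $\epsilon$. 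Hence fewer than $2N/\epsilon$ non-terminating iterations can occur before the guard fails, i.e.\ before one full pass changes the bounds by less than $\epsilon$ in aggregate. Each iteration itself runs in finite time: \texttt{roots}$(\mathbf{z})$ is finite, and \texttt{upwardPass} and \texttt{downwardPass} are single recursive traversals of finite syntax trees performing bounded work per node. Therefore Algorithm~\ref{algo-inference} halts after finitely many operations.

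The routine but load-bearing part is the second observation, and it is also where the hypotheses actually matter: one must check that the explicit formulas in Algorithms~\ref{algo-upwardpass}--\ref{algo-downwardpass} are well-defined, land in $[0,1]$, and — for the fixed point to be meaningful rather than merely reachable — are \emph{sound}, meaning each tightened bound genuinely contains the true truth value. Soundness of the downward rules is exactly what monotonicity of $\neg$, $\oplus$, and $f$ buys: each downward formula is the inversion of a monotone upward computation in one operand with the remaining operands pinned at their most-permissive current bound, and monotonicity is what licenses that inversion and guarantees the result is a valid, order-consistent bound. For the termination statement of the theorem in isolation, however, the single decisive fact is that \texttt{aggregate} can only tighten, which turns $\Phi$ into a bounded monotone potential; the counting argument above then closes the proof independently of the chosen activation family, the presence of contradictions, or the order in which roots are visited.
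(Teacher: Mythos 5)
Your proof is correct, and it rests on the same two structural facts as the paper's proof---every update passes through \texttt{aggregate}, so lower bounds only rise and upper bounds only fall, and every proposed bound lies in $[0,1]$---but you finish the argument by a different route. The paper argues per bound: each bound is a bounded monotone sequence, hence Cauchy (a fixed step size $\delta>0$ recurring infinitely often would push the bound past $1$ or $0$), so after finitely many iterations every one of the $n$ bounds is within $\epsilon/n$ of its limit and the summed deviations drop below $\epsilon$. You instead collapse everything into the potential $\Phi=\sum_z(U_z-L_z)$, observe that the loop guard forces every non-final iteration to decrease $\Phi$ by more than $\epsilon$, and conclude from $\Phi\in[-N,N]$ that fewer than $2N/\epsilon$ such iterations can occur; this is more elementary (no limit argument) and more quantitative, giving an explicit iteration bound tied directly to the termination test. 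You also correctly isolate that the stated hypothesis of monotone $\neg$, $\oplus$, and $f$ is not what drives termination---only the tightening aggregation and the boundedness of proposed values matter, with the activations' monotonicity relevant to soundness of the inferred bounds rather than to convergence---which sharpens a point the paper's proof leaves implicit. Finally, your caveat that $N$ (the number of neurons, i.e.\ grounding entries) must be finite matches the paper's own remark that the argument covers the propositional case but does not extend to FOL, where new groundings and hence new bounds can keep appearing.
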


\section{LNN bounds as probability bounds}
\label{sec:probability_bounds}

This section presents a variant of LNN where lower and upper bound truth values at each subformula serve as bounds on the probability that the subformula is \texttt{True} in classical logic.
This is achieved by using different activation functions for lower and upper bound computations.
Again observing that implication and conjunction may be defined in terms of negation and disjunction, these are
\begin{align*}
    L_{\bigoplus_i x_i} & \textstyle \geq \max_{i \in I} L_{x_i} & L_{x_i} & \textstyle \geq \max\{0, L_{\bigoplus_i x_i} - \sum_{j \neq i} U_{x_j}\}, \\
    U_{\bigoplus_i x_i} & \textstyle \leq \min\{1, \sum_{i \in I} U_{x_i}\} & U_{x_i} & \leq U_{\bigoplus_i x_i}
\end{align*}
with negation unchanged from above.
Let $A$ be the set of atomic formulae and let $g : A \rightarrow \{\T, \F\}$ be an interpretation.
Further define $g(\sigma)$ for any formula $\sigma$ on $A$ to be the truth value of $\sigma$ under the truth-value assignments by $g$ to the atomic formulae.
Let $\Lambda$ be the set of all interpretations.

A sentence is an expression of the form $(\sigma, l, u)$ where $\sigma$ is a formula and $l, u \in [0,1]$.
A theory is a set of sentences $\Gamma = \{(\sigma_1, l_1, u_1), \cdots, (\sigma_k, l_k, u_k)\}$.
Define $S_\sigma \triangleq \{g \mid g(\sigma) = \T\}$ for any formula $\sigma$.
A model is a probability function $p(\cdot)$ over $\Lambda$.
We say that $p(\cdot)$ is a model of $\Gamma$ and write $p(\cdot) \models \Gamma$ if and only if $l_i \leq p(S_{\sigma_i}) \leq u_i$ for $i = 1, \cdots, k$.
Let $P_\Gamma$ denote the set of all models of $\Gamma$.

Initial knowledge is specified by a set of formulas $V_0$ and two functions $L_0 : V_0 \rightarrow [0, 1]$ and $U_0 : V_0 \rightarrow [0, 1]$.
We may then state the following theorem, proved in section~\ref{sec:th2_proof} of the supplementary material:

\begin{theorem}\label{th:1}
  Let $L_\sigma$ and $U_\sigma$ denote the lower and upper bounds computed by LNN for formula $\sigma$.
  Define $\Gamma_0 = \{(v, L_0(v), U_0(v))\mid v \in V_0\}$.
  If $P_{\Gamma_0} \neq \emptyset$, the following inequalities hold:
\begin{align*}
  L_{\sigma} & \textstyle \leq \inf_{p \in P_{\Gamma_0}} p(S_{\sigma}) & %
  U_{\sigma} & \textstyle \geq \sup_{p \in P_{\Gamma_0}} p(S_{\sigma}) %
\end{align*}
\end{theorem}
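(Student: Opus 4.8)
The plan is to prove soundness by induction over the sequence of bound updates (the \texttt{aggregate} calls) executed during the entire run of \texttt{inference} (Algorithm~\ref{algo-inference}), maintaining the invariant that, at every point in the computation and for every neuron corresponding to a subformula~$\tau$, the current bounds satisfy $L_\tau \leq \inf_{p \in P_{\Gamma_0}} p(S_\tau)$ and $U_\tau \geq \sup_{p \in P_{\Gamma_0}} p(S_\tau)$. Since the reported $L_\sigma$ and $U_\sigma$ are particular values of these bounds (at termination, or indeed at any intermediate stage), the theorem is immediate once the invariant is established. Note that because $P_{\Gamma_0} \neq \emptyset$, both $\inf$ and $\sup$ are taken over a nonempty subset of $[0,1]$ and are therefore well-defined reals with $\inf_p p(S_\tau) \leq \sup_p p(S_\tau)$; combined with the invariant this also gives $L_\tau \leq U_\tau$ throughout, so no contradictory state can arise.

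\emph{Base case.} Before any update all bounds are the trivial $[0,1]$, which satisfy the invariant vacuously; the initialization step then sets $L_v \geq L_0(v)$, $U_v \leq U_0(v)$ for $v \in V_0$, and by definition of $\Gamma_0$ and $P_{\Gamma_0}$ every $p \in P_{\Gamma_0}$ has $L_0(v) \leq p(S_v) \leq U_0(v)$, hence $L_0(v) \leq \inf_p p(S_v)$ and $U_0(v) \geq \sup_p p(S_v)$. \emph{Inductive step.} Since \texttt{aggregate} only replaces a bound by $\max\{L_\tau, L_\tau'\}$ or $\min\{U_\tau, U_\tau'\}$, it preserves the invariant provided each freshly computed $L_\tau'$, $U_\tau'$ is itself sound; so it suffices to check that each inference rule of Section~\ref{sec:probability_bounds} maps sound input bounds to a sound output bound. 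Because implication and conjunction are handled through the identities $x \rightarrow y \equiv \neg x \oplus y$ and $\bigotimes_i x_i \equiv \neg \bigoplus_i \neg x_i$, it is enough to verify the rules for $\neg$ and for the $n$-ary $\bigoplus$; the rules for the other connectives are compositions of these and inherit soundness.

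Each such check is a pointwise-in-$p$ probability estimate, using $S_{\neg x} = \Lambda \setminus S_x$ and $S_{\bigoplus_i x_i} = \bigcup_i S_{x_i}$, followed by passing to $\inf_p$ or $\sup_p$. Concretely: for $\neg$, $p(S_{\neg x}) = 1 - p(S_x)$ yields $1 - U_x \leq 1 - \sup_p p(S_x) = \inf_p p(S_{\neg x})$ and symmetrically for $U_{\neg x}$; the upward disjunction rules follow from $\max_i p(S_{x_i}) \leq p(\bigcup_i S_{x_i}) \leq \min\{1, \sum_i p(S_{x_i})\}$; the downward upper bound $U_{x_i} \leq U_{\bigoplus_i x_i}$ follows from $S_{x_i} \subseteq \bigcup_j S_{x_j}$; and the downward lower bound follows from finite subadditivity, $p(\bigcup_j S_{x_j}) \leq p(S_{x_i}) + \sum_{j \neq i} p(S_{x_j})$, rearranged to $p(S_{x_i}) \geq p(\bigcup_j S_{x_j}) - \sum_{j \neq i} p(S_{x_j})$ and combined with $p(S_{x_i}) \geq 0$. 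In every case one substitutes the inductive bounds on the operands ($L_{x_i} \leq p(S_{x_i}) \leq U_{x_i}$, $L_{\bigoplus_i x_i} \leq p(\bigcup_i S_{x_i}) \leq U_{\bigoplus_i x_i}$, etc.) to obtain an inequality valid for all $p \in P_{\Gamma_0}$, then takes the infimum or supremum.

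I expect the principal difficulty to be organizational rather than deep: phrasing the induction so that the invariant is genuinely re-established after \emph{every} update during the recurrent, multi-pass traversal of Algorithm~\ref{algo-inference} (including repeated visits to the same neuron, which is fine since monotone tightening cannot violate an already-sound bound), and carefully justifying that it suffices to treat only $\neg$ and $\bigoplus$ so that no connective-specific downward rule is left unverified. The one place a genuinely probabilistic inequality beyond monotonicity and the union bound is invoked is the finite-subadditivity (Bonferroni-type) step behind the downward lower bound, so that estimate should be written out explicitly.
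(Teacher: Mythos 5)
Your proposal is correct and is essentially the paper's own argument: the paper packages your inductive step as a lemma stating that each single bound update preserves the model set $P_\Gamma$ (the nontrivial direction of which is exactly your per-update soundness check, carried out for the expanded per-connective rules of NOT, AND-$k$, OR-$k$, and IMPLIES), and then concludes just as you do that every $p \in P_{\Gamma_0}$ satisfies the final bounds. Your only deviation is organizational — verifying soundness only for $\neg$ and $\bigoplus$ and inheriting conjunction and implication through the defining identities, which is consistent with how the Section~\ref{sec:probability_bounds} variant is specified — and the underlying estimates (complementation, monotonicity, union bound, finite subadditivity) coincide with the paper's case analysis.
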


\section{Learning}
\label{sec:learning}

A core strength of the LNN model is its differentiability, permitting the optimization via back-propagation of parameters including operand importance weights, formula truth value bounds, and/or the truth value bounds of atoms.
Loss functions for LNN may exploit its logical interpretability, in particular by penalizing contradiction, which can then be used to enforce even complicated logical requirements.
An important consideration, however, is whether it is desired to preserve neurons' fidelity to their corresponding logical connectives, especially when presented with classical inputs. %

Weighted nonlinear logic behaves classically for classical inputs
when optimized as
\begin{align}
    \textstyle \min_{B, W}\quad & \textstyle E(B, W) + \mathrlap{\sum_{k \in N} \max\{0, L_{B,W,k} - U_{B,W,k}\}} \notag \\
    \text{s.t.}\quad & \forall k \in N,\; i \in I_k, & \alpha \cdot w_{ik} - \beta_k + 1 & \geq \alpha, & w_{ik} & \geq 0 \label{eqn:or-1} \\
    & \forall k \in N,                      & \textstyle \sum_{i \in I_k} (1 - \alpha) \cdot w_{ik} - \beta_k + 1 & \leq 1 - \alpha, & \beta_k & \geq 0 \label{eqn:or-0}
\end{align}
for loss function~\(E\), bias vector~\(B\), weight matrix~\(W\), (disjunction) neuron index set~\(N\), and inferred lower and upper bounds~\(L_{B,W,k}\) and \(U_{B,W,k}\) at each neuron.
Intuitively, \eqref{eqn:or-1} requires disjunctions to return \texttt{True} if \emph{any} of their inputs are true, even if their other inputs are 0, i.e.~maximally false, while \eqref{eqn:or-0} requires them to return \texttt{False} if \emph{all} of their inputs are false.
Loss function~\(E\) often embodies typical NN learning objectives such as mean-square error; in addition, \emph{contradiction loss}~\(\sum_{k \in N} \max\{0, L_{B,W,k} - U_{B,W,k}\}\) penalizes the sum total contradiction observed in the system.

Given the above linear constraints, methods such as Frank--Wolfe \cite{frank1956algorithm,jaggi2013revisiting} may be used to optimize \(B\) and \(W\).
It is easy to see, however, that weights~\(w_{ik}\) cannot be made equal to 0, nor can constraints be relaxed to permit nonclassical behavior.
This may be corrected via the introduction of slack variables, though the following presents a means of sidestepping this issue while also improving gradients.

\subsection{Tailored activation functions}

For disjunction with \(\beta = 1\), the \emph{tailored activation function}~\(f_{\mathbf{w}}\), shown in Figure~\ref{fig:tailored}, is a linear interpolation between four critical points --- \((0, 0)\), \((x_{\F}, 1 - \alpha)\), \((x_{\T}, \alpha)\), and \((x_{\max}, 1)\), establishing regions of unambiguous \texttt{True}, intermediate, and \texttt{False} truth values, respectively --- given

\begin{equation}
\label{tailored:definition}
    f_{\mathbf{w}}(x) = \left\{\begin{array}{l@{\qquad}l}
        x \cdot (1 - \alpha) / x_{\F} & \text{if } 0 \leq x \leq x_{\F}, \\
        (x - x_{\F}) \cdot (2 \alpha - 1) / (x_{\T} - x_{\F}) + 1 - \alpha & \text{if } x_{\F} < x < x_{\T}, \\
        (x - x_{\T}) \cdot (1 - \alpha) / (x_{\max} - x_{\T}) + \alpha & \text{if } x_{\T} \leq x \leq x_{\max},
    \end{array}\right.
\end{equation}

\begin{align*}
    x_{\F} & \textstyle = \sum_{i \in I} w_i \cdot (1 - \alpha), & x_{\T} & \textstyle = w_{\max} \cdot \alpha, & x_{\max} & \textstyle = \sum_{i \in I} w_i.
\end{align*}

By construction, this guarantees classical inputs produce classical results without the need for constraints.
In addition, because \(x_{\T}\) is defined in terms of \(w_{\max}\),
weights may drop to 0 without significantly impacting \(f_{\mathbf{w}}\).
By the nature of monotonic linear interpolation, gradients are large and reliable everywhere.
Lastly, the tailored activation function establishes \(\alpha\) as a means of controlling the system's classicality, with smaller values being more classical.

\section{Empirical evaluation}
\textbf{Smokers and friends.}
LTN experiment $\mathcal{K}_{\text{exp2}}$ \cite{serafini2016logic} has plausible universally quantified axioms for a small universe (a-h) with initial facts for smokes $S(x)$, cancer $C(x)$, and friends $F(x,y)$ (open-world). We repeat the experiment including axioms induced by MLN~\cite{richardson2006markov} (total 8 axioms) on this data, and measure total network contradiction and LNN truth bounds for axioms in Table~\ref{tab:smokers1}. 
MLN axiom weight 6.88 for $\exists y F(x,y)$ states that a world with $n$ friendless people is $e^{6.88n}$ times less probable than a world where all have friends, other things being equal~\cite{richardson2006markov}, but LNN sets full truth as it does not conflict. MLN assigns high log-probability to the last axiom even though it produces contradictions, while LNN sufficiently relaxes the axiom to remove conflicts.
LTN assigns high satisfiability of 96 to $\neg S(x)\kern-0.2em\lor\kern-0.2em\neg F(x,y)\kern-0.2em\lor\kern-0.2em S(y)$, despite evidence against the axiom whereas LNN correctly adjusts bounds to remove contradiction. LNN also infers all logical consequences, in particular friendship symmetry that LTN is unable to produce.
Learning of neuron weights can possibly reduce bounds relaxation in $P_1^{8}$ when comparing to $P_2^{8}$ for the last two high-conflict axioms.
\begin{table}[tbp]
    \centering
    \begin{small}
    \caption{Learnt LNN neuron weights (from $P_1^{8}$) and axiom lower bounds (as \%) for LTN experiment $\mathcal{K}_{\text{exp2}}$ (universe: a-h) \cite{serafini2016logic}, compares LTN degree of satisfiability (as \% for 5 axioms) to LNN $P_1^{5}, P_2^{5}$ and repeats in $P_1^{8}, P_2^{8}$ for 8 axioms including 3 induced by MLN~\cite{richardson2006markov}, with corresponding MLN log-probability weights, followed by axiom-wise contradiction counts. Loss function $(1+\mathtt{contradiction})/(1+\mathtt{factalign}+\mathtt{tightbounds})$ (normalized) component values after training show complete removal of contradictions by relaxing facts and inferences.
    Gradient descent ($\alpha=1, \beta=1, w_{\max}=1$ with weight normalization and gradient-transparent clamping) adjusts operand weights ($P_1$), initial axiom/fact bounds ($P_1$, $P_2$). Every training epoch performs inference initialized with updated bounds until convergence after the parameter update.
    }
    \begin{tabular}{@{ }l@{}c@{ }@{ }c@{ }c@{ }@{ }c@{ }c@{ }@{ }c@{ }c@{ }}
        \toprule
        \multicolumn{2}{@{}r@{}}{}&\multicolumn{4}{@{ }c@{ }}{\bf{LNN} $[L, U]$} \textit{100 epochs, lr: 0.1$\rightarrow$0}\\
        \cmidrule{3-4}\cmidrule{5-6}
        \textbf{Smokers and friends} [$\mathcal{K}_{\text{exp2}}$ (a-h)] & \textbf{LTN} & ${P_1^{5}}$ & ${P_2^5}$ & ${P_1^8}$ & ${P_2^8}$ & \textbf{MLN} & $L\kern-0.3em>\kern-0.3emU$\\
        \midrule
        $\exists y F(x,y)$ & 100 & 100 & 100 & 100 & 100 & 6.88 & 0\\
        $\neg F(x,x)$ & 98 & [83, 98] & [83, 98] & [56, 98] & [80, 98] & 0.26 & 0\\
        $\neg F(x,y)^{0.96}\kern-0.2em\lor\kern-0.2em F(y,x)^1$ & 90 & [96, 97] & [97, 100] & [51, 95] & [82, 97] & - & 0\\
        $\neg S(x)^{0.98}\kern-0.2em\lor\kern-0.2em\neg F(x,y)^1\kern-0.2em\lor\kern-0.2em S(y)^{0.97}$ & 96 & [65, 100] & [65, 100] & [65, 100] & [66, 100] & 3.53 & 2\\
        $\neg S(x)^1\kern-0.2em\lor\kern-0.2em C(x)^{0.98}$ & 77 & [57, 100] & [58, 100] & [50, 100] & [60, 100] & -1.35 & 2\\\cmidrule{1-4}
        $\neg F(x,y)^{0.97}\kern-0.2em\lor\kern-0.2em \neg S(y)^1\kern-0.2em\lor\kern-0.2em F(y,x)^{0.96}\kern-0.2em\lor\kern-0.2em S(x)^1$ &  &  &  & 100 & 100 & 6.87 & 0\\
        $\neg F(w,x)^1\kern-0.2em\lor\kern-0.2em\neg F(w,y)^1\kern-0.2em\lor\kern-0.2em \neg F(z,x)^1\kern-0.2em\lor\kern-0.2em C(z)^{0.97}$ &  &  &  & [73, 100] & [70, 100] & 4.33 & 51\\
        $\neg F(w,x)^1\kern-0.2em\lor\kern-0.2em\neg F(y,w)^1\kern-0.2em\lor\kern-0.2em \neg F(z,y)^1\kern-0.2em\lor\kern-0.2em \neg S(y)^{0.99}$ &  &  &  & [80, 100] & [77, 100] & 9.68 & 66\\
        \midrule
        Contradiction (remaining) & & 0 & 0 & 0 & 0 & & \textit{121}\\
        Factual $E_i[|L_i'-L_i|+|U_i'-U_i|]$ (start: 0.64)& & 0.42 & 0.43 & 0.27 & 0.37 & & \\
        Bound tightness $E_i[\exp(L_i-U_i)]$ (start: 1) & & 0.88 & 0.89 & 0.9 & 0.97 & & \\
        \bottomrule
    \end{tabular}
    \label{tab:smokers1}
    \end{small}
\end{table}

\textbf{LUBM benchmark.}
To verify the soundness and completeness of the reasoning performed by LNN, we used Lehigh University Benchmark (LUBM)~\cite{lubm}, a synthetic OWL reasoning dataset in university domain with 14 benchmark queries. We generated data for $1$ university (102707 triples), parsed the OWL axioms into equivalent LNN constructs resulting in a graph with 257 nodes. After three passes of bidirectional inferences for the network to converge, $14$ queries were added to the network to compute respective answers. We compared LNN results with few symbolic reasoners namely Stardog~\cite{stardog}, Virtuoso~\cite{virtuoso} and Blazegraph~\cite{blazegraph}. Although, all the systems are sound, achieving $100\%$ precision, only Stardog and LNN answers are complete with $100\%$ recall, compared to average recall of $72\%$(Virtuoso) and $78\%$(BlazeGraph).
In another experiment, we evaluated LNN's ability to handle ontological inconsistencies, specifically those arising from incorrect axioms by inserting them into network. During inference, those inconsistencies emerge as bound value contradictions, thus allowing LNN to accurately locate and down-weight corresponding nodes from taking part in further inference (section~\ref{sup:lubm_benchmark} of the supplementary material expands on noise handling).

\textbf{TPTP benchmark.}
We also used a subset of the TPTP benchmark~\cite{sutcliffe2009tptp,trail} to evaluate LNN in generic classical theorem proving. The TPTP (Thousands of Problems for Theorem Provers) is a comprehensive library for Automated Theorem Proving (ATP) systems, with over 22K problems. We evaluated on a subset of the Common Sense Reasoning domain (937 problems) represented in first-order form (FOF), also filtering out functions and equality --- currently not supported by LNN. From the remaining subset of 25 problems, LNN was able to prove all problems within seconds. Despite this small set of TPTP problems, it is noted that none of the recent neural theorem provers (NTP) \cite{evans2018learning,rocktaschel2017end,dong2019neural} have demonstrated success on classical ATP. Moreover, NTP inference is limited to Horn clauses while LNN can support general first-order logical expressions. These promising results show LNN's potential for generic ATP integrated into an end-to-end differentiable learning system.

\section{Conclusions}
We have 1) introduced a new conceptual neuro-symbolic framework, including introducing a new class of weighted real-valued logics, and ways of providing truth bounds which we show can have
probabilistic semantics, 2) demonstrated approaches for learning in the framework including novel loss function minimizing logical contradiction, and a way to provably bypass the need to
perform constrained optimization, and 3) demonstrated approaches for inference/reasoning in the framework, including the upward-downward algorithm which is provably convergent in finite steps, via
preliminary experiments confirming the efficacy of the approach compared to others.  Planned future work includes approaches for rule induction and mixed symbolic/sub-symbolic sub-networks.

\section*{Broader Impact}

As a step in the direction of explainable AI, logical neural networks provide a flexible and well-performing framework for neuro-symbolic learning that can nonetheless be 1) interpreted on account of their 1-to-1 correspondence to systems of logical formulae, 2) audited by examining the chain of inferences computed for a given query, and 3) controlled by human users through the specification of logical constraints.
As a result, LNNs stand to improve the transparency and fairness of modeled tasks, and may serve as a better performing alternative to older methods (e.g.~decision trees and logistic regression) when explainability is a design requirement.

\bibliographystyle{abbrv}
\bibliography{neurips_2020}

\begin{thebibliography}{10}

\bibitem{bach2017hinge}
S.~H. Bach, M.~Broecheler, B.~Huang, and L.~Getoor.
\newblock Hinge-loss markov random fields and probabilistic soft logic.
\newblock {\em The Journal of Machine Learning Research}, 18(1):3846--3912,
  2017.

\bibitem{blazegraph}
{\em Blazegraph}.
\newblock Available at \url{https://blazegraph.com/}.

\bibitem{cohen2016tensorlog}
W.~W. Cohen.
\newblock Tensorlog: A differentiable deductive database, 2016.

\bibitem{trail}
M.~Crouse, I.~Abdelaziz, B.~Makni, S.~Whitehead, C.~Cornelio, P.~Kapanipathi,
  E.~Pell, K.~Srinivas, V.~Thost, M.~Witbrock, and A.~Fokoue.
\newblock A deep reinforcement learning based approach to learning transferable
  proof guidance strategies, 2019.

\bibitem{dong2019neural}
H.~Dong, J.~Mao, T.~Lin, C.~Wang, L.~Li, and D.~Zhou.
\newblock Neural logic machines.
\newblock {\em arXiv preprint arXiv:1904.11694}, 2019.

\bibitem{evans2018learning}
R.~Evans and E.~Grefenstette.
\newblock Learning explanatory rules from noisy data.
\newblock {\em Journal of Artificial Intelligence Research}, 61:1--64, 2018.

\bibitem{fagin2000formula}
R.~Fagin and E.~L. Wimmers.
\newblock A formula for incorporating weights into scoring rules.
\newblock {\em Theoretical Computer Science}, 239(2):309--338, 2000.

\bibitem{frank1956algorithm}
M.~Frank and P.~Wolfe.
\newblock An algorithm for quadratic programming.
\newblock {\em Naval research logistics quarterly}, 3(1-2):95--110, 1956.

\bibitem{garcez1999connectionist}
A.~S.~d. Garcez and G.~Zaverucha.
\newblock The connectionist inductive learning and logic programming system.
\newblock {\em Applied Intelligence}, 11(1):59--77, 1999.

\bibitem{lubm}
Y.~Guo, Z.~Pan, and J.~Heflin.
\newblock Lubm: A benchmark for owl knowledge base systems.
\newblock {\em Web Semantics: Science, Services and Agents on the World Wide
  Web}, 3(2-3):158--182, 2005.

\bibitem{hajek2013metamathematics}
P.~H{\'a}jek.
\newblock {\em Metamathematics of fuzzy logic}, volume~4.
\newblock Springer Science \& Business Media, 2013.

\bibitem{jaggi2013revisiting}
M.~Jaggi.
\newblock Revisiting frank-wolfe: Projection-free sparse convex optimization.
\newblock In {\em Proceedings of the 30th international conference on machine
  learning}, pages 427--435, 2013.

\bibitem{McCullochPitts43}
W.~S. McCulloch and W.~Pitts.
\newblock A logical calculus of the ideas immanent in nervous activity.
\newblock {\em Bulletin of Mathematical Biophysics}, 5:115--133, 1943.

\bibitem{pinkas1994translate}
G.~Pinkas.
\newblock Reasoning, connectionist nonmonotonicity and learning in networks
  that capture propositional knowledge.
\newblock {\em Artificial Intelligence}, 1994.

\bibitem{richardson2006markov}
M.~Richardson and P.~Domingos.
\newblock Markov logic networks.
\newblock {\em Machine learning}, 62(1-2):107--136, 2006.

\bibitem{richardson2007alchemy}
S.~Richardson, P.~Domingos, and M.~S.~H. Poon.
\newblock The alchemy system for statistical relational ai: User manual, 2007.

\bibitem{rocktaschel2017end}
T.~Rockt{\"a}schel and S.~Riedel.
\newblock End-to-end differentiable proving.
\newblock In {\em Advances in Neural Information Processing Systems}, pages
  3788--3800, 2017.

\bibitem{modernAI}
S.~Russell and P.~Norvig.
\newblock {\em Artificial Intelligence: A Modern Approach}, chapter~9, pages
  322--325.
\newblock Prentice Hall Press, USA, 3 edition, 2009.

\bibitem{serafini2016logic}
L.~Serafini and A.~d. Garcez.
\newblock Logic tensor networks: Deep learning and logical reasoning from data
  and knowledge.
\newblock {\em arXiv preprint arXiv:1606.04422}, 2016.

\bibitem{stardog}
{\em Stardog}.
\newblock Available at \url{https://www.stardog.com/}.

\bibitem{sutcliffe2009tptp}
G.~Sutcliffe.
\newblock The tptp problem library and associated infrastructure.
\newblock {\em Journal of Automated Reasoning}, 43(4):337, 2009.

\bibitem{towell1994knowledge}
G.~G. Towell and J.~W. Shavlik.
\newblock Knowledge-based artificial neural networks.
\newblock {\em Artificial intelligence}, 70(1-2):119--165, 1994.

\bibitem{virtuoso}
{\em Virtuoso}.
\newblock Available at \url{https://virtuoso.openlinksw.com/}.

\end{thebibliography}

\newpage

\appendix

\title{Supplementary material for Logical Neural Networks}

\makeatletter
  \vbox{%
    \hsize\textwidth
    \linewidth\hsize
    \vskip 0.1in
    \@toptitlebar
    \centering
    {\LARGE\bf \@title\par}
    \@bottomtitlebar
    \vskip 0.3in \@minus 0.1in
  }
\makeatother

\section{First-order logical neural networks}
\subsection{Overview}
\label{sec:fol_lnn}
This section introduces an extension to the representation of the LNN to handle formulae expressed in the first-order logic (FOL) language, and supplements footnote 1 (page 1) in the main paper. The FOL language enables representing a domain in terms of objects that exist in that domain and relations that hold between the objects in that domain. Its vocabulary includes \emph{constant} symbols, \emph{predicate} symbols, \emph{functional} symbols, where constants refer to objects and predicates and functions refer to relations. In addition to the logical connectives in propositional logic, it introduces two \emph{quantifiers}, the \emph{universal} and \emph{existential} quantifiers. Furthermore, some FOL representations introduce an \emph{equality} symbol, which is treated as either a predicate or a logical connective. This section discusses the treatment of First-order LNN without function and equality symbols and treatment of these will be provided in future work. Furthermore, since equality is not handled, we necessarily make the \textbf{unique-names assumption}, insisting that every constant symbol refers to a distinct object in the domain.

In First-order LNN, there exist a neuron for each logical connective as before, a connective neuron, and also for each predicate symbol, an atomic neuron. In addition, each neuron keeps a table whose columns are keyed by unique variables appearing in the represented (sub)formulae, and rows keyed by a set of $n$-element tuples of groundings, where the tuple size $n$ corresponds to the arity of the neuron. The content of the tables are the bounds of each grounding when substituted for the variables. Similar to Negation, quantifiers are modeled as pass-through nodes with no parameters, with special operations for each quantifier type. The bounds of a universal quantifier node are set to the bounds of the grounding with the lowest upper bound, so that if all of its groundings are \texttt{True} then the universal statement is also \texttt{True}. And the bounds of the existential quantifier are set to the bounds of the grounding with highest lower bound, so that if this lower bound is True the existential statement will be \texttt{True} when at least one of its groundings is \texttt{True}.

Computation of bounds throughout the network proceeds as before, with each grounding treated separately. This method of extending to FOL is similar to approaches that reduce inference in classical first-order logic to propositional logic. Each grounding in each formulae is treated as a proposition.

\subsection{Inference in first-order logical neural networks}

Section~\ref{sec:fol_lnn} introduced the representation of First-order logical neural networks, where constants, predicates and quantifiers were introduced. Instead of proposition neurons there are predicate neurons, or atomic neurons, and in addition to connective neurons there are also neurons for quantifiers. In inference for First-order LNN all neurons return tables of groundings and their corresponding bounds. Neural activation functions are then modified to perform joins over columns pertaining to shared variables while computing truth value bounds for the associated grounding. Inverse activation functions are modified similarly, but must also reduce results over any columns pertaining to variables absent from the target input's corresponding subformula so as to aggregate the tightest bounds. In the special case that tables are keyed by consecutive integers, these computations are equivalent to elementwise broadcast operations on sparse tensors, where each tensor dimension pertains to a different variable.

Inverse computation for quantifiers eliminates a given key column(s) corresponding to quantified variable(s) by reducing with min or max as appropriate. However, a proper treatment of inverse computation for the existential quantifier is more complicated as it would introduce Skolem functions. With functions currently not handled, 
inverse computation for the existential quantifier only broadcasts its known upper bounds to all key values associated with its column (i.e.~variable) and broadcasts its known lower bounds to a group of new key values identified by each combination of key values associated with any of its columns and vice versa for universal quantifiers.

\subsection{Grounding management}

Neurons in First-order LNN each have a defined set of variable(s) according to their arity, specifying the number of constants in a grounding tuple. The arity of predicate neurons is usually set beforehand (e.g., informed by a knowledge base ingested by the LNN), and can typically include a variety of nullary (propositional), unary, binary and higher-arity predicates. Connective neurons and quantifiers collect variables from their set of input (or operands) in order of appearance during initialization, where these operands can include atomic neurons, other connective neurons and quantifiers. Variables are collected only once from operands that define repeat occurrences of a specific variable in more than one variable position, unless otherwise specified. Logical formulae can also be defined with arbitrary variable placement across its constituent nodes. A variable mapping operation transforms groundings for enabling truth value lookup in neighboring nodes.

Usually, some formulae initially contain only variables (e.g., axioms asserted corresponding to general rules), leading to neurons with no groundings initially. These neurons must obtain their groundings from their ground operands at initialization or during inference. During inference, each ground neuron propagates its groundings to other neurons with shared variables participating in the same parent neuron, and all operands collectively propagate their groundings to the parent neuron. Similarly, a parent neuron propagates groundings acquired elsewhere to its operands. This grounding management process ensures that constants are propagated throughout the LNN graph in order to compute bounds for relevant queries. Note, however, that this naive grounding management process will propagate all constants, including those not relevant for the query, unnecessarily increasing computation. More efficient methods are discussed in Section~\ref{sec:inf_follnn} below.  

Quantifiers can also have variables and groundings if partial quantification is required for only a subset of variables from the underlying operand, although existential quantification is typically performed on a single variable to produce a propositional truth value associated with the quantifier output. For partial quantification the maximum lower bound of groundings from the quantified variable subset is chosen for existential quantification and assigned to a unique grounding consisting of the remainder of the variables, whereas the minimum upper bound is used for universal quantification. For existential partial quantification \texttt{True} groundings for the quantified variable subset form arguments stored under the grounding of the remaining variable subset, so that satisfying groundings can be recalled.

\subsection{Variable binding}

Variable binding assigns specific constant(s) to variables of neurons, typically as part of an inference task, such as answering a query. A variable could be bound in only a subset of occurrences within a logical formulae, although the procedure for producing groundings for inference would typically propagate the binding to all occurrences. It is thus necessary to retain the variable even if bound, in order to interact with other occurrences of the variable in the logical formula to perform join operations. %

\subsection{First-order logical inference}
\label{sec:inf_follnn}

Inference at a connective neuron involves upward and downward pass computations of the associated logical connective for a given set of groundings, whereas inference at a quantifier neuron involves a reduction operation and creation of new groundings in the case of partial quantification. A provided grounding may not be available in all participating operands of an inference operation, where a retrieval attempt would then add the previously unavailable grounding to the operand with \texttt{Unknown} truth value under the open-world assumption. If a proof is offered to a neuron for an unavailable grounding, the proof aggregation would also assume maximally loose starting bounds.

The computational and memory considerations for large knowledge bases with many constants should be taken under consideration, where action may be taken to avoid storing of groundings with unknown bounds. However, inference is a principal means by which groundings are propagated through a logical formula to enable theorem proving, although there are cases where storage can be avoided. In particular, negation can be viewed as a pass-through operation where inference is performed instead on the underlying operand or descendent that is not also a negation. Otherwise, if naively approached, negation may have to populate a grounding list of all \texttt{False} or missing groundings from the underlying operand and store these as \texttt{True} under the closed-world assumption.

An inference context involves input operands and an output operation, where input operands are used in the upward inference pass to calculate a proof for the output, or where all but one input operand and the output are used in the downward inference pass to calculate a proof for the remaining input. If any participant in the inference context has a grounding that is not \texttt{Unknown}, then in real-valued logic it is possible in an inference context to derive a truth value that is also not \texttt{Unknown}. Each participant in the proof generation can thus add its groundings to the set of inference groundings. However, for complex reasoning problems that require long chains of reasoning it may be useful to also add groundings with \texttt{Unknown} states and propagate them in hopes that they fetch proofs from other neurons in the graph. These proofs can then be passed to other neurons in the graph to facilitate theorem proving. This introduces a trade-off between efficient computation, through avoiding propagating many constants, and handling complex reasoning problems, by allowing more constants propagation.

A given inference grounding is used as is for other participant operands with the same variable configuration as the originating operand. In case of disjoint variable(s) not present in the inference grounding, the overlapping variables are first searched for a match with all the disjoint variable values used in conjunction to create an expanded set of inference groundings. If no overlapping variables are present or no match is found, then the overlapping variables could be assigned according to the inference grounding, with the disjoint variable(s) covering its set of all observed combinations.

The set of relevant groundings from a real-valued inference context could become a significant expanded set, especially in the presence of disjoint variables.
However, guided inference could be used to expand a minimal inference grounding set that only involves groundings relevant to a target proof, and manage the trade-off between computation and reasoning capacity. LNN can use a combination of goal-driven reasoning and data-driven reasoning\footnote{Forward-chaining and backward-chaining algorithms are special cases of data-driven and goal-driven reasoning, respectively, where logical formulae are represented using definite clauses.} to obtain a target proof. A backward-chaining-like algorithm is used here as a means of propagating groundings in search of known truth values that can then be used in a forward-chaining-like computation to infer the goal. If-then conditional rules typically require backward inference in the form of \emph{modus tollens} to propagate groundings to the antecedent and \emph{modus ponens} in the forward direction to help calculate the target proof at the consequent. This bidirectional chaining process continues until the target grounding at the consequent is not unknown or until inference does not produce proofs that are any tighter.

In summary, overall computation is characterized similar to the propositional case, with a few minor modifications:
\begin{enumerate}
\item Initialize neurons corresponding to predicates and formula roots with starting truth value bounds for given groundings.
  Usually, all formulae are initialized as \texttt{True} for all associated groundings. Ground atomic neurons with starting bounds represent input data.
\item \label{step:update2} For each formula, evaluate neurons in the forward direction in a pass from leaves to root, storing computed bounds at each node for all groundings, and propagating constants to other nodes in the same formula. Then, backtrack to each leaf using inverse computations to update subformula bounds based on stored bounds (and formula initial bounds), and propagate constants potentially fetched from other neurons in the graph.
\item \label{step:tighten2} Aggregate the tightest bounds computed at leaves for each proposition.
\item Return to step~\ref{step:update2} until bounds for all groundings converge.  Oscillation cannot occur because bounds tightening is monotonic.
\item \label{step:predict2} Inspect computed bounds at specific predicates or formulae, i.e.~those representing the predictions of the model.
\end{enumerate}

\subsection{Acceleration}

As bounds tightening is monotonic, the order of evaluation does not change the final result.
As a result, and in line with traditional theorem provers, computation may be subject to significant acceleration depending on the order that bounds are updated.

In order for such aggregate operations to be tractable, it is necessary to limit the number of key values that participate in computation, leaving other key value combinations in a sparse state, i.e.~with default bounds.
We achieve this by filtering predicates whenever possible to include only content pertaining to specific key values referenced in queries or involved in joins with other tables, prioritizing computation towards smaller such content.
Because many truth values remain uncomputed in this model, the results of quantifiers and other reductions may not be tight, but they are nonetheless sound.
In cases where predicates have known truth values for all key values (i.e.~because they make the closed world assumption), we use different bounds for their sparse value and for the sparse values of connectives involving them, such that a connective's sparse value is its result for its inputs sparse values.

Even minimizing the number of key values participating in computation, it is necessary to guide neural evaluation towards rules that are more likely to produce useful results.
A first opportunity to this effect is to shortcut computation if it fails to yield tighter bounds than were previously stored at a given neuron.
In addition, we exploit the neural graph structure to prioritize evaluation in rules with shorter paths to the query and to visited rules with recently updated bounds.

Tensorization offers another route to accelerate computation, by formulating LNN in terms of weighted adjacency matrices and keeping truth values in multi-dimensional arrays indexed by node and grounding identifiers. The resulting truth value tensor can be sparse so that only entries of existing groundings are stored, and a further batch dimension corresponding to different universes and truth value initializations is also possible.
A set of weighted adjacency matrices can represent the neural graph structure, with one adjacency matrix for each of the different operators, including conjunction, disjunction, and implication. The neuron weighting scheme can also extend to admit negative weights used such that corresponding inputs are logically negated.

\section{Examples of weighted real-valued logics}
\label{sup:weighted_real_value_logics}

Weighted nonlinear logic is in fact only one family of possible logics implemented in LNN.
Notably, the logic introduced in Section~\ref{sec:probability_bounds} for Theorem~\ref{th:1} is a generalization of weighted nonlinear logic in that its lower and upper bounds are computed by different functions.
Another important family of logics satisfying the requirements for LNN are the t-norm logics, which we define presently.

Triangular norms, or \emph{t-norms}, and their related t-conorms and residua are natural choices for LNN activation functions as they already behave correctly for classical inputs and have well known inference properties.
Logics defined in terms of such function are denoted \emph{t-norm logics}.
Common examples of these include
\begin{equation*}
    \begin{array}{llll}
        & \text{G\"odel} & \text{Product} & \text{\L ukasiewicz} \\ \cline{2-4}
        \text{T-norm} & \min\{x, y\} & x \cdot y & \max\{0, x + y - 1\} \rule{0pt}{10pt} \\
        \text{T-conorm} & \max\{x, y\} & x + y - x \cdot y & \min\{1, x + y\} \\
        \text{Residuum} & y \text{ if } x > y, \text{ else } 1 & \frac{y}{x} \text{ if } x > y, \text{ else } 1 & \min\{1, 1 - x + y\}
    \end{array}
\end{equation*}
Of these, only \L ukasiewicz logic offers the familiar \((x \rightarrow y) = (\neg x \oplus y)\) identity, while only G\"odel logic offers the \((x \otimes x) = (x \oplus x) = x\) identities.

\subsection{Weighted \L ukasiewicz logic}
Weighted \L ukasiewicz logic is exactly weighted nonlinear logic with \(f(x) = \max\{0,\min\{1,x\}\}\).
The binary and \(n\)-ary \emph{weighted \L ukasiewicz t-norms}, used for logical AND, are given
\begin{align}
    {}^{\beta}(x_1^{\otimes w_1} \otimes x_2^{\otimes w_2}) & = \max\{0, \min\{1, \beta - w_1 (1 - x_1) + w_2 (1 - x_2)\}\}, \\
    \textstyle {}^{\beta}(\bigotimes_{i \in I} x_i^{\otimes w_i}) & \textstyle = \max\{0, \min\{1, \beta - \sum_{i \in I} w_i (1 - x_i)\}\} \label{eq:lukasiewicz_and_forward}
\intertext{%
for input set~\(I\), nonnegative bias term~\(\beta\), nonnegative weights~\(w_i\), and inputs~\(x_i\) in the \([0,1]\) range.
By the De Morgan laws, the binary and \(n\)-ary \emph{weighted \L ukasiewicz t-conorms}, used for logical OR, are then
}
    {}^{\beta}(x_1^{\oplus w_1} \oplus x_2^{\oplus w_2}) & = \max\{0, \min\{1, 1 - \beta + w_1 x_1 + w_2 x_2\}\} \\
    \textstyle {}^{\beta}(\bigoplus_{i \in I} x_i^{\oplus w_i}) & \textstyle = \max\{0, \min\{1, 1 - \beta + \sum_{i \in I} w_i x_i\}\}.\label{eq:lukasiewicz_or_forward}
\end{align}
In either case, the unweighted \L ukasiewicz norms are obtained when all \(w_i = \beta = 1\); if any of these parameters are omitted, their presumed value is 1.
The exponent notation is chosen because, for integer weights~\(k\), this form of weighting is equivalent to repeating the associated term \(k\) times using the respective unweighted norm, e.g.~\(x^{\oplus 3} = (x \oplus x \oplus x)\).
Bias term~\(\beta\) is written as a leading exponent to permit inline ternary and higher arity-norms, for example \({}^{\beta}(x_1^{\oplus w_1} \oplus x_2^{\oplus w_2} \oplus x_3^{\oplus w_3})\), which require only a single bias term to be fully parameterized.

The \emph{weighted \L ukasiewicz residuum}, used for logical implication, solves
\begin{equation}
    \max \{z : y \geq {}^{\beta/w_y}(x^{\otimes w_x/w_y} \otimes z^{\otimes 1/w_y})\}
\end{equation}
and is given
\begin{align}
    {}^{\beta}(x^{\otimes w_x} \rightarrow y^{\oplus w_y}) & = \max\{0, \min\{1, 1 - \beta + w_x (1 - x) + w_y y\}\} \label{eq:lukasiewicz_implies_forward}\\
    & = {}^{\beta}((1 - x)^{\oplus w_x} \oplus y^{\oplus w_y}). \notag
\end{align}
As for weighted nonlinear logic, note the use of \(\otimes\) in the antecedent weight but \(\oplus\) in the consequent weight, meant to indicate the antecedent has AND-like weighting (scaling its distance from 1) while the consequent has OR-like weighting (scaling its distance from 0).
Similarly, this residuum is most disjunction-like when \(\beta = 1\), most \((x \rightarrow y)\)-like when \(\beta = w_y\), and most \((\neg y \rightarrow \neg x)\)-like when \(\beta = w_x\); that is to say, \(\beta = w_y\) yields exactly the residuum of \(x^{\otimes w_x/w_y} \otimes z^{\otimes 1/w_y}\) (with no specified bias term of its own), while \(\beta = w_x\) yields exactly the residuum of \((\neg y)^{\otimes w_y/w_x} \otimes z^{\otimes 1/w_x}\).

The \L ukasiewicz norms are commutative if one permutes weights~\(w_i\) along with inputs~\(x_i\), and are associative if bias term~\(\beta \leq \min \{1, w_i : i \in I\}\).
Further, they return classical results, i.e.~results in the set \(\{0, 1\}\), for classical inputs under the condition that \(1 \leq \beta \leq \min \{w_i : i \in I\}\).
This clearly requires \(\beta = 1\) to obtain both associative and classical behavior, though neither is a requirement for LNN.
Indeed, constraining \(\beta \leq w_i\) is problematic if we would like \(w_i\) to be able to goes to 0, effectively removing \(i\) from input set~\(I\), whereupon the constraint should no longer apply.
Section~\ref{sec:learning} presents a means of relaxing such constraints so as to facilitate learning.

\subsection{Weighted G\"odel logic}
G\"odel logic uses min and max for its t-norm and t-conorm, respectively.
It is difficult in general to define weighted versions of these functions---for example, forms like \(x^{\oplus 3} = (x \oplus x \oplus x)\) are not meaningful due to min and max's idempotence---though several well-studied options exist, notably Fagin's method \cite{fagin2000formula}.
This document, however, uses a technique borrowed from \cite{hajek2013metamathematics} to derive a weighted min from another t-norm, specifically the weighted \L ukasiewicz t-norm.
H\'ajek shows that, for \emph{any} continuous t-norm, one may define \emph{weak conjunction}
\begin{equation}
    (x \land y) = (x \otimes (x \rightarrow y)) = \min\{x, y\}.
\end{equation}
Using weighted \L ukasiewicz logic and a specifically crafted configuration of weights\footnote{%
    This configuration of weights is the only one in which \(x\) may be swapped with \(y\), \(w_x\) with \(w_y\), and \(\beta_x\) with \(\beta_y\) without affecting the result; accordingly, it is the most reasonable adaptation of the above formulae to define weak conjunction for the weighted \L ukasiewicz t-norm.
}
\begin{equation}
    ({}^{\beta_x}(x^{\otimes w_x}) \land {}^{\beta_y}(y^{\otimes w_y})) = {}^{\beta_x}(x^{\otimes w_x} \otimes (x^{\otimes w_x/\kappa} \rightarrow y^{\oplus w_y/\kappa})^{\otimes \kappa})
\end{equation}
for \(\kappa = \beta_x - \beta_y + w_y\), one obtains binary and n-ary \emph{weighted G\"odel t-norms}, defined
\begin{align}
    ({}^{\beta_1}(x_1^{\otimes w_1}) \land {}^{\beta_2}(x_2^{\otimes w_2})) & = \max\{0, \min\{1, \beta_1 - w_1 (1 - x_1), \beta_2 - w_2 (1 - x_2)\}\}, \\
    \textstyle (\bigwedge_{i \in I} {}^{\beta_i}(x_i^{\otimes w_i})) & \textstyle = \max\{0, \min\{1, \beta_i - w_i (1 - x_i) : i \in I\}\}
\intertext{%
which is just the min of the unary weighted \L ukasiewicz t-norm applied to each argument, and which, if all \(\beta_i = \beta\), happens to be very similar to ((Sung 1998)).
Again using the De Morgan laws, the related binary and n-ary \emph{weighted G\"odel t-conorms} are then
}
    ({}^{\beta_1}(x_1^{\oplus w_1}) \lor {}^{\beta_2}(x_2^{\oplus w_2})) & = \min\{1, \max\{0, 1 - \beta_1 + w_1 x_1, 1 - \beta_2 + w_2 x_2\}\}, \\
    \textstyle (\bigvee_{i \in I} {}^{\beta_i}(x_i^{\oplus w_i})) & \textstyle = \min\{1, \max\{0, 1 - \beta_i + w_i x_i : i \in I\}\}
\end{align}

The \emph{weighted G\"odel residuum} now solves
\begin{equation}
    \max \{z : y \geq {}^{\beta_{xy}/w_y}(x^{\otimes w_x/w_y}) \land {}^{\beta_y/w_y}(z^{\otimes 1/w_y})\}
\end{equation}
for \(\beta_{xy} = \max\{0, \beta_x + \beta_y - 1\}\) and is given
\begin{align}
    ({}^{\beta_x}(x^{\otimes w_x}) \Rightarrow {}^{\beta_y}(y^{\oplus w_y})) & = {}^{\beta_y}(y^{\oplus w_y}) \text{ if } {}^{\beta_x}(x^{\otimes w_x}) > {}^{\beta_y}(y^{\oplus w_y}), \text{ else } 1
\intertext{%
where operands are again unary weighted \L ukasiewicz norms, or specifically
}
    {}^{\beta_x}(x^{\otimes w_x}) & = \max\{0, \min\{1, \beta_x - w_x (1 - x)\}\}, \\
    {}^{\beta_y}(y^{\oplus w_y}) & = \max\{0, \min\{1, 1 - \beta_y + w_y y\}\}.
\end{align}

The weighted G\"odel norms are commutative if one permutes both weights~\(w_i\) and biases~\(\beta_i\) along with inputs~\(x_i\) and, similar to the weighted \L ukasiewicz norms, are associative if \(\beta_i \leq \min\{1,w_i\}\) for each \(i \in I\).
Likewise, they behave classically for classical input if \(1 \leq \beta_i \leq w_i\) for each \(i \in I\).

\subsection{Parameter semantics}
Weights~\(w_i\) need not sum to 1; accordingly, they are best interpreted as absolute importance as opposed to relative importance.
As mentioned above, for conjunctions, increased weight amplifies the respective input's distance from 1, while for disjunctions, increased weight amplifies the respective input's distance from 0.
Decreased weight has the opposite effect, to the point that inputs with zero weight have no effect on the result at all.

Bias term~\(\beta\) is best interpreted as continuously varying the ``difficulty'' of satisfying the operation.
In weighted \L ukasiewicz logic, this can so much as translate from one logical connective to another, e.g.~from logical AND to logical OR.
Constraints imposed on \(\beta\) and \(w_i\) can guarantee that the operation performed at each neuron matches the corresponding connective in the represented formula, e.g., when inputs are assumed to be within a given distance of 1 or 0, as further discussed in Section~\ref{sec:learning}.

\section{The Upward-Downward algorithm (continued)}
\label{sec:recurrent_algorithm}
This section supplements its counterpart in the main paper on page 5, and provides more complete algorithm descriptions. Inference tasks in LNN involve generating proofs of truth values for specified output nodes, given truth values at specified input nodes. Note that a node could both be an input and an output node so that initial proofs can be provided and updated through inference to obtain an aggregate proof as output for the same node. Proof generation operates over a system of formulae that is represented as a mapping of the corresponding syntax tree to a directed graph. An upward pass described in Algorithm \ref{algo-upwardpass} calculates a proof at a formula using values from its input terms and subformulae, which corresponds to normal forward computation for neurons. A downward pass detailed in Algorithm \ref{algo-downwardpass} proves truth values for each input operand based on the other operand truth values and the enclosing formula's truth value.

\begin{algorithm*}[htpb!]
\caption{Upward pass to infer formula truth value bounds from subformulae bounds.\label{algo-upwardpass*}}%
\small%
\SetKwComment{tcp}{\#{ }}{}%
\SetAlgoLined\SetArgSty{}%
\SetFuncArgSty{}
\SetKwProg{Fn}{function}{\string:}{}%
\SetKwFunction{FRecurs}{upwardPass}%
\Fn(\tcp*[f]{recursive upward pass inference function}){\FRecurs{formula $z$}}{
\If(\tcp*[f]{$z$ is an atom or $n$-ary predicate}){$z=P(v_0,\ldots,v_n)$}{
\If(\tcp*[f]{uninitialized truth value}){\FuncSty{uninitialized}$(z)$}{
$(L_{z}, U_{z})$:=$(0, 1)$\tcp*[f]{unknown under open-world assumption}
}

\Return{$(L_{z}, U_{z})$}\tcp*[f]{existing or default truth value bounds}
}
\BlankLine
\For(\tcp*[f]{every input operand $x_i$ of $z$}){subformula $x_{i\in I}$}{
$(L_{x_i}', U_{x_i}')$:=\FuncSty{upwardPass}$(x_i)$\tcp*[f]{obtain operand bounds, recurses to leaves}

$(L_{x_i}, U_{x_i})$:=\FuncSty{aggregate}$(x_i, (L_{x_i}', U_{x_i}'))$\tcp*[f]{aggregate new proof}
}
\BlankLine
\If(\tcp*[f]{$z$ performs negation}){$z=\neg x$}{
\Return{$(1-U_x,\;1-L_x)$}\tcp*[f]{single input operand $x$}
}
\ElseIf(\tcp*[f]{$z$ performs implication}){$z={}^{\beta}(x^{\otimes w_x} \rightarrow y^{\oplus w_y})$}{
\Return{$({}^{\beta}(U_x^{\otimes w_x} \rightarrow L_y^{\oplus w_y}),\;{}^{\beta}(L_x^{\otimes w_x} \rightarrow U_y^{\oplus w_y}))$}\tcp*[f]{two input operands $x,y$}
}
\ElseIf(\tcp*[f]{$z$ performs conjunction}){$z={}^{\beta}(\bigotimes_{i \in I} x_i^{\otimes w_i})$}{
\Return{$({}^{\beta}(\bigotimes_{i \in I} L_{x_i}^{\otimes w_i}),\;{}^{\beta}(\bigotimes_{i \in I} U_{x_i}^{\otimes w_i}))$}\tcp*[f]{multi-input conjunction}
}
\ElseIf(\tcp*[f]{$z$ performs disjunction}){$z={}^{\beta}(\bigoplus_{i \in I} x_i^{\oplus w_i})$}{
\Return{$({}^{\beta}(\bigoplus_{i \in I} L_{x_i}^{\oplus w_i}),\;{}^{\beta}(\bigoplus_{i \in I} U_{x_i}^{\oplus w_i}))$}\tcp*[f]{multi-input disjunction}
}
\ElseIf(\tcp*[f]{$z$ is universal quantifier over groundings $G$}){$z=\forall_{g\in G}x(g)$}{
\Return{$(\min_{g\in G} L_{x(g)},\;\min_{g\in G} U_{x(g)})$}\tcp*[f]{universal quantification over $G$}
}
\ElseIf(\tcp*[f]{$z$ is existential quantifier over groundings $G$}){$z=\exists_{g\in G}x(g)$}{
\Return{$(\max_{g\in G} L_{x(g)},\;\max_{g\in G} U_{x(g)})$}\tcp*[f]{existential quantification over $G$}
}
}
\BlankLine
\SetKwFunction{FRecurs}{aggregate}%
\Fn(\tcp*[f]{aggregate offered proof $(L_z', U_z')$ for $z$}){\FRecurs{formula $z$, $(L_z', U_z')$}}{
\Return{$(\max(L_z,L_z'),\; \min(U_z,U_z'))$}\tcp*[f]{monotonically tighten existing $(L_z, U_z)$ bounds}
}
\end{algorithm*}

\begin{algorithm*}[htpb!]
\caption{Downward pass to infer subformula truth value bounds.\label{algo-downwardpass*}}%
\small%
\SetKwComment{tcp}{{ }{ }{ }{ }{ }\#{ }}{}%
\SetAlgoLined\SetArgSty{}%
\SetFuncArgSty{}
\SetKwProg{Fn}{function}{\string:}{}%
\SetKwFunction{FRecurs}{downwardPass}%
\Fn(\tcp*[f]{recursive downward pass inference function}){\FRecurs{formula $z$}}{
\For(\tcp*[f]{every input operand $x_j$ of $z$}){subformula $x_{j\in I}$}{
\BlankLine
\If(\tcp*[f]{$z$ performs negation}){$z=\neg x$}{
$(L_x', U_x')$:=$(1-U_z,\;1-L_z)$\tcp*[f]{proof for single input operand $x$}
}
\ElseIf(\tcp*[f]{$z$ performs implication}){$z={}^{\beta}(x^{\otimes w_x} \rightarrow y^{\oplus w_y})$}{
$L_x'$:=$\left\{\begin{array}{l@{\qquad}l}
        {}^{\beta/w_x}(U_z^{\otimes 1/w_x} \rightarrow L_y^{\oplus w_y/w_x})  & \text{if } U_z < 1, \\
        0 & \text{otherwise.}
    \end{array}\right.$\tcp*[f]{left input $x$ lower bound}
    
$U_x'$:=$\left\{\begin{array}{l@{\qquad}l}
        {}^{\beta/w_x}(L_z^{\otimes 1/w_x} \rightarrow U_y^{\oplus w_y/w_x})  & \text{if } L_z > 0, \\
        1 & \text{otherwise}
    \end{array}\right.$\tcp*[f]{left input $x$ upper bound}
    
$L_y'$:=$\left\{\begin{array}{l@{\qquad}l}
        {}^{\beta/w_y}(L_x^{\otimes w_x/w_y} \otimes L_z^{\otimes 1/w_y}) & \text{if } L_z > 0, \\
        0 & \text{otherwise}
    \end{array}\right.$\tcp*[f]{right input $y$ lower bound}
    
$U_y'$:=$\left\{\begin{array}{l@{\qquad}l}
        {}^{\beta/w_y}(U_x^{\otimes w_x/w_y} \otimes U_z^{\otimes 1/w_y})  & \text{if } U_z < 1, \\
        1 & \text{otherwise}
    \end{array}\right.$\tcp*[f]{right input $y$ upper bound}
}
\ElseIf(\tcp*[f]{$z$ performs conjunction}){$z={}^{\beta}(\bigotimes_{i \in I} x_i^{\otimes w_i})$}{
$L_{x_j}'$:=$\left\{\begin{array}{l@{\qquad}l}
        {}^{\beta/w_j}((\bigotimes_{i \neq j} U_{x_i}^{\otimes w_i/w_j}) \rightarrow L_z^{\oplus 1/w_j}) & \text{if } L_z > 0, \\
        0 & \mathrm{otherwise,} \end{array}\right.$\tcp*[f]{lower bound proof}
        
$U_{x_j}'$:=$\left\{\begin{array}{l@{\qquad}l}
        {}^{\beta/w_j}((\bigotimes_{i \neq j} L_{x_i}^{\otimes w_i/w_j}) \rightarrow U_z^{\oplus 1/w_j}) & \text{if } U_z < 1, \\
        1 & \mathrm{otherwise,}\end{array}\right.$\tcp*[f]{upper bound proof}
}
\ElseIf(\tcp*[f]{$z$ performs disjunction}){$z={}^{\beta}(\bigoplus_{i \in I} x_i^{\oplus w_i})$}{
$L_{x_j}'$:=$\left\{\begin{array}{l@{\qquad}l}
        {}^{\beta/w_j}((\bigotimes_{i \neq j} (\neg U_{x_i})^{\otimes w_i/w_j}) \otimes L_z^{\otimes 1/w_j}) & \text{if } L_z > 0, \\
        0 & \mathrm{otherwise,}\end{array}\right.$\tcp*[f]{lower bound proof}

$U_{x_j}'$:=$\left\{\begin{array}{l@{\qquad}l}
        {}^{\beta/w_j}((\bigotimes_{i \neq j} (\neg L_{x_i})^{\otimes w_i/w_j}) \otimes U_z^{\otimes 1/w_j}) & \text{if } U_z < 1, \\
        1 & \mathrm{otherwise.} \end{array}\right.$\tcp*[f]{upper bound proof}
}
\ElseIf(\tcp*[f]{$z$ is universal quantifier over groundings $G$}){$z=\forall_{g\in G}x(g)$}{
\For(\tcp*[f]{every $n$-ary grounding tuple $g$}){$g\in G$}{
$(L_{x(g)}', U_{x(g)}'):=(L_z,\;1)$\tcp*[f]{proof of minimum lower bound}
}
}

\ElseIf(\tcp*[f]{$z$ is existential quantifier over groundings $G$}){$z=\exists_{g\in G}x(g)$}{
\For(\tcp*[f]{every $n$-ary grounding tuple $g$}){$g\in G$}{
$(L_{x(g)}', U_{x(g)}'):=(0,\;U_z)$\tcp*[f]{proof of maximum upper bound}
}
}
\BlankLine
$(L_{x_i}, U_{x_i})$:=\FuncSty{aggregate}$(x_i, (L_{x_i}', U_{x_i}'))$\tcp*[f]{aggregate new proof}
\BlankLine
\FuncSty{downwardPass}$(x_i)$\tcp*[f]{recurses to leaves, stop before cyclic loop}
}
}
\end{algorithm*}

\begin{algorithm*}[htpb!]
\caption{Recurrent inference procedure with recursive directional graph traversal.\label{algo-inference*}}%
\small%
\SetKwComment{tcp}{\#{ }}{}%
\SetAlgoLined\SetArgSty{}%
\SetFuncArgSty{}
\SetKwProg{Fn}{function}{\string:}{}%
\SetKwFunction{FRecurs}{inference}%
\Fn(\tcp*[f]{iterative function with input formulae}){\FRecurs{formulae $\mathbf{z}$}}{
\While(\tcp*[f]{new proofs being generated}){$\sum(|\delta L_z| + |\delta U_z|) > \epsilon$}{
\For(\tcp*[f]{root nodes of system of formulae}){$r\in$ \FuncSty{roots}($\mathbf{z}$)}{
$(L_{r}', U_{r}')$:=\FuncSty{upwardPass}$(r)$\tcp*[f]{leaves-to-root traversal}

$(L_{r}, U_{r})$:=\FuncSty{aggregate}$(r, (L_{r}', U_{r}'))$\tcp*[f]{root aggregates new proof}

\FuncSty{downwardPass}$(r)$\tcp*[f]{root-to-leaf traversal}
}
}
}
\end{algorithm*}

Inference propagates information through edges of the formula syntax tree until convergence where no new proofs are generated, as shown in Algorithm \ref{algo-inference}. Information flow in connected components can be optimized through sequential traversal over adjacent nodes such that inference has access to the newest proofs recently calculated for neighboring formulae. Preorder and postorder traversals avoid cycles within an inference epoch so that only a single upward or downward calculation is performed for a node within an epoch. Sequential traversal with upward passes starts from propositions, known truth values, or syntax tree leaves, while downward passes move information from outer formulae to inner terms.

LNN converts to a directed acyclic graph through cycle-avoidance, and over alternating upward and downward passes the graph is unrolled according to the traversal sequence to resemble a finite impulse recurrent network. However, given the monotonic tightening of proof aggregation, the network is more constrained than the dynamic behavior exhibited by recurrent neural networks. Downward traversal can mirror the upward pass, although transient directional edges are effectively introduced by downward inference so separate proofs are offered to each neuron input according to the functional inverse calculation with analytically transformed weights.

Given both upward and downward computations for each connective, overall computation is characterized as follows:
\begin{enumerate}
\item Initialize neurons corresponding to propositions and formula roots with starting truth value bounds.
  Usually, all formulae are initialized as true.
  Propositions with starting bounds represent input data.
\item \label{step:update} For each formula, evaluate neurons in the forward direction in a pass from leaves to root, storing computed bounds at each node.
  Then, backtrack to each leaf using inverse computations to update subformula bounds based on stored bounds (and formula initial bounds).
\item \label{step:tighten} Aggregate the tightest bounds computed at leaves for each proposition.
\item Return to step \ref{step:update} until bounds converge.  Oscillation cannot occur because bounds tightening is monotonic.
\item \label{step:predict} Inspect computed bounds at specific propositions or formulae, i.e.~those representing the predictions of the model.
\end{enumerate}

As suggested in step \ref{step:tighten}, one may simply use min and max to aggregate upper and lower bounds proved for each proposition, though smoothed versions of these may be preferred to spread gradient information over multiple proofs.
Alternately, when targeting classical logic, one can use conjunction and disjunction (themselves possibly smoothed) to aggregate proposition bounds.
When doing so, there is an opportunity to reuse propositions' weights from their respective proofs, so as to limit the effect of proofs in which the proposition only plays a minor role.

As suggested in step \ref{step:predict}, prediction results are obtained by inspecting the outputs of one or more neurons, similar to what would be done for a conventional neural network.
Different, however, is the fact that different neurons may serve as inputs and results for different queries, indeed with a result for one query possibly used as an input for another.
In addition, one may arbitrarily extend an existing LNN model with neurons representing new formulae to serve as a novel query.

\subsection{Proof of Theorem~1}

We shall now proof Theorem~1, or specifically that, given weighted nonlinear logic with monotonic \(\neg\), \(\oplus\), and \(f\), Algorithm~\ref{algo-inference} converges to within \(\epsilon\) in finite time for the propositional case.
\begin{proof}
All operations in weighted nonlinear logic are implemented in terms of \(\neg\), \(\oplus\), and \(f\) and are thus also monotonic functions of their inputs.
Truth value bounds aggregation is likewise monotonic, always taking the tightest available bounds, as per Algorithms~\ref{algo-upwardpass}, \ref{algo-downwardpass}, and \ref{algo-inference}.
Because lower bounds can only increase, but have a maximum value of 1, and upper bounds can only decrease, but have a minimum value of 0, each sequence of updates for a given bound must be constitute a Cauchy sequence; otherwise, there would have to exist some step size~\(\delta > 0\) by which some truth value bound updates an infinite number of times, but this would clearly push the bound past its limit of 1 or 0.
Accordingly, after some finite number of iterations of Algorithm~\ref{algo-inference}, all truth value bounds will be within \(\frac{\epsilon}{n}\) of the end point of their Cauchy sequence.
For \(n\) total lower and upper bounds, the sum of all such deviations will be at most \(\epsilon\).
\end{proof}
This proof does not apply to FOL because, unlike the propositional case, FOL can introduce lower and upper bounds at new predicate groundings throughout evaluation.
For example, a successor function can produce an infinite number of predicate groundings; while each of these constitutes its own Cauchy sequence and thus necessarily converges independently of the others, the entire system may never converge.
This result is expected given the well known undecideable nature of FOL.

\section{Proof of Theorem 2}
\label{sec:th2_proof}

To prove Theorem 2, let us first give an expanded description of the LNN variant from the first paragraph of Section 5.
It is mathematically equivalent to that paragraph, and the expanded bound-update equations for various logical connectives will facilitate the proof.

Consider a bipartite graph $G = \langle V_1\cup V_2,E \rangle$, where each node in $V_1$ represents a formula and each node in $V_2$ represents a connective.
Nodes in $V_2$ have the following types: NOT, IMPLIES, AND-2, OR-2, AND-3, OR-3, $\cdots$.
A NOT node always has a degree of 2.
An IMPLIES node always has a degree of 3.
An AND-$k$ or OR-$k$ node always has a degree of $k+1$.
For a node $v\in V_1\cup V_2$, let $d_v$ denote its degree and let $n_{v,1},\cdots,n_{v,d_v}$ denote its neighbors.

Define two functions $L:V_1 \rightarrow \left[ 0, 1 \right]$ and $U:V_1 \rightarrow \left[ 0, 1 \right]$.
Each node $v\in V_2$ is annotated with $2d_v$ functions:
\begin{align}
  \tilde{L}_{L,U,v,1} &= f_{v,1}\left( L\left(n_{v,2}\right),\cdots,L\left(n_{v,d_v}\right),U\left(n_{v,2}\right),\cdots,U\left(n_{v,d_v}\right) \right)\nonumber\\
  \tilde{U}_{L,U,v,1} &= g_{v,1}\left( L\left(n_{v,2}\right),\cdots,L\left(n_{v,d_v}\right),U\left(n_{v,2}\right),\cdots,U\left(n_{v,d_v}\right) \right)\nonumber\\
  \tilde{L}_{L,U,v,2} &= f_{v,2}\left( L\left(n_{v,1}\right),L\left(n_{v,3}\right),\cdots,L\left(n_{v,d_v}\right),U\left(n_{v,1}\right),U\left(n_{v,3}\right),\cdots,U\left(n_{v,d_v}\right) \right)\nonumber\\
  \tilde{U}_{L,U,v,2} &= g_{v,2}\left( L\left(n_{v,1}\right),L\left(n_{v,3}\right),\cdots,L\left(n_{v,d_v}\right),U\left(n_{v,1}\right),U\left(n_{v,3}\right),\cdots,U\left(n_{v,d_v}\right) \right)\nonumber\\
  \cdots & \nonumber\\
  \tilde{L}_{L,U,v,d_v} &= f_{v,d_v}\left( L\left(n_{v,1}\right),\cdots,L\left(n_{v,d_v-1}\right),U\left(n_{v,1}\right),\cdots,U\left(n_{v,d_v-1}\right) \right)\nonumber\\
  \tilde{U}_{L,U,v,d_v} &= g_{v,d_v}\left( L\left(n_{v,1}\right),\cdots,L\left(n_{v,d_v-1}\right),U\left(n_{v,1}\right),\cdots,U\left(n_{v,d_v-1}\right) \right)\nonumber
\end{align}

Without loss of generality, let's assume an indexing scheme for neighbors of $v\in V_2$: if $v$ has type AND-$k$, $n_{v,k+1}=n_{v,1}\land\cdots\land n_{v,d_k}$; if $v$ has type OR-$k$, $n_{v,k+1}=n_{v,1}\lor\cdots\lor n_{v,k}$; if $v$ has type IMPLIES, $n_{v,3}=n_{v,1} \rightarrow n_{v,2}$.

With different choices of $\tilde{L}$'s and $\tilde{U}$'s, LNN can implement different flavors of real-valued logic.
The variant in Section 5 uses the following choice of $\tilde{L}$'s and $\tilde{U}$'s, grouped by types of node $v\in V_2$.

\begin{itemize}
\item
  NOT

\begin{align}
  \tilde{L}_{L,U,v,1} &= 1-U\left(n_{v,2}\right) \label{eq:notl1}\\
  \tilde{U}_{L,U,v,1} &= 1-L\left(n_{v,2}\right) \label{eq:notu1}\\
  \tilde{L}_{L,U,v,2} &= 1-U\left(n_{v,1}\right) \label{eq:notl2}\\
  \tilde{U}_{L,U,v,2} &= 1-L\left(n_{v,1}\right) \label{eq:notu2}
\end{align}

\item
  AND-$k$

For $i=1,\cdots,k$:
\begin{align}
  \tilde{L}_{L,U,v,i} &= L\left(n_{v,k+1}\right) \label{eq:andl1}\\
  \tilde{U}_{L,U,v,i} &= \min\left(1,U\left(n_{v,k+1}\right)+\sum_{1\leq j\leq k,j\neq i}\left(1-L\left(n_{v,j}\right)\right)\right) \label{eq:andu1}
\end{align}

\begin{align}
  \tilde{L}_{L,U,v,k+1} &= \max\left(0,1-\sum_{j=1}^{k}\left(1-L\left(n_{v,j}\right)\right)\right) \label{eq:andl2}\\
  \tilde{U}_{L,U,v,k+1} &= \min_{j=1}^{k}U\left(n_{v,j}\right) \label{eq:andu2}
\end{align}

\item
  OR-$k$

For $i=1,\cdots,k$:
\begin{align}
  \tilde{L}_{L,U,v,i} &= \max\left(0,L\left(n_{v,k+1}\right)-\sum_{1\leq j\leq k,j\neq i}U\left(n_{v,j}\right)\right) \label{eq:orl1}\\
  \tilde{U}_{L,U,v,i} &= U\left(n_{v,k+1}\right) \label{eq:oru1}
\end{align}

\begin{align}
  \tilde{L}_{L,U,v,k+1} &= \max_{j=1}^{k}L\left(n_{v,j}\right) \label{eq:orl2}\\
  \tilde{U}_{L,U,v,k+1} &= \min\left(1,\sum_{j=1}^{k}U\left(n_{v,j}\right)\right) \label{eq:oru2}
\end{align}

\item
  IMPLIES

\begin{align}
  \tilde{L}_{L,U,v,1} &= 1-U\left(n_{v,3}\right) \label{eq:impl1}\\
  \tilde{U}_{L,U,v,1} &= \min\left(1,1+U\left(n_{v,2}\right)-L\left(n_{v,3}\right)\right) \label{eq:impu1}\\
  \tilde{L}_{L,U,v,2} &= \max\left(0,L\left(n_{v,1}\right)+L\left(n_{v,3}\right)-1\right) \label{eq:impl2}\\
  \tilde{U}_{L,U,v,2} &= U\left(n_{v,3}\right) \label{eq:impu2}\\
  \tilde{L}_{L,U,v,3} &= \max\left(1-U\left(n_{v,1}\right),L\left(n_{v,2}\right)\right) \label{eq:impl3}\\
  \tilde{U}_{L,U,v,3} &= \min\left(1,1-L\left(n_{v,1}\right)+U\left(n_{v,2}\right)\right) \label{eq:impu3}
\end{align}
  
\end{itemize}

With the same notations as in Section 5, initial knowledge of LNN inference is specified by a set of formulas $V_0\subseteq V_1$ and two functions $L_0:V_0 \rightarrow \left[ 0, 1 \right]$ and $U_0:V_0 \rightarrow \left[ 0, 1 \right]$.
The query formula $\sigma$ is either an existing node in $V_1$ or a formula that is composed of existing nodes in $V_1$.
Without loss of generality, let's assume $\sigma\in V_1$, because otherwise we can first expand $G$ by adding connectives until $\sigma$ is included in $V_1$

Consider node $v$ and one of its neighbors $n_{v,i}$, let $j_{v,i}$ denote $v$'s index among $n_{v,i}$'s neighbors.
In other words, $n_{n_{v,i},j_{v,i}} \equiv v$.

LNN inference, i.e., Algorithm 3 in the main paper, can be written as the following pseudo code, where the selection of $v$ and $i$ in the loop is determined by the upward and downward passes of Algorithms 1 and 2 in the main paper.

\begin{center}
\begin{tabular}{l}
  \texttt{\upshape $L\left(v\right) \gets L_0\left(v\right)$, $\forall v\in V_0$;}\\
  \texttt{\upshape $U\left(v\right) \gets U_0\left(v\right)$, $\forall v\in V_0$;}\\
  \texttt{\upshape $L\left(v\right) \gets 0$, $\forall v\in V_1\setminus V_0$;}\\
  \texttt{\upshape $U\left(v\right) \gets 1$, $\forall v\in V_1\setminus V_0$;}\\
  \texttt{\upshape do until convergence:}\\
  $\quad$ \texttt{select $v\in V_1$ and index $i\in\left\{1,\cdots,d_v\right\}$};\\
  $\quad$ \texttt{\upshape $L\left(v\right) \gets \max\left(L\left(v\right), \tilde{L}_{L,U,n_{v,i},j_{v,i}} \right)$, $\forall v\in V_1$;}\\
  $\quad$ \texttt{\upshape $U\left(v\right) \gets \min\left(U\left(v\right), \tilde{U}_{L,U,n_{v,i},j_{v,i}} \right)$, $\forall v\in V_1$;}\\
  \texttt{\upshape return $L\left(\sigma\right)$ and $U\left(\sigma\right)$;}\\
\end{tabular}
\end{center}

To prove Theorem 2, we will start with following lemma which serves as a steppingstone.

\newtheorem{lemma}{Lemma}
\begin{lemma}\label{lemma:1}
  For any functions $L:V_1 \rightarrow \left[ 0, 1 \right]$ and $U:V_1 \rightarrow \left[ 0, 1 \right]$, and for any $\phi\in V_1$ and any $i\in\left\{1,\cdots,d_\phi\right\}$, define $\Gamma=\left\{\left(v,L\left(v\right),U\left(v\right)\right)\mid v\in V_1\right\}$ and define $\Gamma_{\phi,i}^\prime =\left(\Gamma\setminus\left\{\left(\phi,L\left(\phi\right),U\left(\phi\right)\right)\right\}\right)\cup\left\{\left(\phi,\max\left(L\left(\phi\right),\tilde{L}_{L,U,n_{\phi,i},j_{\phi,i}}\right),\min\left(U\left(\phi\right),\tilde{U}_{L,U,n_{\phi,i},j_{\phi,i}}\right) \right)\right\}$. The following equality holds:
  \begin{equation}
    P_\Gamma = P_{\Gamma_{\phi,i}^\prime}
  \end{equation}
\end{lemma}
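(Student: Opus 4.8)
The plan is to show that the bound-update step for a single node $\phi$ through one of its neighbors $n_{\phi,i}$ neither removes any probabilistic model nor adds a spurious one, i.e.\ that the set of probability functions $p(\cdot)$ over $\Lambda$ satisfying all the interval constraints in $\Gamma$ coincides with those satisfying $\Gamma_{\phi,i}'$. Since $\Gamma'_{\phi,i}$ only tightens the interval at $\phi$ (it keeps all other sentences unchanged and replaces $[L(\phi),U(\phi)]$ by its intersection with $[\tilde L_{L,U,n_{\phi,i},j_{\phi,i}},\tilde U_{L,U,n_{\phi,i},j_{\phi,i}}]$), one inclusion is immediate: $P_{\Gamma'_{\phi,i}}\subseteq P_\Gamma$, because every constraint of $\Gamma$ is implied by a constraint of $\Gamma'_{\phi,i}$. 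The real content is the reverse inclusion $P_\Gamma\subseteq P_{\Gamma'_{\phi,i}}$, which amounts to: if $p$ satisfies $l_v\le p(S_v)\le u_v$ for all $v\in V_1$ (with $l_v=L(v),u_v=U(v)$), then automatically $p(S_\phi)\ge \tilde L_{L,U,n_{\phi,i},j_{\phi,i}}$ and $p(S_\phi)\le \tilde U_{L,U,n_{\phi,i},j_{\phi,i}}$.

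The key step is therefore a case analysis over the type of the connective node $n_{\phi,i}$ and the position $j_{\phi,i}$ of $\phi$ among its neighbors, verifying in each case a purely set-theoretic/probabilistic inequality. Here one uses the indexing convention that a connective node's last neighbor is the formula it computes; so each of the update formulas \eqref{eq:notl1}--\eqref{eq:impu3} is an instance of one of a handful of elementary probability facts. For NOT, $p(S_{\neg\psi})=1-p(S_\psi)$ exactly, giving \eqref{eq:notl1}--\eqref{eq:notu2}. For AND-$k$ and OR-$k$, one uses monotonicity $S_{\psi_1\land\cdots\land\psi_k}\subseteq S_{\psi_j}$, $S_{\psi_j}\subseteq S_{\psi_1\lor\cdots\lor\psi_k}$ together with the Bonferroni/union-bound style inequalities $p\bigl(\bigcap_j S_{\psi_j}\bigr)\ge 1-\sum_j(1-p(S_{\psi_j}))$ and $p\bigl(\bigcup_j S_{\psi_j}\bigr)\le\sum_j p(S_{\psi_j})$, plus their ``solved-for-one-operand'' rearrangements for the downward directions \eqref{eq:andu1}, \eqref{eq:orl1}. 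For IMPLIES one rewrites $\psi_1\rightarrow\psi_2$ as $\neg\psi_1\lor\psi_2$ and reuses the OR/NOT facts; e.g.\ \eqref{eq:impl3} is $p(S_{\psi_1\rightarrow\psi_2})\ge\max(1-p(S_{\psi_1}),p(S_{\psi_2}))$ from monotonicity, and \eqref{eq:impu3} is $p(S_{\psi_1\rightarrow\psi_2})\le 1-p(S_{\psi_1})+p(S_{\psi_2})$ from the union bound. In every case the inequality holds for \emph{all} $p$, so in particular it holds for all $p\in P_\Gamma$, establishing that the tightened bound at $\phi$ is already a consequence of the constraints in $\Gamma$; hence $P_\Gamma\subseteq P_{\Gamma'_{\phi,i}}$.

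The main obstacle I anticipate is not conceptual but bookkeeping: one must carefully match each of the twelve-plus displayed update equations to the correct probabilistic inequality and to the correct operand position, being careful about which neighbor is the ``output'' formula (index $k+1$ or $3$) versus an ``input'' operand, and about the $\min$/$\max$ clamping to $[0,1]$ (which is harmless since $p(S_\sigma)\in[0,1]$ always). A secondary subtlety is that the lemma is stated for an arbitrary $\Gamma$ built from arbitrary $L,U$, not just one reachable during inference, so the argument must be the ``for all $p$'' argument above rather than anything relying on consistency of $\Gamma$; in particular one should note the statement is vacuous/trivially true when $P_\Gamma=\emptyset$. Once the lemma is in hand, Theorem~\ref{th:1} follows by induction on the sequence of updates performed by Algorithm~3, starting from $\Gamma_0$ (extended to all of $V_1$ by the open-world initialization $[0,1]$, which adds no constraint) and applying Lemma~\ref{lemma:1} at each step to conclude $P_{\Gamma_0}=P_{\Gamma^{(t)}}$ for every iterate, so that the final bounds $L_\sigma,U_\sigma$ satisfy $L_\sigma\le\inf_{p\in P_{\Gamma_0}}p(S_\sigma)$ and $U_\sigma\ge\sup_{p\in P_{\Gamma_0}}p(S_\sigma)$.
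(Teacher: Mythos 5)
Your proposal is correct and follows essentially the same route as the paper's proof: the easy inclusion $P_{\Gamma'_{\phi,i}}\subseteq P_\Gamma$ from tightening, then the reverse inclusion by reducing to $\tilde{L}_{L,U,n_{\phi,i},j_{\phi,i}}\leq p(S_\phi)\leq \tilde{U}_{L,U,n_{\phi,i},j_{\phi,i}}$ for each $p\in P_\Gamma$ and checking this case by case over the connective type and operand position via complementation, monotonicity, union-bound/Bonferroni inequalities and harmless $[0,1]$ clamping, with the $P_\Gamma=\emptyset$ case noted as trivial. The paper simply writes out all eighteen cases explicitly, and its Theorem~2 proof uses the same induction over update steps you sketch.
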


\begin{proof}[Proof of Lemma~\ref{lemma:1}]
  $\Gamma$ and $\Gamma_{\phi,i}^\prime$ differ by exactly one sentence: the former contains $\left(\phi,L\left(\phi\right),U\left(\phi\right)\right)$ while the latter contains $\left(\phi,\max\left(L\left(\phi\right),\tilde{L}_{L,U,n_{\phi,i},j_{\phi,i}}\right),\min\left(U\left(\phi\right),\tilde{U}_{L,U,n_{\phi,i},j_{\phi,i}}\right) \right)$.
  Since $L\left(\phi\right) \leq \max\left(L\left(\phi\right),\tilde{L}_{L,U,n_{\phi,i},j_{\phi,i}}\right)$ and $U\left(\phi\right) \geq \min\left(U\left(\phi\right),\tilde{U}_{L,U,n_{\phi,i},j_{\phi,i}}\right)$, by definition any model of $\Gamma_{\phi,i}^\prime$ must be a model of $\Gamma$.
  In other words, we have $P_{\Gamma_{\phi,i}^\prime} \subseteq P_\Gamma$.
  Therefore, in order to prove Lemma~\ref{lemma:1}, we only need to show $P_\Gamma \subseteq P_{\Gamma_{\phi,i}^\prime}$.
  It is trivially true if $P_\Gamma=\emptyset$, and therefore it suffices to show that $p \in P_{\Gamma_{\phi,i}^\prime}$ for any $p \in P_\Gamma$.
  By definition, it is equivalent to show the following two inequalities for any $p \in P_\Gamma$:
  \begin{align}
    p\left(S_\phi\right) & \leq \tilde{U}_{L,U,n_{\phi,i},j_{\phi,i}} \label{eq:lemupper} \\
    p\left(S_\phi\right) & \geq \tilde{L}_{L,U,n_{\phi,i},j_{\phi,i}} \label{eq:lemlower}
  \end{align}

  Because the right-hand side of (\ref{eq:lemupper}) and (\ref{eq:lemlower}) may come from any of (\ref{eq:notl1})--(\ref{eq:impu3}), we'll now prove (\ref{eq:lemupper}) or (\ref{eq:lemlower}) for all eighteen possibilities:
  \begin{itemize}

  \item
    The right-hand side of (\ref{eq:lemlower}) comes from (\ref{eq:notl1}) or (\ref{eq:notl2}).
    By definition, $n_{\phi,i}$ is a node of type NOT, and let $\gamma\triangleq n_{n_{\phi,i},3-j_{\phi,i}}$ denote the other neighbor of $n_{\phi,i}$.
    By definition, we have $\gamma=\neg\phi$, $\tilde{L}_{L,U,n_{\phi,i},j_{\phi,i}} = 1-U\left(\gamma\right)$, and $p\left(S_\gamma\right)\leq U\left(\gamma\right)$.
    Therefore,
    \begin{equation}
      p\left(S_\phi\right) = p\left(S_{\neg\gamma}\right) = 1-p\left(S_\gamma\right) \geq 1-U\left(\gamma\right) = \tilde{L}_{L,U,n_{\phi,i},j_{\phi,i}}
    \end{equation}
    Hence (\ref{eq:lemlower}) is true in this scenario.

  \item
    The right-hand side of (\ref{eq:lemupper}) comes from (\ref{eq:notu1}) or (\ref{eq:notu2}).
    By definition, $n_{\phi,i}$ is a node of type NOT, and let $\gamma\triangleq n_{n_{\phi,i},3-j_{\phi,i}}$ denote the other neighbor of $n_{\phi,i}$.
    By definition, we have $\gamma=\neg\phi$, $\tilde{U}_{L,U,n_{\phi,i},j_{\phi,i}} = 1-L\left(\gamma\right)$, and $p\left(S_\gamma\right)\geq L\left(\gamma\right)$.
    Therefore,
    \begin{equation}
      p\left(S_\phi\right) = p\left(S_{\neg\gamma}\right) = 1-p\left(S_\gamma\right) \leq 1-L\left(\gamma\right) = \tilde{U}_{L,U,n_{\phi,i},j_{\phi,i}}
    \end{equation}
    Hence (\ref{eq:lemupper}) is true in this scenario.

  \item
    The right-hand side of (\ref{eq:lemlower}) comes from (\ref{eq:andl1}).
    By definition, $n_{\phi,i}$ is a node of type AND-$k$, and let $\gamma\triangleq n_{n_{\phi,i},k+1}$ and $\xi_1,\cdots,\xi_{k-1}$ denote the other neighbors of $n_{\phi,i}$.
    By definition, we have $\gamma=\phi\land\xi_1\land\cdots\land\xi_{k-1}$, $\tilde{L}_{L,U,n_{\phi,i},j_{\phi,i}} = L\left(\gamma\right)$, and $p\left(S_\gamma\right) \geq L\left(\gamma\right)$.
    It is straightforward to verify that $S_\gamma \subseteq S_\phi$.
    Therefore,
    \begin{equation}
      p\left(S_\phi\right) \geq p\left(S_\gamma\right) \geq L\left(\gamma\right) = \tilde{L}_{L,U,n_{\phi,i},j_{\phi,i}}
    \end{equation}
    Hence (\ref{eq:lemlower}) is true in this scenario.

  \item
    The right-hand side of (\ref{eq:lemupper}) comes from (\ref{eq:andu1}).
    By definition, $n_{\phi,i}$ is a node of type AND-$k$, and let $\gamma\triangleq n_{n_{\phi,i},k+1}$ and $\xi_1,\cdots,\xi_{k-1}$ denote the other neighbors of $n_{\phi,i}$.
    By definition, we have $\gamma=\phi\land\xi_1\land\cdots\land\xi_{k-1}$, $\tilde{U}_{L,U,n_{\phi,i},j_{\phi,i}} = \min\left(1,U\left(\gamma\right)+\left(1-L\left(\xi_1\right)\right)+\cdots+\left(1-L\left(\xi_{k-1}\right)\right)\right)$, $p\left(S_\gamma\right) \leq U\left(\gamma\right)$, and $p\left(S_{\xi_j}\right) \geq L\left(\xi_j\right)$.
    It is straightforward to verify that
    \begin{equation}
      S_\gamma = S_\phi \setminus S_{\neg\xi_1} \setminus \cdots \setminus S_{\neg\xi_{k-1}}
    \end{equation}
    Therefore,
    \begin{equation}
      S_\phi \subseteq S_\gamma \cup S_{\neg\xi_1} \cup \cdots \cup S_{\neg\xi_{k-1}}
    \end{equation}
    Therefore,
    \begin{align}
      p\left(S_\phi\right) & \leq p\left(S_\gamma\right) + p\left(S_{\neg\xi_1}\right) + \cdots + p\left(S_{\neg\xi_{k-1}}\right) \nonumber\\
      & = p\left(S_\gamma\right) + \left(1-p\left(S_{\xi_1}\right)\right) + \cdots + \left(1-p\left(S_{\xi_{k-1}}\right)\right) \nonumber\\
      & \leq U\left(\gamma\right) + \left(1-L\left(\xi_1\right)\right) + \cdots + \left(1-L\left(\xi_{k-1}\right)\right)
    \end{align}
    Since $p\left(S_\phi\right) \leq 1$, it must be true that
    \begin{equation}
      p\left(S_\phi\right) \leq \min\left(1,U\left(\gamma\right)+\left(1-L\left(\xi_1\right)\right)+\cdots+\left(1-L\left(\xi_{k-1}\right)\right)\right) = \tilde{U}_{L,U,n_{\phi,i},j_{\phi,i}}
    \end{equation}
    Hence (\ref{eq:lemupper}) is true in this scenario.

  \item
    The right-hand side of (\ref{eq:lemlower}) comes from (\ref{eq:andl2}).
    By definition, $n_{\phi,i}$ is a node of type AND-$k$, and let $\xi_1,\cdots,\xi_k$ denote the other neighbors of $n_{\phi,i}$.
    By definition, we have $\phi=\xi_1\land\cdots\land\xi_k$, $\tilde{L}_{L,U,n_{\phi,i},j_{\phi,i}} = \max\left(0,1-\left(1-L\left(\xi_1\right)\right)-\cdots-\left(1-L\left(\xi_k\right)\right)\right)$, and $p\left(S_{\xi_j}\right) \geq L\left(\xi_j\right)$.
    It is straightforward to verify that
    \begin{equation}
      S_{\neg\phi} = S_{\neg\xi_1} \cup \cdots \cup S_{\neg\xi_k}
    \end{equation}
    Therefore,
    \begin{align}
      p\left(S_{\neg\phi}\right) & \leq p\left(S_{\neg\xi_1}\right) + \cdots + p\left(S_{\neg\xi_k}\right) \nonumber\\
      & \leq \left(1-L\left(\xi_1\right)\right) + \cdots + \left(1-L\left(\xi_k\right)\right)
    \end{align}
    Therefore,
    \begin{equation}
      p\left(S_\phi\right) = 1-p\left(S_{\neg\phi}\right) \geq 1-\left(1-L\left(\xi_1\right)\right) - \cdots - \left(1-L\left(\xi_k\right)\right)
    \end{equation}
    Since $p\left(S_\phi\right) \geq 0$, it must be true that
    \begin{equation}
      p\left(S_\phi\right) \geq \max\left(0,1-\left(1-L\left(\xi_1\right)\right)-\cdots-\left(1-L\left(\xi_k\right)\right)\right) = \tilde{L}_{L,U,n_{\phi,i},j_{\phi,i}}
    \end{equation}
    Hence (\ref{eq:lemlower}) is true in this scenario.

  \item
    The right-hand side of (\ref{eq:lemupper}) comes from (\ref{eq:andu2}).
    By definition, $n_{\phi,i}$ is a node of type AND-$k$, and let $\xi_1,\cdots,\xi_k$ denote the other neighbors of $n_{\phi,i}$.
    By definition, we have $\phi=\xi_1\land\cdots\land\xi_k$, $\tilde{U}_{L,U,n_{\phi,i},j_{\phi,i}} = \min_{j=1}^{k}U\left(\xi_j\right)$, and $p\left(S_{\xi_j}\right) \leq U\left(\xi_j\right)$.
    It is straightforward to verify that
    \begin{equation}
      S_\phi = S_{\xi_1} \cap \cdots \cap S_{\xi_k}
    \end{equation}
    Therefore,
    \begin{equation}
      p\left(S_\phi\right) \leq \min_{j=1}^{k}p\left(S_{\xi_j}\right) \leq \min_{j=1}^{k}U\left(\xi_j\right) = \tilde{U}_{L,U,n_{\phi,i},j_{\phi,i}}
    \end{equation}
    Hence (\ref{eq:lemupper}) is true in this scenario.

  \item
    The right-hand side of (\ref{eq:lemlower}) comes from (\ref{eq:orl1}).
    By definition, $n_{\phi,i}$ is a node of type OR-$k$, and let $\gamma\triangleq n_{n_{\phi,i},k+1}$ and $\xi_1,\cdots,\xi_{k-1}$ denote the other neighbors of $n_{\phi,i}$.
    By definition, we have $\gamma=\phi\lor\xi_1\lor\cdots\lor\xi_{k-1}$, $\tilde{L}_{L,U,n_{\phi,i},j_{\phi,i}} = \max\left(0,L\left(\gamma\right)-U\left(\xi_1\right)-\cdots-U\left(\xi_{k-1}\right)\right)$, $p\left(S_\gamma\right) \geq L\left(\gamma\right)$, and $p\left(S_{\xi_j}\right) \leq U\left(\xi_j\right)$.
    It is straightforward to verify that
    \begin{equation}
      S_{\neg\gamma} = S_{\neg\phi} \setminus S_{\xi_1} \setminus \cdots \setminus S_{\xi_{k-1}}
    \end{equation}
    Therefore,
    \begin{equation}
      S_{\neg\phi} \subseteq S_{\neg\gamma} \cup S_{\xi_1} \cup \cdots \cup S_{\xi_{k-1}}
    \end{equation}
    Therefore,
    \begin{align}
      p\left(S_{\neg\phi}\right) & \leq p\left(S_{\neg\gamma}\right) + p\left(S_{\xi_1}\right) + \cdots + p\left(S_{\xi_{k-1}}\right) \nonumber\\
      & \leq 1-L\left(\gamma\right) + U\left(\xi_1\right) + \cdots + U\left(\xi_{k-1}\right)
    \end{align}
    Therefore,
    \begin{equation}
      p\left(S_\phi\right) = 1-p\left(S_{\neg\phi}\right) \geq L\left(\gamma\right) - U\left(\xi_1\right) - \cdots - U\left(\xi_{k-1}\right)
    \end{equation}
    Since $p\left(S_\phi\right) \geq 0$, it must be true that
    \begin{equation}
      p\left(S_\phi\right) \geq \max\left(0,L\left(\gamma\right)-U\left(\xi_1\right)-\cdots-U\left(\xi_{k-1}\right)\right) = \tilde{L}_{L,U,n_{\phi,i},j_{\phi,i}}
    \end{equation}
    Hence (\ref{eq:lemlower}) is true in this scenario.

  \item
    The right-hand side of (\ref{eq:lemupper}) comes from (\ref{eq:oru1}).
    By definition, $n_{\phi,i}$ is a node of type OR-$k$, and let $\gamma\triangleq n_{n_{\phi,i},k+1}$ and $\xi_1,\cdots,\xi_{k-1}$ denote the other neighbors of $n_{\phi,i}$.
    By definition, we have $\gamma=\phi\lor\xi_1\lor\cdots\lor\xi_{k-1}$, $\tilde{U}_{L,U,n_{\phi,i},j_{\phi,i}} = U\left(\gamma\right)$, and $p\left(S_\gamma\right) \leq U\left(\gamma\right)$.
    It is straightforward to verify that $S_\phi \subseteq S_\gamma$.
    Therefore,
    \begin{equation}
      p\left(S_\phi\right) \leq p\left(S_\gamma\right) \leq U\left(\gamma\right) = \tilde{U}_{L,U,n_{\phi,i},j_{\phi,i}}
    \end{equation}
    Hence (\ref{eq:lemupper}) is true in this scenario.

  \item
    The right-hand side of (\ref{eq:lemlower}) comes from (\ref{eq:orl2}).
    By definition, $n_{\phi,i}$ is a node of type OR-$k$, and let $\xi_1,\cdots,\xi_k$ denote the other neighbors of $n_{\phi,i}$.
    By definition, we have $\phi=\xi_1\lor\cdots\lor\xi_k$, $\tilde{L}_{L,U,n_{\phi,i},j_{\phi,i}} = \max_{j=1}^{k}L\left(\xi_j\right)$, and $p\left(S_{\xi_j}\right) \geq L\left(\xi_j\right)$.
    It is straightforward to verify that
    \begin{equation}
      S_\phi = S_{\xi_1} \cup \cdots \cup S_{\xi_k}
    \end{equation}
    Therefore,
    \begin{equation}
      p\left(S_\phi\right) \geq \max_{j=1}^{k}p\left(S_{\xi_j}\right) \geq \max_{j=1}^{k}L\left(\xi_j\right) = \tilde{L}_{L,U,n_{\phi,i},j_{\phi,i}}
    \end{equation}
    Hence (\ref{eq:lemlower}) is true in this scenario.

  \item
    The right-hand side of (\ref{eq:lemupper}) comes from (\ref{eq:oru2}).
    By definition, $n_{\phi,i}$ is a node of type OR-$k$, and let $\xi_1,\cdots,\xi_k$ denote the other neighbors of $n_{\phi,i}$.
    By definition, we have $\phi=\xi_1\lor\cdots\lor\xi_k$, $\tilde{U}_{L,U,n_{\phi,i},j_{\phi,i}} = \min\left(1,U\left(\xi_1\right)+\cdots+U\left(\xi_k\right)\right)$, and $p\left(S_{\xi_j}\right) \leq U\left(\xi_j\right)$.
    It is straightforward to verify that
    \begin{equation}
      S_\phi = S_{\xi_1} \cup \cdots \cup S_{\xi_k}
    \end{equation}
    Therefore,
    \begin{equation}
      p\left(S_\phi\right) \leq p\left(S_{\xi_1}\right) + \cdots + p\left(S_{\xi_k}\right) \leq U\left(\xi_1\right) + \cdots + U\left(\xi_k\right)
    \end{equation}
    Since $p\left(S_\phi\right) \leq 1$, it must be true that
    \begin{equation}
      p\left(S_\phi\right) \leq \min\left(1,U\left(\xi_1\right)+\cdots+U\left(\xi_k\right)\right) = \tilde{U}_{L,U,n_{\phi,i},j_{\phi,i}}
    \end{equation}
    Hence (\ref{eq:lemupper}) is true in this scenario.

  \item
    The right-hand side of (\ref{eq:lemlower}) comes from (\ref{eq:impl1}).
    By definition, $n_{\phi,i}$ is a node of type IMPLIES, and let $\gamma\triangleq n_{n_{\phi,i},3}$ and $\xi\triangleq n_{n_{\phi,i},2}$ denote the other neighbors of $n_{\phi,i}$.
    By definition, we have $\gamma=\phi\rightarrow\xi$, $\tilde{L}_{L,U,n_{\phi,i},j_{\phi,i}} = 1-U\left(\gamma\right)$, and $p\left(S_\gamma\right) \leq U\left(\gamma\right)$.
    It is straightforward to verify that $S_{\neg\phi} \subseteq S_\gamma$.
    Therefore,
    \begin{equation}
      p\left(S_\phi\right) = 1-p\left(S_{\neg\phi}\right) \geq 1-p\left(S_\gamma\right) \geq 1-U\left(\gamma\right) = \tilde{L}_{L,U,n_{\phi,i},j_{\phi,i}}
    \end{equation}
    Hence (\ref{eq:lemlower}) is true in this scenario.

  \item
    The right-hand side of (\ref{eq:lemupper}) comes from (\ref{eq:impu1}).
    By definition, $n_{\phi,i}$ is a node of type IMPLIES, and let $\gamma\triangleq n_{n_{\phi,i},3}$ and $\xi\triangleq n_{n_{\phi,i},2}$ denote the other neighbors of $n_{\phi,i}$.
    By definition, we have $\gamma=\phi\rightarrow\xi$, $\tilde{U}_{L,U,n_{\phi,i},j_{\phi,i}} = \min\left(1,1+U\left(\xi\right)-L\left(\gamma\right)\right)$, $p\left(S_\gamma\right) \geq L\left(\gamma\right)$, and $p\left(S_\xi\right) \leq U\left(\xi\right)$.
    It is straightforward to verify that
    \begin{equation}
      S_{\neg\gamma} = S_\phi \setminus S_\xi
    \end{equation}
    Therefore,
    \begin{equation}
      S_\phi \subseteq S_{\neg\gamma} \cup S_\xi
    \end{equation}
    Therefore,
    \begin{equation}
      p\left(S_\phi\right) \leq p\left(S_{\neg\gamma}\right) + p\left(S_\xi\right) \nonumber\\
      = 1 - p\left(S_\gamma\right) + p\left(S_\xi\right) \nonumber\\
      \leq 1-L\left(\gamma\right) + U\left(\xi\right)
    \end{equation}
    Since $p\left(S_\phi\right) \leq 1$, it must be true that
    \begin{equation}
      p\left(S_\phi\right) \leq \min\left(1,1-L\left(\gamma\right)+U\left(\xi\right)\right) = \tilde{U}_{L,U,n_{\phi,i},j_{\phi,i}}
    \end{equation}
    Hence (\ref{eq:lemupper}) is true in this scenario.

  \item
    The right-hand side of (\ref{eq:lemlower}) comes from (\ref{eq:impl2}).
    By definition, $n_{\phi,i}$ is a node of type IMPLIES, and let $\gamma\triangleq n_{n_{\phi,i},3}$ and $\xi\triangleq n_{n_{\phi,i},1}$ denote the other neighbors of $n_{\phi,i}$.
    By definition, we have $\gamma=\xi\rightarrow\phi$, $\tilde{L}_{L,U,n_{\phi,i},j_{\phi,i}} = \max\left(0,L\left(\xi\right)+L\left(\gamma\right)-1\right)$, $p\left(S_\gamma\right) \geq L\left(\gamma\right)$, and $p\left(S_\xi\right) \geq L\left(\xi\right)$.
    It is straightforward to verify that
    \begin{equation}
      S_{\neg\gamma} = S_{\neg\phi} \setminus S_{\neg\xi} 
    \end{equation}
    Therefore,
    \begin{equation}
      S_{\neg\phi} \subseteq S_{\neg\xi} \cup S_{\neg\gamma}
    \end{equation}
    Therefore,
    \begin{align}
      p\left(S_{\neg\phi}\right) & \leq p\left(S_{\neg\xi}\right) + p\left(S_{\neg\gamma}\right) \nonumber\\
      & = 1-p\left(S_\xi\right) + 1-p\left(S_\gamma\right) \nonumber\\
      & \leq 2-L\left(\xi\right)-L\left(\gamma\right)
    \end{align}
    Therefore,
    \begin{equation}
      p\left(S_\phi\right) = 1-p\left(S_{\neg\phi}\right) \geq L\left(\xi\right)+L\left(\gamma\right)-1
    \end{equation}
    Since $p\left(S_\phi\right) \geq 0$, it must be true that
    \begin{equation}
      p\left(S_\phi\right) \geq \max\left(0,L\left(\xi\right)+L\left(\gamma\right)-1\right) = \tilde{L}_{L,U,n_{\phi,i},j_{\phi,i}}
    \end{equation}
    Hence (\ref{eq:lemlower}) is true in this scenario.

  \item
    The right-hand side of (\ref{eq:lemupper}) comes from (\ref{eq:impu2}).
    By definition, $n_{\phi,i}$ is a node of type IMPLIES, and let $\gamma\triangleq n_{n_{\phi,i},3}$ and $\xi\triangleq n_{n_{\phi,i},1}$ denote the other neighbors of $n_{\phi,i}$.
    By definition, we have $\gamma=\xi\rightarrow\phi$, $\tilde{U}_{L,U,n_{\phi,i},j_{\phi,i}} = U\left(\gamma\right)$, and $p\left(S_\gamma\right) \leq U\left(\gamma\right)$.
    It is straightforward to verify that $S_\phi \subseteq S_\gamma$.
    Therefore,
    \begin{equation}
      p\left(S_\phi\right) \leq p\left(S_\gamma\right) \leq U\left(\gamma\right) = \tilde{U}_{L,U,n_{\phi,i},j_{\phi,i}}
    \end{equation}
    Hence (\ref{eq:lemupper}) is true in this scenario.

  \item
    The right-hand side of (\ref{eq:lemlower}) comes from (\ref{eq:impl3}).
    By definition, $n_{\phi,i}$ is a node of type IMPLIES, and let $\gamma\triangleq n_{n_{\phi,i},2}$ and $\xi\triangleq n_{n_{\phi,i},1}$ denote the other neighbors of $n_{\phi,i}$.
    By definition, we have $\phi=\xi\rightarrow\gamma$, $\tilde{L}_{L,U,n_{\phi,i},j_{\phi,i}} = \max\left(1-U\left(\xi\right),L\left(\gamma\right)\right)$, $p\left(S_\xi\right) \leq U\left(\xi\right)$, and $p\left(S_\gamma\right) \geq L\left(\gamma\right)$.
    It is straightforward to verify that
    \begin{equation}
      S_\phi = S_{\neg\xi} \cup S_\gamma
    \end{equation}
    Therefore,
    \begin{align}
      p\left(S_\phi\right) & \geq \max\left(p\left(S_{\neg\xi}\right), p\left(S_\gamma\right)\right) \nonumber\\
      & = \max\left(1-p\left(S_\xi\right), p\left(S_\gamma\right)\right) \nonumber\\
      & \geq \max\left(1-U\left(\xi\right),L\left(\gamma\right)\right) \nonumber\\
      & = \tilde{L}_{L,U,n_{\phi,i},j_{\phi,i}}
    \end{align}
    Hence (\ref{eq:lemlower}) is true in this scenario.

  \item
    The right-hand side of (\ref{eq:lemupper}) comes from (\ref{eq:impu3}).
    By definition, $n_{\phi,i}$ is a node of type IMPLIES, and let $\gamma\triangleq n_{n_{\phi,i},2}$ and $\xi\triangleq n_{n_{\phi,i},1}$ denote the other neighbors of $n_{\phi,i}$.
    By definition, we have $\phi=\xi\rightarrow\gamma$, $\tilde{U}_{L,U,n_{\phi,i},j_{\phi,i}} = \min\left(1,1-L\left(\xi\right)+U\left(\gamma\right)\right)$, $p\left(S_\xi\right) \geq L\left(\xi\right)$, and $p\left(S_\gamma\right) \leq U\left(\gamma\right)$.
    It is straightforward to verify that
    \begin{equation}
      S_\phi = S_{\neg\xi} \cup S_\gamma
    \end{equation}
    Therefore,
    \begin{align}
      p\left(S_\phi\right) & \leq p\left(S_{\neg\xi}\right) + p\left(S_\gamma\right) \nonumber\\
      & = 1-p\left(S_\xi\right) + p\left(S_\gamma\right) \nonumber\\
      & \leq 1-L\left(\xi\right) + U\left(\gamma\right)
    \end{align}
    Since $p\left(S_\phi\right) \leq 1$, it must be true that
    \begin{equation}
      p\left(S_\phi\right) \leq \min\left(1,1-L\left(\xi\right)+U\left(\gamma\right)\right) = \tilde{U}_{L,U,n_{\phi,i},j_{\phi,i}}
    \end{equation}
    Hence (\ref{eq:lemupper}) is true in this scenario.
  \end{itemize}
\end{proof}

Now we are ready to present the main proof.

\begin{proof}[Proof of Theorem 2]
  Let $L^k$ and $U^k$ denote the $L$ and $U$ functions in the pseudo code after $k$ iterations.
  Define $\Gamma^k=\left\{\left(v,L^k\left(v\right),U^k\left(v\right)\right)\mid v\in V_1\right\}$ for $k=0,1,\cdots$.
  By the pseudo code, it is straightforward to verify that $P_{\Gamma^0} = P_{\Gamma_0}$.
  By Lemma~\ref{lemma:1}, we have $P_{\Gamma^k} = P_{\Gamma^{k+1}}$ for any $k$.
  Therefore, after convergence it must be true that $P_{\Gamma^k} = P_{\Gamma_0}$.

  By definition, for any $p\in P_{\Gamma^k}$ after convergence, we have
  \begin{equation}
    L_\sigma = L^k\left(\sigma\right) \leq p\left(S_\sigma\right) \leq U^k\left(\sigma\right) = U_\sigma
  \end{equation}
  Replacing $P_{\Gamma^k}$ with $P_{\Gamma_0}$, we get the two inequalities in Theorem 2.
\end{proof}

\section{Learning with constraints}
g
Here we present a slightly expanded discussion of the constrained optimization problem discussed in Section~\ref{sec:learning}.

\subsection{Constraints}
\label{sec:constraints}
Constraints on neural parameters are derived from the truth tables of the operations they intend to model and from established ranges for ``true'' and ``false'' values.
Given a threshold of truth~\(\frac{1}{2} < \alpha \leq 1\), a continuous truth value is considered true if it is greater than \(\alpha\) and false if it is less than \(1 - \alpha\).
Accordingly, the truth table for, e.g., binary AND suggests a set of constraints given
\begin{equation*}\begin{array}{ccccccccl@{}l@{}l@{}l}
p  & q  & p \land q &             & p          & q          & p \land q  &             & \beta - w_p \cdot (1 - p)      & {}- w_q \cdot (1 - q) \\[1pt]
\cline{1-3}\cline{5-7}\cline{9-10}
\F & \F & \F        &             & 1 - \alpha & 1 - \alpha & 1 - \alpha &             & \beta - w_p \cdot \alpha       & {}- w_q \cdot \alpha       & {}\leq 1 - \alpha \rule{0pt}{10pt} \\
\F & \T & \F        & \rightarrow & 1 - \alpha & 1          & 1 - \alpha & \rightarrow & \beta - w_p \cdot \alpha       &                            & {}\leq 1 - \alpha \\
\T & \F & \F        &             & 1          & 1 - \alpha & 1 - \alpha &             & \beta                          & {}- w_q \cdot \alpha       & {}\leq 1 - \alpha \\
\T & \T & \T        &             & \alpha     & \alpha     & \alpha     &             & \beta - w_p \cdot (1 - \alpha) & {}- w_q \cdot (1 - \alpha) & {}\geq \alpha
\end{array}\end{equation*}
More generally, \(n\)-ary conjunctions have constraints of the form
\begin{alignat}{4}
& \forall i \in I,\quad &           &&                            w_i &                    && \geq 0 \notag \\
& \forall i \in I,\quad & \beta -{} &&                            w_i & \cdot \alpha       && \leq 1 - \alpha \label{eqn:and-0*} \\
&                       & \beta -{} && \textstyle \sum_{i \in I}  w_i & \cdot (1 - \alpha) && \geq \alpha \label{eqn:and-1}
\intertext{while \(n\)-ary disjunctions have constraints of the form}
& \forall i \in I,\quad &               &&                            w_i &                    && \geq 0 \notag \\
& \forall i \in I,\quad & 1 - \beta +{} &&                            w_i & \cdot \alpha       && \geq \alpha \label{eqn:or-1*} \\
&                       & 1 - \beta +{} && \textstyle \sum_{i \in I}  w_i & \cdot (1 - \alpha) && \leq 1 - \alpha
\end{alignat}
The identity \(p^{(w_p)} \rightarrow^{\beta} q^{(w_q)} = (\neg p)^{(w_p)} \oplus^{\beta} q^{(w_q)}\) permits implications to use the same constraints as disjunctions and, in fact, the above two sets of constraints are equivalent under the De Morgan laws.
A consequence of these constraints is that LNN evaluation is guaranteed to behave classically, i.e.~to yield results at every neuron within the established ranges for true and false, if all of their inputs are themselves within these ranges.

\subsection{Slack variables}

It is easy to see that weights~\(w_i\) cannot equal 0 under the above constraints, though this is a desirable outcome of optimization as it effectively removes the affected input.
To permit this, it is necessary to introduce a slack variable for each weight, allowing its respective constraints in \eqref{eqn:and-0*} or \eqref{eqn:or-1*} to be violated as the weight drops to 0:
\begin{alignat}{2}
& \forall i \in I,\quad &                                s_i & \geq 0 \notag \\
& \forall i \in I,\quad &     \beta - w_i \cdot \alpha - s_i & \leq 1 - \alpha \tag{\ref{eqn:and-0*}*} \\
& \forall i \in I,\quad & 1 - \beta + w_i \cdot \alpha + s_i & \geq \alpha \tag{\ref{eqn:or-1*}*}
\end{alignat}
If \(w_i = 0\) is understood as \(i \notin I\), this update remains consistent with the original constraint if either \(s_i = 0\) or \(w_i = 0\).
One can encourage optimization to favor such parameterizations by included in the loss function a penalty term that scales with \(s_i w_i\).
The coefficient on this penalty term controls how classical learned operations must be, with exact classical behavior restored if optimization reduces the penalty term to 0.

\subsection{Optimization problem}

The optimization problem from before is then updated
\begin{align*}
    \textstyle \min_{B, W, S}\quad & \textstyle E(B, W) + \mathrlap{\sum_{k \in N} \max\{0, L_{B,W,k} - U_{B,W,k}\} + \sum_{k \in N} \mathbf{s_k} \cdot \mathbf{w_k}} \\
    \text{s.t.}\quad & \forall k \in N,\; i \in I_k, & \alpha \cdot w_{ik} - s_{ik} - \beta_k + 1 & \geq \alpha, & w_{ik}, s_{ik} & \geq 0 \\
    & \forall k \in N,                      & \textstyle \sum_{i \in I_k} (1 - \alpha) \cdot w_{ik} - \beta_k + 1 & \leq 1 - \alpha, & \beta_k & \geq 0
\end{align*}
for loss function~\(E\), bias vector~\(B\), weight matrix~\(W\), slack matris~\(S\), neuron index set~\(N\), and inferred lower and upper bounds~\(L_{B,W,k}\) and \(U_{B,W,k}\) at each neuron.
In addition, it may be of interest to square either or both of the contradiction penalty terms~\(\max\{0, L_{B,W,k} - U_{B,W,k}\}\) or the slack penalty terms~\(s_i w_i\).

Depending on the specific problem being solved, different loss functions~\(E(B, W)\) may be used.
For example, an LNN configured to predict a binary outcome may use mean squared error as usual.
Alternately, it is possible to use the contradiction penalty to build arbitrarily complex logical loss functions by introducing new formulae into model that become contradictory in the event of undesirable inference behavior.
Understandably, the parameters of specifically these introduced formulae should not be tuned but instead left in a default state (e.g.~all 1), so optimization cannot turn the logical loss function off.

\section{Tailored activation function}

\subsection{Motivation}

In weighted fuzzy logic, a logical operation on inputs is carried out by a neuron activation function, in almost the same way that artificial neurons compute on their inputs. The main difference is that the weights and bias terms are constrained in such a way that the semantics of the operation are maintained. For example, Equation~\eqref{eqn:or-1} describes how at least one input to a disjunction neuron may be \texttt{True} with the rest \texttt{False}, results in the activation function returning a value less than or equal to \(\alpha\), representing classically \texttt{True}.

The constraining of weights and biases, ensures that while parameters can be adjusted (partially), the neuron retains its logical character and its interpretability.

Controlling the allowed weights and biases of the inputs enforces the correct logical operation after applying the static activation function. These constraints define what the activation function is going to do the weighted sum.

A more natural way to control the output of the activation function is to change the activation function itself and in this way enforce the right logical output. This approach has no constraints on the weights and all the enforcement is effected by having an activation function that depends on the weights themselves. We call this the `tailored activation function' approach. It builds on the weighted fuzzy logic approach and is in-fact equivalent to weighted \L ukasiewicz logic in the \(\alpha=1\), equally-weighted case.

There are many advantages to the tailored activation function approach. To unpack this let us re-look at the problems with the constrained approach:

\begin{itemize}
\item The constrained approach dictates that the activation function must be \(\alpha\)-preserving to provide interpretability on the domain. This fixes two static points that the activation function must go through, prematurely giving static truth meaning to the values of the weighted sum.
\item Additional slack parameters are required to relax constraints as weights approach zero --- hindering interpretability and encouraging over-fitting.
\item Parameter updates require a constraint satisfaction algorithm --- such as Frank Wolfe or Projected Gradient Descent.
\item Constraint satisfaction and neuron operations are expensive, with slacks required for each weight and logical operations defined \(\forall i\) \(w_i, s_i\) in \text{(\ref{eqn:and-0*}*), (\ref{eqn:or-1*}*)}.
\item Unbounded weights become unwieldy and less interpretable.
\item There are no gradients in the clamped regions without gradient transparent clamping, as in Appendix~\ref{apndx:Gradient-transparent clamping}.
\item The \(w_i \cdot (1 - x_i)\) form of a weighted conjunction in \eqref{eq:lukasiewicz_and_forward} prevents maximally \texttt{True} inputs \((x_i=1\)) from offering gradients --- this is problematic for contradictory losses that exist when given a \texttt{False} conjunction that has all \texttt{True} inputs. This complication extends to disjunction and implication neurons accordingly.
\item \(\alpha\) is not interpretable especially in choosing its value.
\item \(\beta\) is not easily interpretable.
\item \(\beta\) must be learnt for each neuron --- which encourages over-fitting.
\item downward inference from functional inverses compute classically incorrect bounds.
\end{itemize}

\subsection{Bounds}

Here we present the expanded table of LNN bounds presented in section~\ref{sec:model} to better elaborate on the tailored activation formulation.

The interpretation of lower and upper bound truth values correspond to one of 4 primary or 3 secondary states that a neuron can be in:

\begin{table}[!htbp]
\centering
\caption{Primary and secondary truth value bound states}
\begin{tabular}{ l l l l | l l l }
  \toprule
  \multicolumn{4}{c|}{Primary} & \multicolumn{3}{c}{Secondary}\\
  State &Lower & &Upper &State &Lower &Upper \\
  \midrule
  Unknown &\([ 0, 1 - \alpha ]\) & &\([ \alpha, 1 ]\) & \(\sim\)Unknown &\([ 0, 0.5 ]\) &\([ 0.5, 1 ]\) \\
  True &\([ \alpha, 1 ]\) & &\([ \alpha, 1 ]\) &\(\sim\)True &\(( 0.5, \alpha )\) &\(( 0.5, 1 ]\) \\
  False &\([ 0, 1 - \alpha )\) & &\([ 0, 1 - \alpha ]\) &\(\sim\)False &\([ 0, 0.5 )\) &\(( 1 - \alpha, 0.5 )\) \\
  Contradiction &Lower &\(>\) &Upper \\
  \bottomrule
\end{tabular}
\label{tab:bound-states-secondary}
\end{table}

Neurons with a \texttt{True} or \texttt{False} state may be considered classically-true or classically-false respectively. A \texttt{$\sim$True} or \texttt{$\sim$False} state is interpreted as being more-true-than-not or more-false-than-not, while still remaining open to a classically-true or classically-false convergence provided an upper bound above $\alpha$ or lower bound below $1-\alpha$ respectively. The \texttt{Unknown} state, albeit non-classical, may converge to a classical state, whereas \texttt{$\sim$Unknown} may converge to be \texttt{$\sim$ True} or \texttt{$\sim$False} but not to a classical state. A \texttt{Contradictory} state represents a disagreement in assertions of the truth between connected rule and fact neurons, with intra-classical contradictions typically being ignored.

Neurons are typically introduced into an LNN in an \texttt{Unknown} state, with allowance for rule or fact truth values to be directly assigned if known. When offered new bounds at inference, lower bounds monotonically tighten upward and upper bounds monotonically tighten downward.

\subsection{Definition}

One of the key insights of the tailored activation function approach is to define dynamic False, Fuzzy and True regions in the domain. Both approaches define static regions in the range. The constrained approach imposes static regions on the domain, which is why constraints become necessary.

The tailored activation function rather keeps track of the dynamic semantically-derived boundary points between the regions (i.e. from the classical truth table). The boundary between False and Fuzzy is called $x_F$ and the boundary between Fuzzy and True is called $x_T$. Then the tailored activation function is required to be monotonic and to go through these two dynamic points $(x_F, 1-\alpha)$ and $(x_T,\alpha)$. Monotonicity is not a new requirement and neither is the requirement of going through two points. What is new, is that the two points are dynamic and that the classically \texttt{True} region is more than a single point. This latter advantage is significant, since smooth activation functions such as the sigmoid function may be used without worrying about reaching a maximally true value, i.e.~$1$, to represent classically true ($\alpha < 1$).

The biggest advantage is that there are no constraints on the weights; the activation function, by definition, adjusts as the weights change, maintaining the logical semantics. The second major advantage is that there are useful gradients everywhere.

The price to pay for this, of course, is that these truth points depend on the weights of the operands, therefore every node requires its own activation function. 

Weighted inputs to the activation function are neural-network-like, in that they require a dot product of weight and bounds vectors of an $n$-ary input, $f_{\mathbf{w}}(\mathbf{w}\cdot \mathbf{x})$.   

In total there are four points of interest:

\begin{subequations}
\label{tailored:fourpoints}
\begin{align}
(x_{min}, y_{min}) & = (\beta -\sum_{i \in I}  w_i, 0)\\
(x_F, y_F) & = (\beta - \alpha w_{max} , 1-\alpha) \label{tailored:x_F} \\
(x_T, y_T) & = (\beta - \sum_{i \in I}  w_i \cdot (1 - \alpha) , \alpha)\label{tailored:x_T} \\
(x_{max}, y_{max}) &= (\beta , 1)
\end{align}
\end{subequations}

The monotonicity requirement does produce one non-trivial constraint related to the weights in combination with \(\alpha\), namely that $x_F \leq x_T$. Equality can be optionally ruled out, $x_F\ne x_T$ if a continuous activation function is desired. This ensures that there is a non-empty Fuzzy region interpolating between the regions of classicality. Enforcing this constraint, is easily achieved from the outset by choosing an appropriate \(\alpha\):

\begin{align}
    \frac{\sum_{i \in I} w_i}{\sum_{i \in I} w_i+w_{max}} < \alpha \le 1 \label{tailored:a_constraints}
\end{align}

The constraints of $x_{min}\le x_F$ and $x_T\le x_{max}$ are automatically enforced by definition. 

\begin{align}
    \forall i \in I, & \phantom{\sum_{i \in I}} 0 \le w_i \label{tailored:w_constraints} \\
    \forall i \in I, & \phantom{\sum_{i \in I}} w_i \le 1 \label{tailored:w_simplify}
\end{align}

Note, however, that \(\alpha\) has lower bound~\(n / (n + 1)\) for \(n\) the largest number of operands of any connective in the knowledge graph. To maximise learning gradients, this global \(\alpha\) can therefore be initialised to \(\alpha = \frac{1}{2}\) (the maximum classical range) and update alpha accordingly.

As an extension, it may be useful to have an \(\alpha\) learnable per node, allowing each node to keep a relative interpretation of the truth. This would then require that the constraint \eqref{tailored:a_constraints} be enforced at every parameter update, though the exploration of this is future work.

The required constraints \eqref{tailored:w_constraints} can be achieved by clamping at zero\footnote{though allowing weights to become negative may be useful as a form of negation of the operand. This, however, would come with extra complications. Getting this right in future work would have many advantages, including a generic neuron that can learn many gate configurations --- equivalent to a logical operation --- solely from the weights}.
There are other optional orthogonal constraints on the weights, \eqref{tailored:w_simplify}, that may be useful for simplifying interpretability, which we do employ: scaling all input weights by $w_{max}$ --- giving some effect to updates calling for an increase beyond one. 
Alternatively, we may simply clamp at zero and one --- ignoring gradients that take weights out of the range.

An analysis of the tailored activation function, reveals that the output is independent of $\beta$. Therefore it can be eliminated or for convenience we analytically define it as:
\begin{align*}
\beta_C = \sum_{i \in I} w_i
\end{align*}
simplifying \eqref{tailored:fourpoints} to:

\begin{subequations}
\begin{align}
(x_{min}, y_{min}) & = (0, 0)\\
(x_F, y_F) & = (\beta_C - \alpha w_{max} , 1-\alpha)\\
(x_T, y_T) & = (\sum_{i \in I}  w_i \cdot \alpha, \alpha)\\
(x_{max}, y_{max}) &= (\beta_C , 1)
\end{align}
\end{subequations}

This simplification anchors $(x_{min},x_{max}) =(0,\beta_C)$ while weights fluctuate and ensures functional inverses behaves equivalently $(y_{min},y_{max}) =(0,\beta_C)$. 

\subsection{Satisfying the classical truth table}

\subsubsection{Design choice concerning \texorpdfstring{$\mathbf{x_F}$}{x\_F}}

Both the weighted fuzzy logic and tailored activation approaches identify the boundaries in a similar way, namely, by consulting the classical truth table. However, having introduced weights to operands, there is an interesting design choice that arises: when weights are equal and truth values are classical, the classical truth table should be perfectly obeyed, but when the weights are unequal there is a choice as to where to place the boundary. Should classicality still be imposed? For example, the False-Fuzzy boundary point, $x_F$, of conjunction could be defined by insisting that the classical truth table hold when the lowest weighted operand is \texttt{False} and the rest \texttt{True} (i.e.~when $w_{min}$'s operand is False --- derived from \ref{eqn:and-0*}). But then the undesirable behaviour of a very low weighted operand having undue influence is observed. Rather, it is proposed to pay attention to only the largest weight, $w_{max}$.

Firstly, it is true that non-classical behaviour is exhibited for a \texttt{False}, lowly weighted input, but then this is desirable. In the constrained approach, this is achieved by learning slacks to switch off constraints or not, Equation~\text{\ref{eqn:and-0*}*}. In the tailored activation approach slacks are not needed and the proposed choice of $w_{max}$ forces non-classicality. This fulfils the expectation of what a low weight means, in spite of fewer parameters to over-fit. This is also good for interpretability, even if it suffers a slight reduction in performance. Afterall, LNN's are supposed to be logically-based and if formulae need to be augmented, it is aesthetically more appropriate to do so logically rather than with extra uninterpretable parameters.

Secondly, the tailored-activation approach provides a second justification for $w_{max}$, namely the correct behavior when weights go to zero. Indeed for an $n$-ary conjunction, if all but one weight goes to zero, the identity function is recovered for the remaining operand. When all operands have weights equal to zero, the operator is effectively removed from its parent's formula --- illustrated in Figure~\ref{fig:identity}.

\subsubsection{Classical truth tables of other logical operations beside conjuction}

The procedure of choosing the boundary points of the tailored activation approach to reflect the classical truth table could be repeated for each logical operation. However, it is more elegant to simply rely on the universality result of classical logic, namely that the set consisting of the unary Not and the binary And gates is universal for classical binary computation.

Through universality everything behaves as expected on classical inputs: \((x \rightarrow y) = (\neg x \oplus y)=\neg(x \otimes \neg y)\), and \(((x \rightarrow 0) \rightarrow 0) = x\). The properties of \((x \otimes x)=(x \oplus x)=x\), also hold for classical inputs.

This is contrast to the constrained approached, where implication is defined via the residuum and not the logical definition. The residuum procedure is a sufficient (though not necessary) condition to ensure upward modus-ponens operates as expected. With the tailored activation approach, since classicality is perfectly captured, upward modus-ponens for classical values is guaranteed to work. The expected behaviour for non-classical values must be checked and indeed it is also respected.

\subsubsection{Design choice concerning arity}
The above discussion about universality would suggest that the $n$-ary conjunction should simply be decomposed into nested binary conjunctions, thereby retaining all the familiar classical properties. However, there is no unique way of performing the decomposition and an $n$-ary form of the tailored-activation function is easily definable. This, then becomes a design choice that must be made, namely when encountering an $n$-ary conjunction, how should it be represented?

The disadvantage of the $n$-ary form is that it is not equivalent to the decomposed form, which is the form that is guaranteed to behave classically. Nevertheless, it is the form we experiment with, since it results in a simpler logical tree.

\subsection{Alpha interpretation – size of classical regions}

The choice of hyper-parameter $(\alpha)$ is an interpretable design decision that provides useful functionality for the tailored activation function. It defines the size of the classical regions and influences the rate of learning.

\begin{figure}[!htbp]
    \begin{subfigure}[b]{0.49\textwidth}
	    \centering
	    \includegraphics[width=\textwidth]{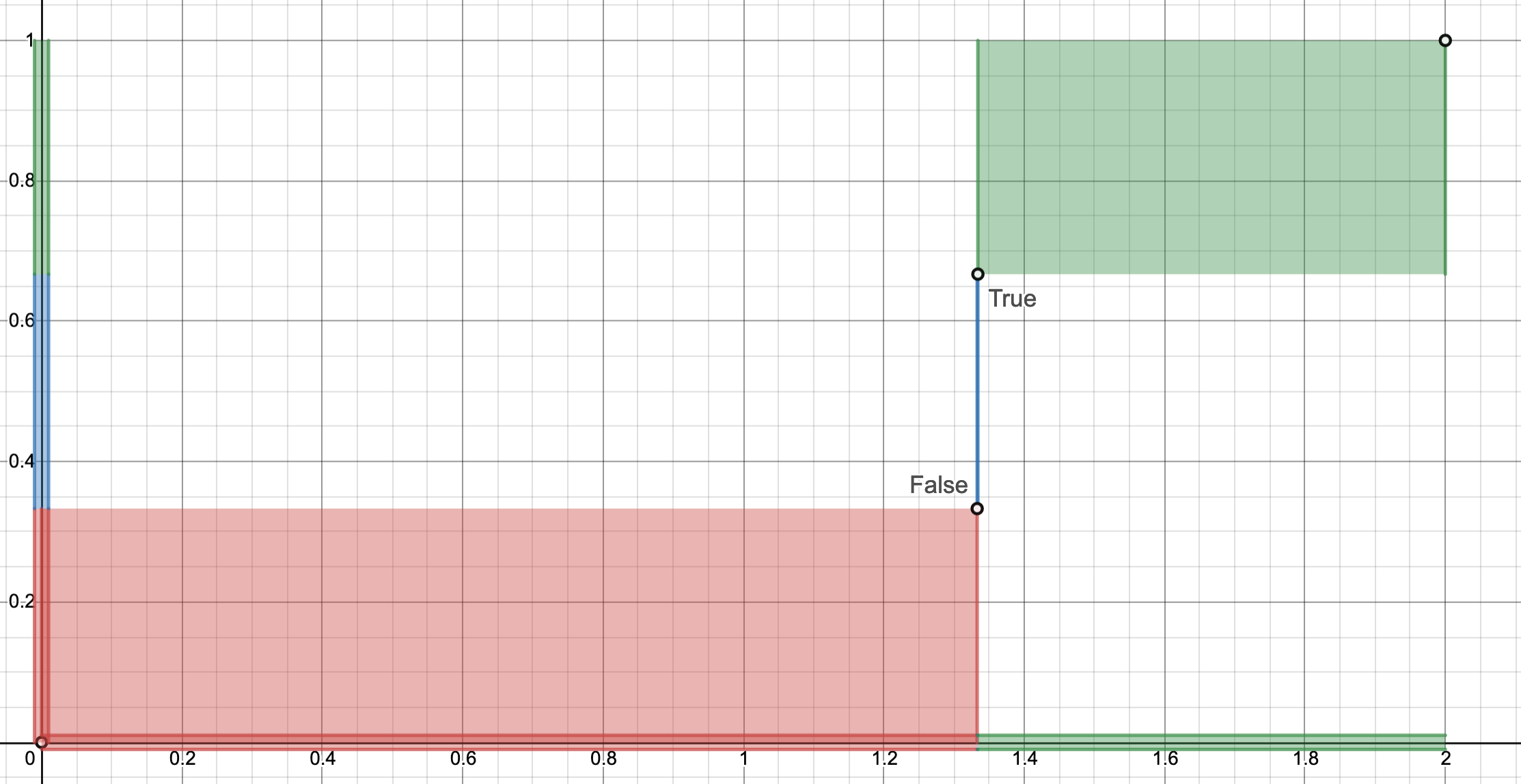}
	\end{subfigure}
	\hfill
	\begin{subfigure}[b]{0.49\textwidth}
	    \centering
		 \includegraphics[width=\textwidth]{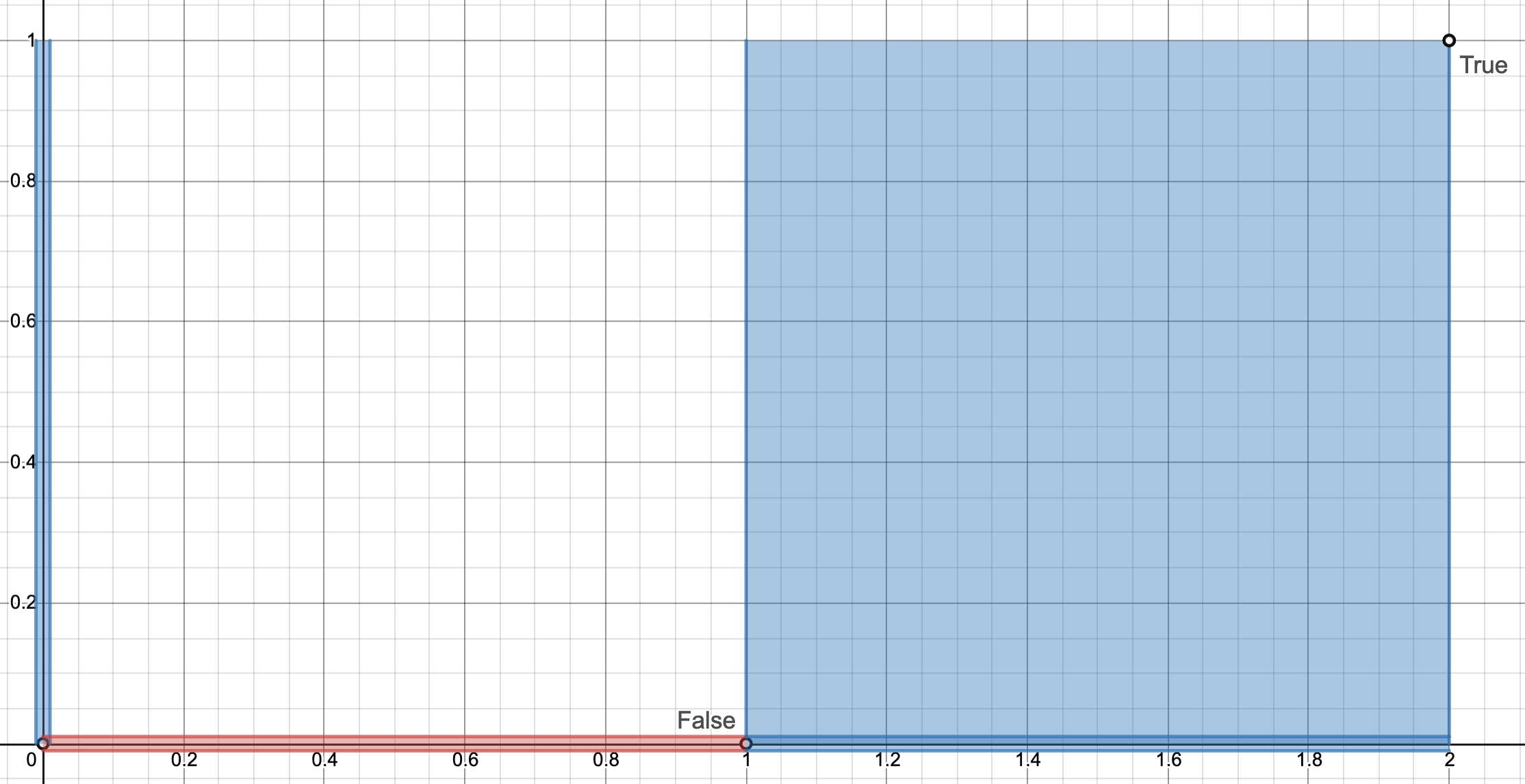}
	\end{subfigure}
    \caption{Regions of classicality as alpha varies; 
    left - minimum alpha, right - maximum alpha}
    \label{fig:regions_alpha}
\end{figure}

At the extrema of equation \eqref{tailored:a_constraints}, the choice of alpha defines where the database designer places their trust --- the facts or the rules. At its maximum \((\alpha=1)\), complete trust is placed in the database rules, with flat gradients in the classical regions ensuring that rules remain immutable regardless of the truth of classical input facts. This may be of interest to regulated environments, where the preservation of rules are demanded. At its minimum \((\alpha>n/(n+1))\), the designer trusts the facts over the rules. This may apply for machine generated rules that are noisy at best and incorrect at worst. Here, the LNN will maximise the gradients and quickly learn to correct rule weights. Reducing alpha further reduces the Fuzzy region and increases the range of classicality, highlighted in blue in figure \ref{fig:regions_alpha}. 

An alpha between the extrema will allow for variable learning, optionally with a global alpha or an alpha per node, where learning as fast as possible further requires node dependent alphas.

A global alpha places constraints on the weights at initialisation, constrained by the number of input arguments, whereas an alpha node requires constraints to be enforced for each weight update. An alpha per node additionally requires a rescaling of alphas between sub-formula since the relative meaning of truth changes between nodes. Further optimisation is also required for updating local alphas, as \eqref{eqn:and-1} may contain an embedded alpha on the left-hand side of the inequality --- i.e. representative of all sub-formulae, which differs from the formula/operator alpha on the right-hand side. To achieve this, an alpha per node potentially requires a two-stage update of weights and alpha's independently of one another while enforcing constraints --- future work.

There is an additional motivation for including an alpha per node that needs to be considered, namely, the training of an LNN with sub-symbolic/real-valued inputs. Under this configuration, a neural network may offer real-valued truth value bounds as facts, with LNN rules typically initialised as \texttt{True}. While the truth of these formulae are ordinarily used to self-supervise the LNN and minimise contradictions, they may also provide classical supervised target bounds to converge towards. With static bounds, weights parameters alone cannot change the truth of facts themselves. It must also be noted that while facts may be directly modifiable in the LNN framework, not all facts should be modifiable and the database designer may choose not to do so --- e.g.~given trusted fact bounds or operating the LNN in a regulated context. A node per alpha can therefore facilitate such a correction, i.e. turning the secondary fuzzy bounds in table \ref{tab:bound-states-secondary} into classical bounds by decreasing \(\alpha\). Additional loss terms may also be required to ensure a balanced alpha per node, i.e.~lowering \(\alpha\) to increase the learning rate and allowing the defuzzification of bounds vs. increasing \(\alpha\) to decrease the relative interpretability of truth between neurons.

\subsection{Activation function independence}
The tailored activation approach is defined using four boundary points. This permits logical operations to be expressed as any linear or non-linear function that intersects these points for upward inference.

\begin{figure}[!htbp]
    \centering
    \begin{subfigure}[t]{.49\textwidth}
        \centering
        \includegraphics[width=\textwidth]{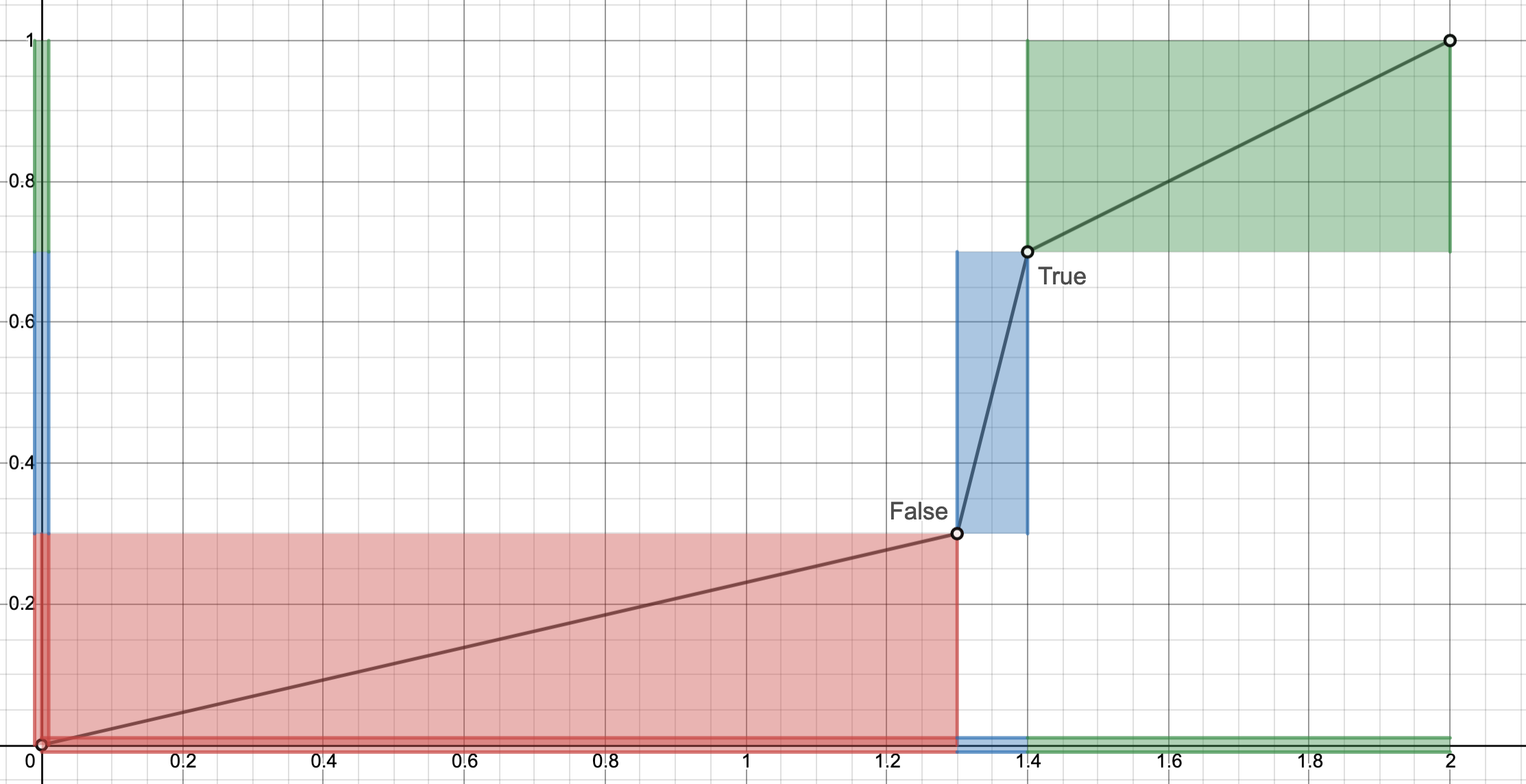}
    \end{subfigure}
    \hfill
    \begin{subfigure}[t]{.49\textwidth}
        \centering
        \includegraphics[width=\textwidth]{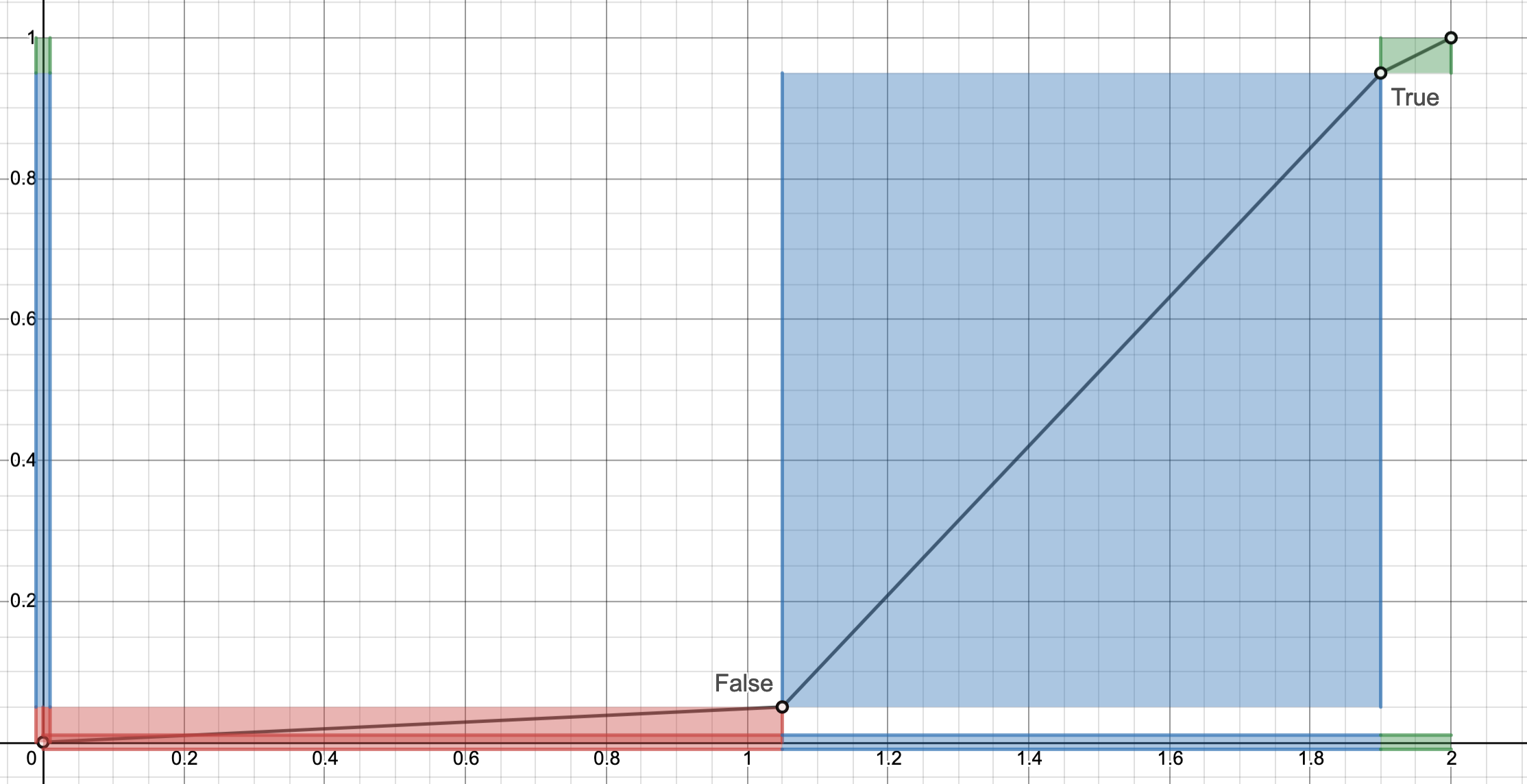}
    \end{subfigure}
    \caption{Piecewise linear activation function; 
    left: alpha = 0.7, right: alpha = 0.95}
    \label{fig:linear_activation}
\end{figure}

The unique piece-wise linear function and functional inverse, is defined as:
\begin{alignat*}{2}
	(x_{min}, y_{min}) &= (0, 0) \\
    (x_F, y_F) &= (\sum_{i \in I} w_i - \alpha w_{max},  1-\alpha) \\
    (x_T, y_T) &= (\sum_{i \in I}  w_i \cdot \alpha,  \alpha) \\
    (x_{max}, y_{max}) &= (\sum_{i \in I} w_i, 1)
\end{alignat*}
\begin{alignat*}{3}
    m_F &= \frac{y_F}{x_F} &&= \frac{1-\alpha}{\sum_{i \in I} w_i - \alpha w_{max}}\\
    m_{Z} &= \frac{y_T - y_F}{x_T - x_F} &&= \frac{2\alpha -1}{\alpha w_{max}-\sum_{i \in I} w_i(1-\alpha)} \\
    m_T &= \frac{y_{max} - y_T}{x_{max} - y_T} &&= \frac{1}{\sum_{i \in I} w_i}
\end{alignat*}
\begin{align}
    f_{\mathbf{w}}(x) &= \left\{\begin{array}{l@{\qquad}l}
        x\cdot m_F & \text{if } x_{min} \leq x\leq x_F,\\
        y_F + (x - x_F) \cdot m_{Z} & \text{if } x_F < x < x_T,\\
        y_T + (x - x_T) \cdot m_T & \text{if } x_T \leq x \leq x_{max}
    \end{array}\right.  \\
    f^{-1}_{\mathbf{w}}(y) &= \left\{\begin{array}{l@{\qquad}l}
        y/m_F & \text{if } y_{min} \leq y\leq y_F,\\
        x_F + (y - y_F) / m_{Z} & \text{if } y_F < y < y_T,\\
        x_T + (y - y_T) / m_T & \text{if } y_T \leq y \leq y_{max}
    \end{array}\right.  %
\end{align}

Figure~\ref{fig:linear_activation} illustrates how a simple linear interpolation between the four boundary points provides interpretability for \(\alpha, \beta, \forall i~w_i\) and a clear definition of classicality in the domain and range.  

Checks for boundary point convergence are also established as parameters fluctuate, ensuring that any division by zero is handled appropriately. This can be especially tricky when working with autograd packages because the existence of such an operation may be allowed --- returning \emph{inf} bounds with \emph{NaN} (not a number) gradients and disconnecting the parameter trace. 

\begin{figure}[!htpb]
    \begin{subfigure}[b]{0.49\textwidth}
	    \centering
	    \includegraphics[width=\textwidth]{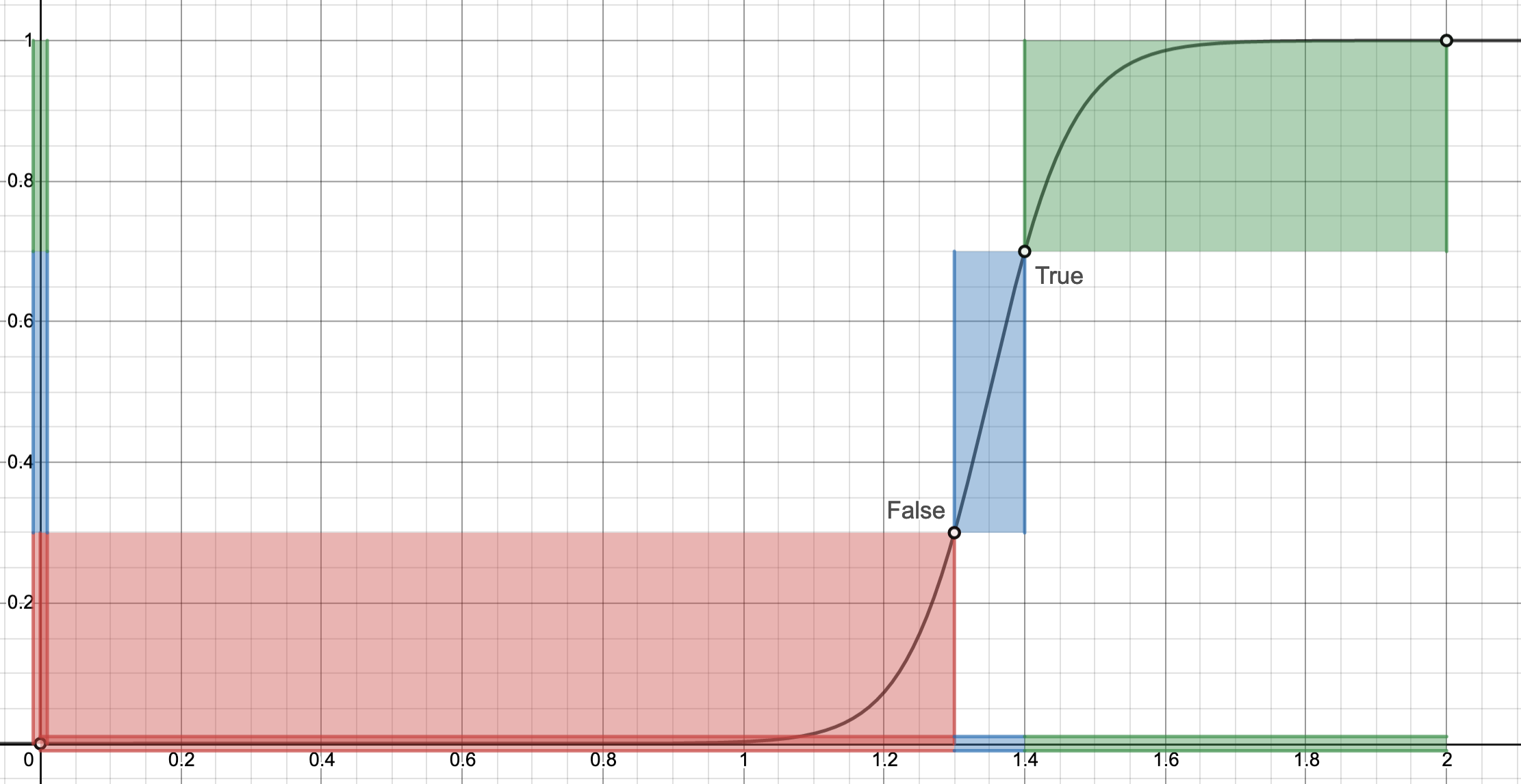}
	\end{subfigure}
	\hfill
	\begin{subfigure}[b]{0.49\textwidth}
	    \centering
		\includegraphics[width=\textwidth]{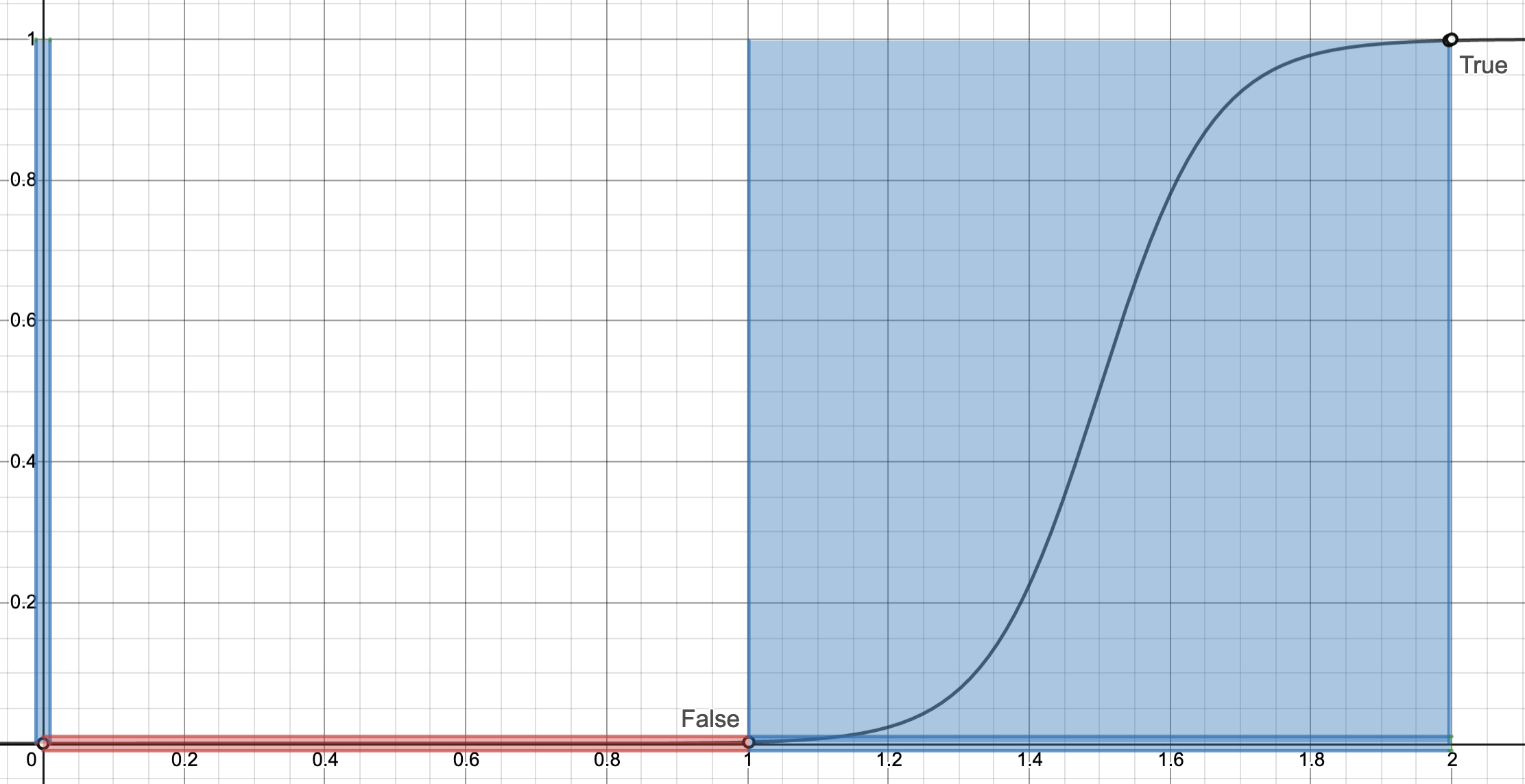}
	\end{subfigure}
    \caption{Nonlinear activation function; 
    left: alpha = 0.7, right: maximum alpha}
    \label{fig:nonlinear_activation}
\end{figure}

Figure \ref{fig:nonlinear_activation} demonstrates a tailored logistic activation function, a scaled and shifted sigmoid. This is one possible non-linear function that satisfies the \texttt{True} and \texttt{False} boundary requirements with smooth gradients everywhere. The function is defined as:

\begin{align*}
    f_{\mathbf{w}}(x) &= \frac{1}{1+e^{(-Ax+B)}} &
    A &= \frac{2\ln\left((1-\alpha)/\alpha\right)}{x_{F}-x_{T}} &
    B &= \ln\left(\frac{\alpha}{1-\alpha}\right)+Ax_F
    \label{logistic:definition}
\end{align*}

While the asymptotic nature of this activation function may prevent its logical operation from reaching maximally \texttt{True} or \texttt{False} points --- \(f_{\mathbf{w}}(x)= \{0,1\}\) --- or even allow it due to floating point round-off, the defined regions of classicality still permit the tailored-LNN to operate as usual. It must be noted that this formulation does not satisfy the boundary points under the extrema, $\alpha=1$. However, no learning would have taken place under this scenario and smooth gradient are therefore not required when operating classically. 

By scaling, shifting \emph{and rotating} the logistic function, we can conceivably rectify the intersection between all boundary points for a given alpha. While the special condition of requiring smoothness without learning, ~$\alpha=1$, may be possible by constructing a piecewise-sigmoid, with flat gradients in the classical ranges.

The incorporation of independent linear and non-linear activation functions therefore demonstrates the robustness of the tailored approach and also permits smooth-optimisation techniques to update LNN parameters --- future work.

\subsection{Proof that zero weighting leads to identity}

Intuitively, as a neuron's input weight begins to drop, so too should its influence on the activation function output. An all-but-one zero-weighted neuron should therefore retrieve only the value of the remaining input, represented by an identity activation function.

\begin{figure}[!htpb]
    \centering
	\includegraphics[width=0.4\textwidth]{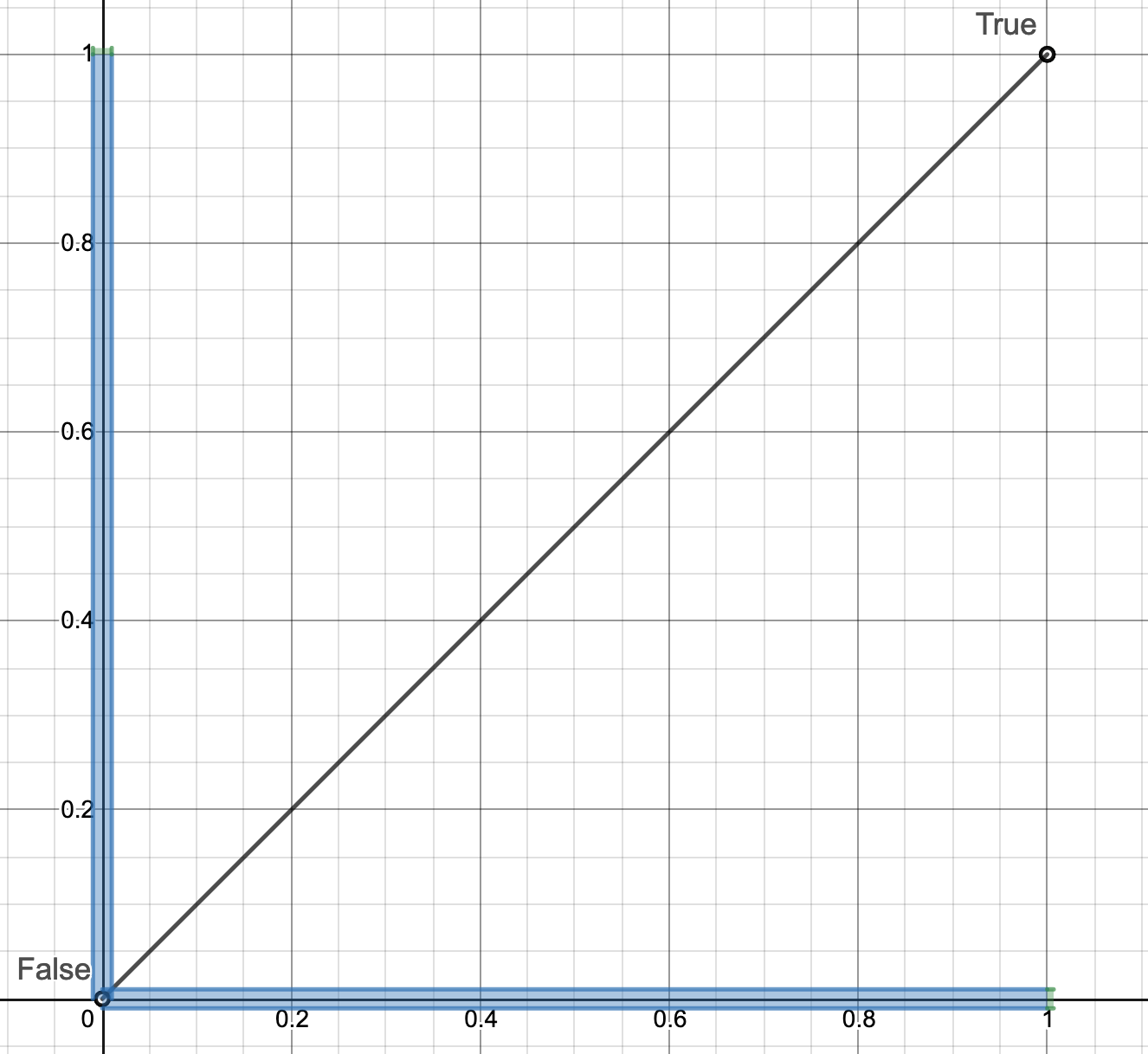}
    \caption{Activation function for a binary conjunction \((\alpha=1)\), with one weight = 0}
    \label{fig:identity}
\end{figure}

Here we provide proof from first principles that the design choice of $x_F$ as a function of the maximum weight, $w_{max}$, is well founded.

Without loss of generality, assume $w_A=0$, we now show that \(w_A A~\&~w_B B=B\) and further visualize it in Figure~\ref{fig:identity}:

\begin{alignat*}{4}
x_F &= \sum_{i \in I} w_i - \alpha w_{max} &&= w_B(1-\alpha) & \hspace{2cm} y_F &= 1-\alpha \\
x_T &= \sum_{i \in I} w_i \cdot \alpha &&= w_B\alpha & \hspace{2cm} y_T &= \alpha \\
x_{max} &= \sum_{i \in I} w_i &&= w_B & \hspace{2cm} y_{max} &= 1
\end{alignat*}
\begin{alignat*}{2}
    m_F &= \frac{y_F}{x_F} &&= \frac{1}{w_B}\\
    m_{Z} &= \frac{y_T - y_F}{x_T - x_F} &&= \frac{1}{w_B}\\
    m_T &= \frac{y_{max} - y_T}{x_{max} - y_T} &&=\frac{1}{w_B}
\end{alignat*}

This then leads to:

\begin{align}
    f_{\mathbf{w}}(w_A A + w_B B) &= \left\{\begin{array}{l@{\qquad}l}
        B & \text{if } x_{min} \leq x\leq x_F,\\
        B & \text{if } x_F < x < x_T,\\
        B & \text{if } x_T \leq x \leq x_{max}
    \end{array}\right.
\end{align}

Since the \texttt{False} boundary point is now defined using \(w_{max}\), identity will hold for an $n$-ary neuron where any number of inputs, but one, are zero-weighted. 

Whereas an all-zero weighted input will decrease the identity until it returns an impulse function, at $x=0$. This results in a \texttt{True} conjunction, \texttt{False} disjunction and a \texttt{False} implication (representative of an `if True then False' statement), making the tailored activation function consistent with its weighted logic counterparts in the upward direction. We note that without directly modifying neuron bounds, the tailored function is incapable of removing contradictions that arise in a dead-state (i.e. all-zero weighted inputs). In this state, it is possible to defer to a learnable $\beta$ and thereby remove inherent contradictions that may arise without the propagation of bounds --- e.g.~root disjunction and implication formulae initialised with all rules being \texttt{True} are self-contradictory due to downward inference from a universal quantifier.

\subsection{Downward inference}

Downward inference may be defined in two ways by weighted fuzzy logic, namely, a functional inverse (Section~\ref{sec:inference}) or logical upward inference (equations~\eqref{eq:lukasiewicz_and_forward}, \eqref{eq:lukasiewicz_or_forward}, \eqref{eq:lukasiewicz_implies_forward}). When given the bounds of the output of a neuron as well as the bounds of all the inputs except one, the functional inverse may proclaim the correct bound values associated with the excluded input in question, however, these values may be classically incorrect. Several of these scenarios exist, where the semantic mapping of real-valued bounds to classical states are erroneous. We highlight a few below.

For example, consider a binary conjunction with \texttt{Unknown} $[0,1]$ and \texttt{True} $[1,1]$ arguments, where the conjunction is itself \texttt{False} $[0,0]$. Classicality demands that downward inference under this configuration should result in the \texttt{Unknown} input being set to \texttt{False} --- since only a false and true input results in a false conjunction in the upward direction. The functional inverse, however, demands from the activation function configuration that a conjunction being exactly zero can only arise if all inputs were also exactly zero. All upper bound inputs are therefore pulled down to zero, irrespective of their current state. This incorrectly sets the \texttt{True} argument into a \texttt{Contradictory} state. A False Implication and True disjunction neuron behave similarly. 

Also consider a \texttt{False} $[0,1-\alpha]$ binary conjunction with one pre-existing \texttt{False} $[0,1-\alpha]$ argument, $A$. Bi-direction inference demands that $B$ remains the same by offering strictly \texttt{Unknown} $[0,1]$ bounds --- it should not be able to say anything about the source of falsity given an existing false input. The functional inverse may, however, aggressively tighten bounds beyond the base \texttt{Unknown} $[0,1]$ state and further yet out of the classically \texttt{Unknown} $[1-\alpha,\alpha]$ endpoint as alpha decreases.

To combat these concerns, downward inference can be seen as nothing but upward inference of a certain tautology. Since the tailored activation approach is classically correct, this automatically does the correct calculation for classical values. Beyond classical values, this can still be used but with a sanity check, namely the functional inverse.

To achieve this, the following tautology $B \rightarrow (A\rightarrow (A\& B))$ is conditionally applied, when trying to retrieve the value of $B$ (the truth of $A$ and $A\&B$ are assumed known and $B$ takes the role of `excluded' since we want to infer its value). The method we use `swaps' out the implication and replaces it with equality, whose truth is equivalent to the truth of the converse $(A\rightarrow (A\& B))\rightarrow B$:

\begin{table}[!htbp]
\caption{downward inference tautology}
\centering
\begin{tabular}{c | c | c | c | c | c} 
 \toprule 
 A & B & A\&B & \(A\rightarrow (A\&B)\) & \(B\rightarrow A\rightarrow (A\&B)\) & \(B=A\rightarrow (A\&B)\)\\
 \midrule
 0 & 0 & 0 & 1 & 1 & 0\\ 
 0 & 1 & 0 & 1 & 1 & 1\\ 
 1 & 0 & 0 & 0 & 1 & 1\\ 
 1 & 1 & 1 & 1 & 1 & 1\\ 
 \bottomrule
\end{tabular}
\label{table:tailored-downward-inference}
\end{table}

If we have a look at the truth table \ref{table:tailored-downward-inference}, we note that equality does not hold in row 1. Despite the tautology being true for row 2, it is indistinguishable from row 1 and cannot be enforced ($A$ and $A\&B$ are given, $B$ is being calculated hence is treated as not given).

\subsubsection{Learning from nothing}

The tautology in table \ref{table:tailored-downward-inference} has undesired non-classical behavior that is only brought to light when enforcing the non-classical characteristic of not `learning from nothing'. This test verifies that a Cartesian product of neuron inputs are incapable of retrieving tighter bounds from downward inference, given unknown neuron output bounds. It must be noted that a neuron may operate classically despite non-classical inputs, which further contributes to discrepancies.

By way of example, consider a classically \texttt{False} $[1-\alpha,1-\alpha]$ conjunction with $(\alpha=0.8)$ and one \texttt{True} $([\alpha,\alpha])$ argument, $A$. The functional inverse, not having any notion of classicality, offers \texttt{$\sim$False} bounds $B = [0.4, 0.4]$, while the logical inference, correctly enforces classicality by offering False bounds $[0.2, 0.2]$. Notice that only a maximally True argument, $[1,1]$, offers a False, $[0.2, 0.2]$, output for the functional inverse. Whereas the logical inference offers a $[0.1, 0.1]$ False bound. As the truth of the conjunction increases away from \texttt{False} $[1-\alpha,1-\alpha]$ to \texttt{$\sim$Unknown} $[0.5,0.5]$, the functional inverse is correctly non-classical, however, the logical inference is undesirably not so --- since the tautology first has to lift out of the classically-false range $[0.1,0.2]$ before offering a non-classical value to $B$.

This behavior therefore motivates for a logical downward inference that is conditioned on the functional inverse outside of the classical regions. For the above scenario, the functional inverse updates too aggressively for both bounds in the classical regions. While in the non-classical region, the lower bound is too aggressive but the upper bound is correct. Comparatively, the logical inference learns too much for the upper bound but just enough for the lower bound of the non-classical region. A combination of both approaches provide for more consistent bound behavior.

We therefore condition the logical inference tautology on the functional inverse by selecting the looser of the two bounds, while keeping the weight assignment consistent with \eqref{eq:or_downward_0} and \eqref{eq:or_downward_1}:
\begin{align}
    U_{x_j} & = \left\{\begin{array}{l@{\qquad}l}
        U_{x_j} & \text{if } U_{x_j} \geq \alpha\ or\ U_{x_j} \geq f^{-1}_{\mathbf{w}}(\bigotimes_{i \neq j} L_{x_i}, U_{\bigotimes_i x_i}),\\
            f^{-1}_{\mathbf{w}}(\bigotimes_{i \neq j} L_{x_i}, U_{\bigotimes_i x_i}) & \mathrm{otherwise}
    \end{array}\right.  \label{eq:conditioned_log_inv_U}\\
    L_{x_j} & = \left\{\begin{array}{l@{\qquad}l}
        L_{x_j} & \text{if } L_{x_j} \leq 1-\alpha\ or\  L_{x_j} \leq f^{-1}_{\mathbf{w}}(\bigotimes_{i \neq j} U_{x_i}, L_{\bigotimes_i x_i}),\\
        f^{-1}_{\mathbf{w}}(\bigotimes_{i \neq j} U_{x_i}, L_{\bigotimes_i x_i}) & \mathrm{otherwise}
    \end{array}\right.  \label{eq:conditioned_log_inv_L}
\end{align}

To demonstrate this behavior of not `learning from nothing', we test a binary conjunction $(A\&B)$ with predefined input bounds to an \texttt{Unknown} neuron. The computation follows both a upward and a downward pass, offering new bounds to the neuron and thereafter offering argument updates. 

The ground truths in Figure \ref{fig:learn_from_nothing} (on the left) is characterised by an identity function for output argument $B$. The output $B=A\rightarrow (A\&B)$ (on the right) illustrates the effect of logical downward inference conditioned on equations \eqref{eq:conditioned_log_inv_U} and \eqref{eq:conditioned_log_inv_L}, as $\alpha$ is varied. Ideally, a neuron should not `learn from nothing' across all real-valued bounds, however learning inside the classical regions doesn't impact the interpretation of the neuron and can be therefore be ignored --- grayed out regions.

\begin{figure}[tbp]
  	\begin{subfigure}{0.5\textwidth}
		\centering
		\includegraphics[width=\textwidth]{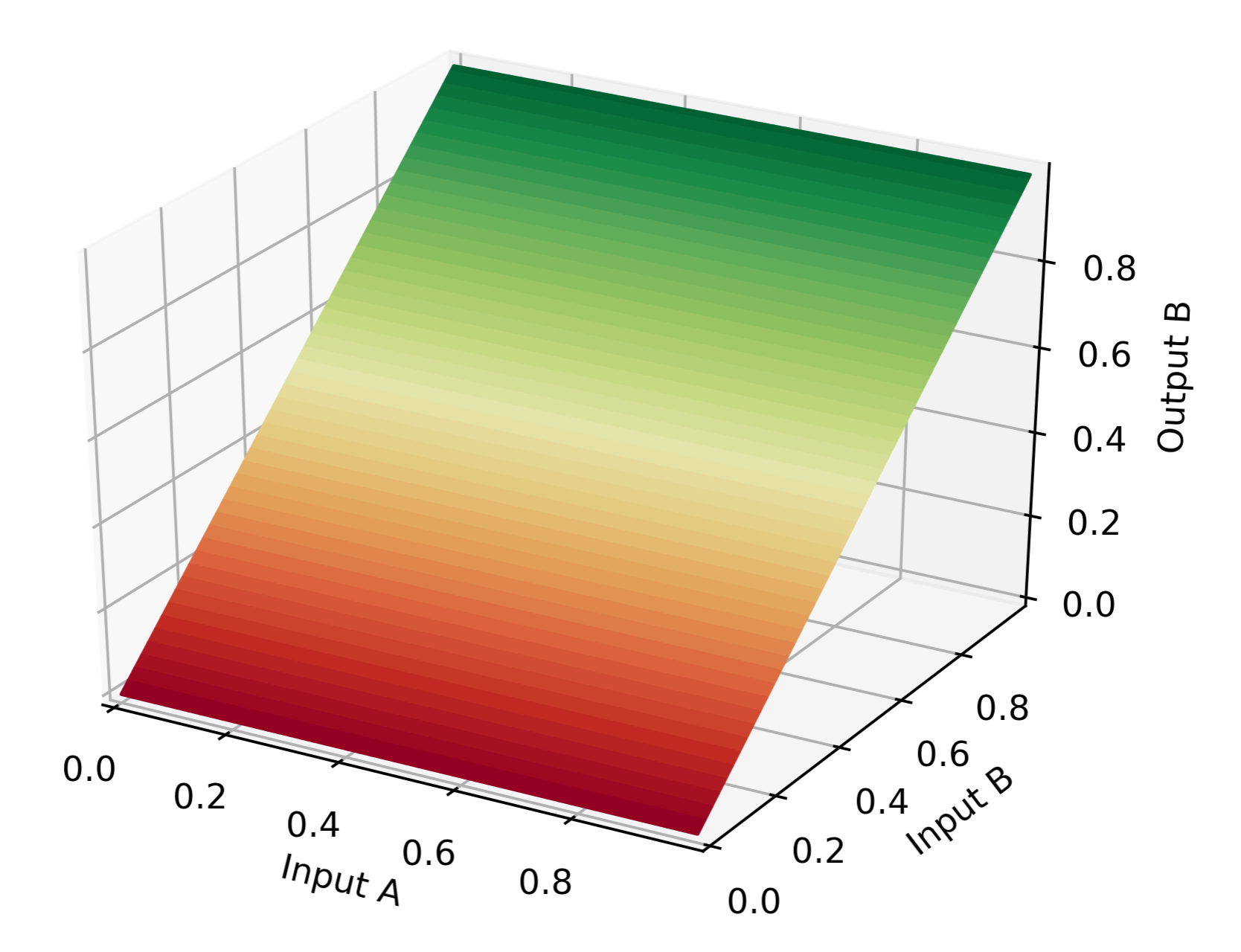}
	\end{subfigure}
	\hfill
	\begin{subfigure}{0.5\textwidth}
		\centering
		\includegraphics[width=\textwidth]{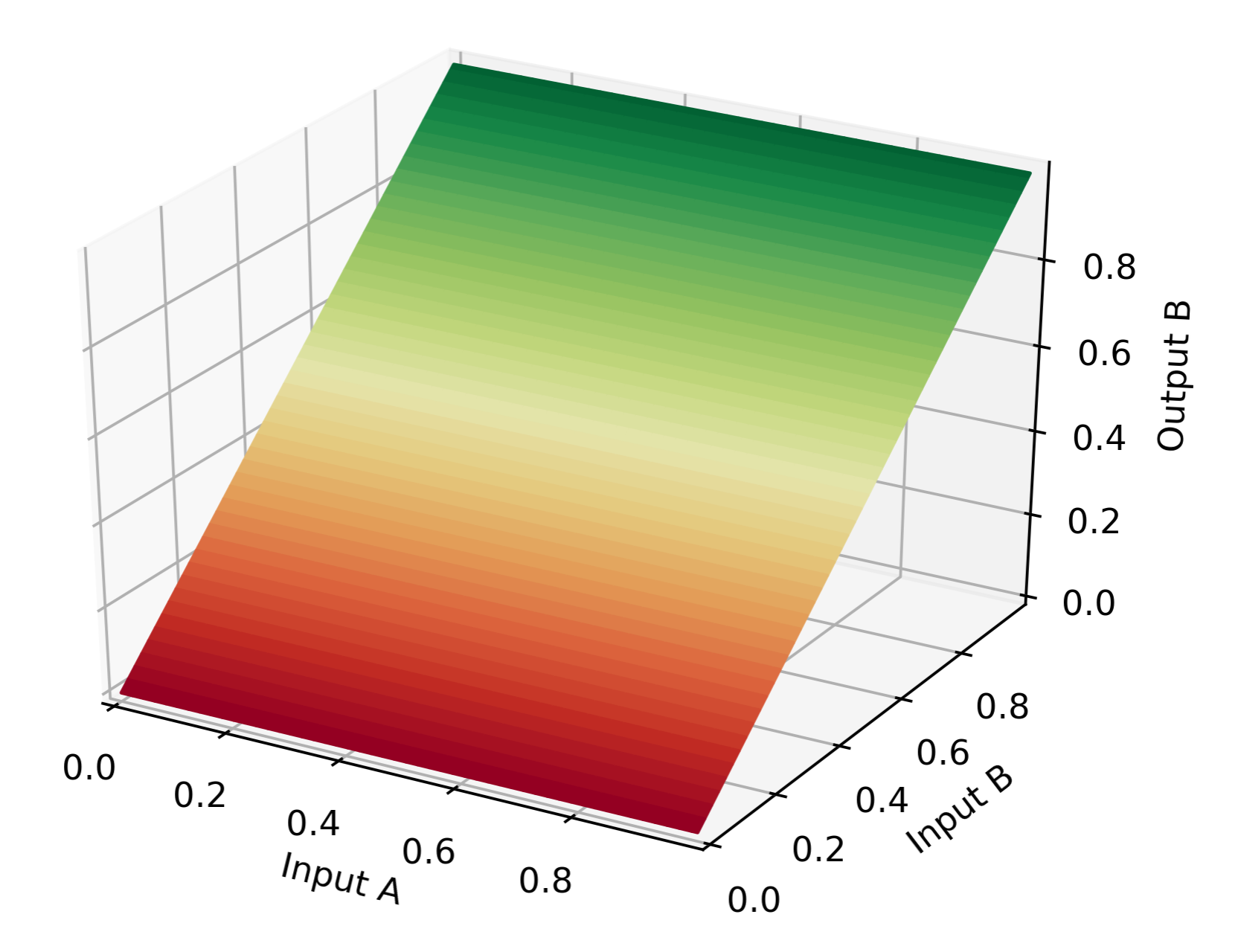}
	\end{subfigure}
	\\
	\begin{subfigure}{0.5\textwidth}
		\centering
		\includegraphics[width=\textwidth]{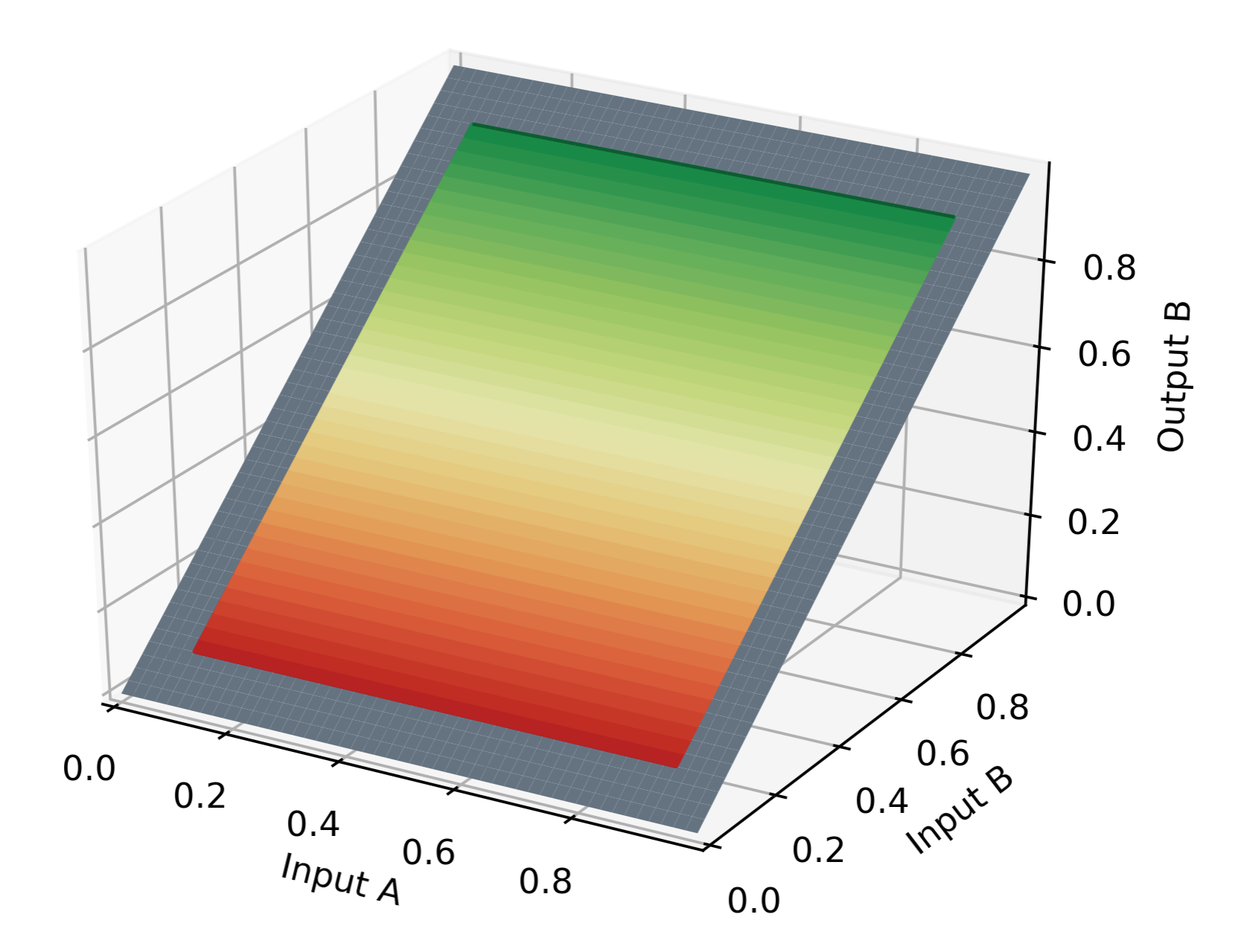}
	\end{subfigure}
	\hfill
	\begin{subfigure}{0.5\textwidth}
		\centering
		\includegraphics[width=\textwidth]{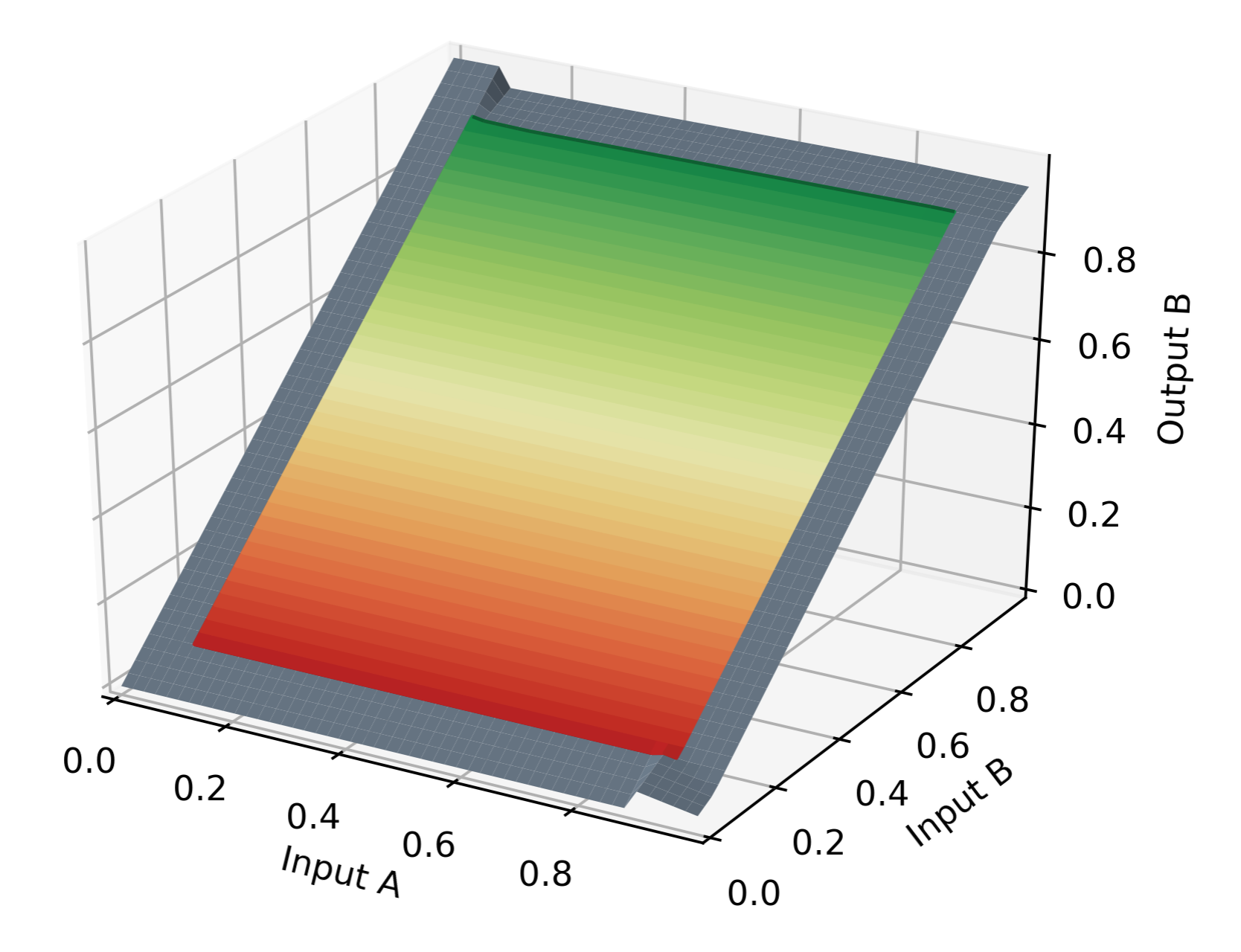}
	\end{subfigure}
	\\
	\begin{subfigure}{0.5\textwidth}
		\centering
		\includegraphics[width=\textwidth]{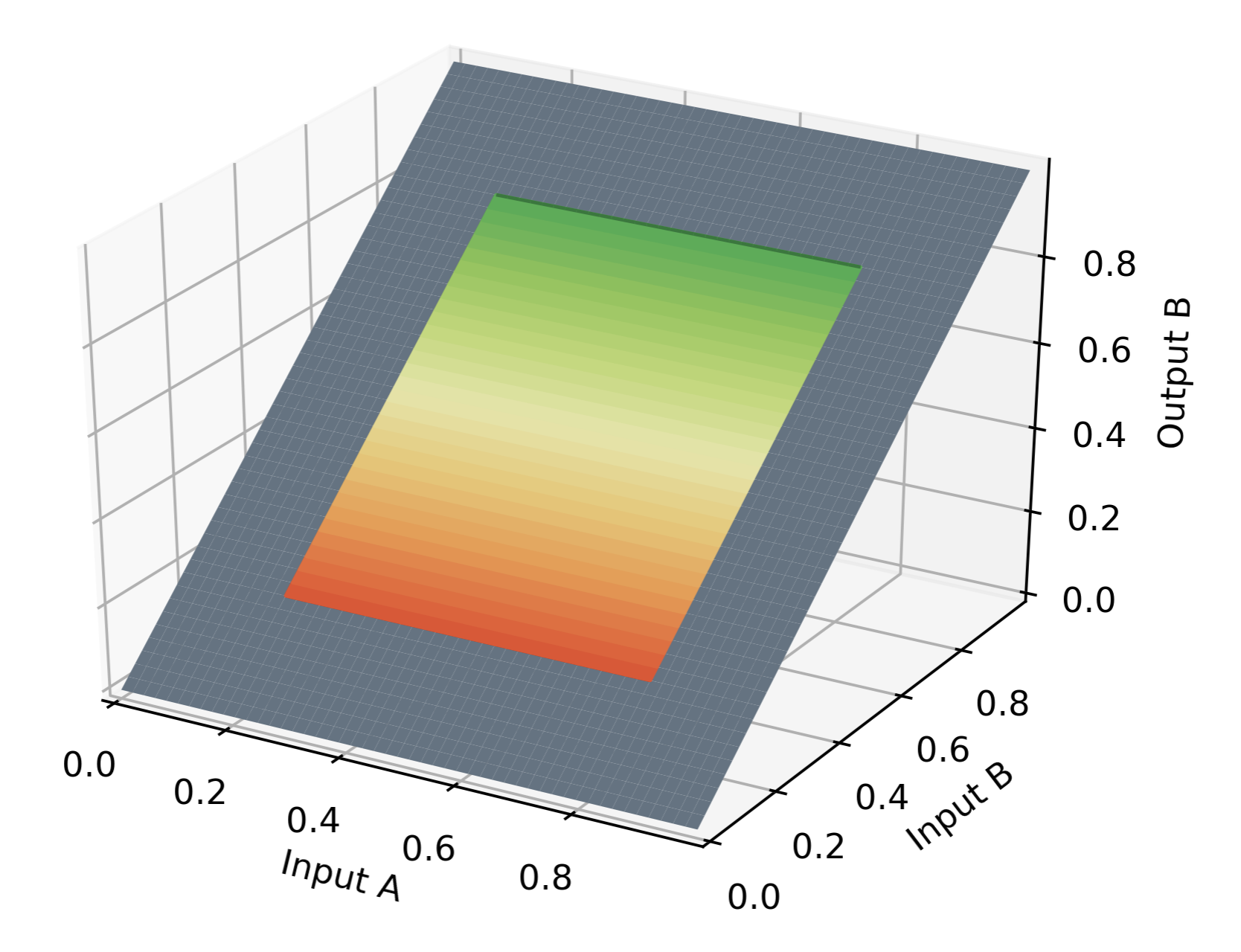}
	\end{subfigure}
	\hfill
	\begin{subfigure}{0.5\textwidth}
		\centering
		\includegraphics[width=\textwidth]{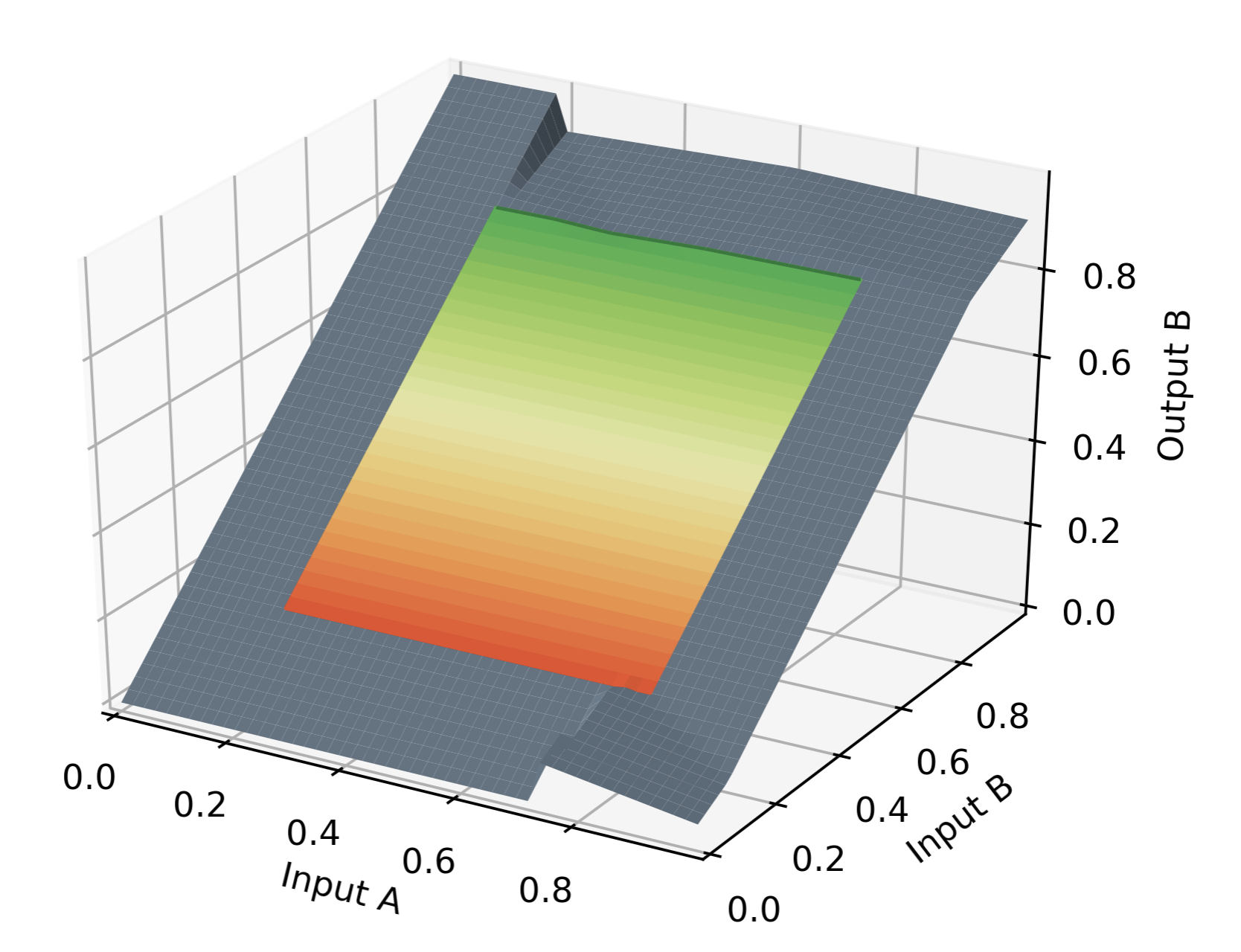}
	\end{subfigure}
	\\
	\begin{subfigure}{0.5\textwidth}
		\centering
		\includegraphics[width=\textwidth]{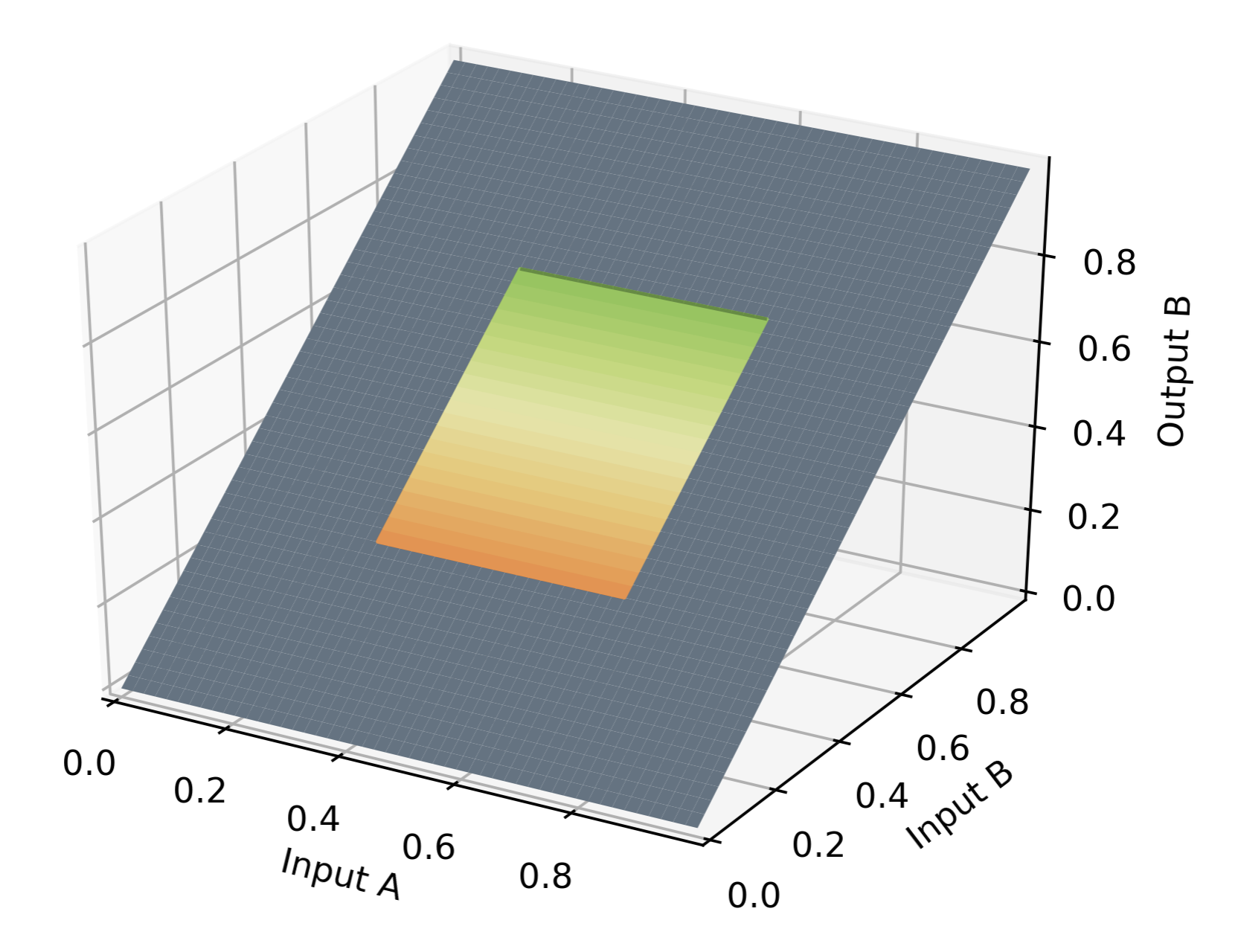}
	\end{subfigure}
	\hfill
	\begin{subfigure}{0.5\textwidth}
		\centering
		\includegraphics[width=\textwidth]{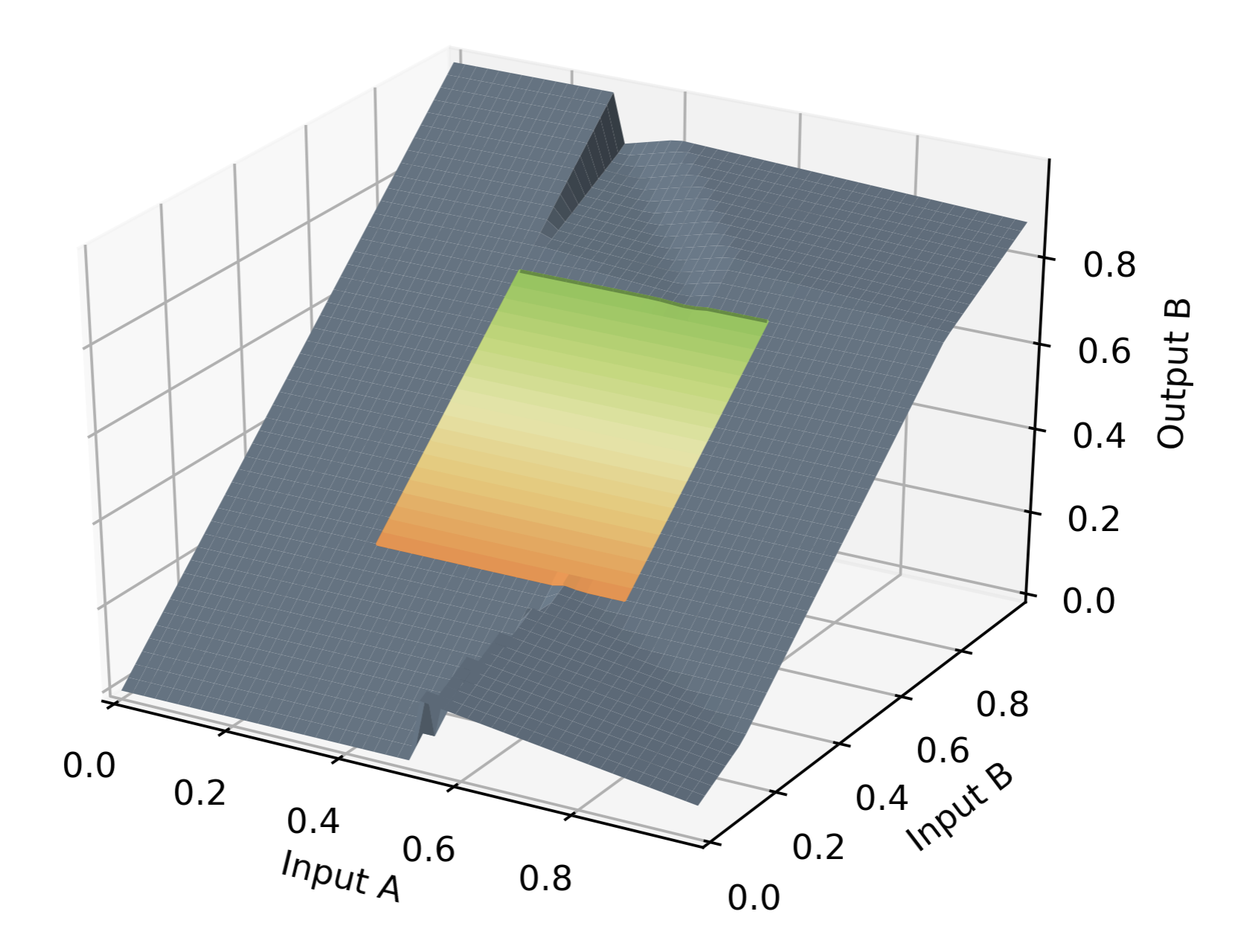}
	\end{subfigure}
  \caption{Testing that a neuron cannot learn from nothing; 
  left: ground truth, right: output from binary conjunction argument, top-to-bottom: $\alpha$ = [1, 0.9, 0.8, 0.7]}
  \label{fig:learn_from_nothing}
\end{figure}

An additional caveat of the conditioned logical inference tautology is that contradictions also require conditioning. While the tautology may aggressively tighten bounds within a classical region, they still ensure a classical operation between regions. An intra-classical contradiction (e.g. where the lower bound is greater than the upper bound, but both bounds are greater than $\alpha$ or less than $1-\alpha$) can therefore no longer be considered as such. Note that this does not break the interpretability of classical regions --- while the bounds may disagree on the relative truth value, both agree on the same classical state. Lower and upper bounds within the ranges of \([0\rightarrow1-\alpha, 0\rightarrow \text{lower bound}]\) or \([\text{upper bound}\rightarrow1, \alpha\rightarrow1]\) are therefore no longer viewed as contradictions and the relevant loss terms are updated accordingly.

\subsubsection{Modus ponens}
It is worth noting that downward-inference is a generalization of classical modus ponens (or modus tollens). What is interesting with downward inference as implemented above is that downward inference is nothing but upward inference. This is surprising from a classical logic theory point of view, since modus ponens is a meta-rule, where as above, downward inference is recast as upward inference. It seems to indicate that the other machinery that we have included, namely, bounds, upward inference, aggregation, tailored-activations and so on, contain sufficient `meta' structure to contain modus ponens naturally. This is a significant aesthetic vote-of-confidence for LNNs.

\subsection{Learning}

Weights are interpretably constrained by the tailored activation function and regularisation is no longer necessary. Losses from the predictive accuracy (supervised learning) and logical consistency checks (self-supervised learning) may be individually or simultaneously calculated. 

It may often be the case that only a subset of root formulae or subformulae labels are available to learn from, if at all. Theorem proving is one such example, where grounded facts and formulae are assumed to be \texttt{True}. Self-supervised learning can then be used to propagate bounds from facts to formulae and decrease logical inconsistencies by propagating bounds downwards. These inference steps may be computed until bounds converge. As bounds are updated, automatic differentiation packages may be used to back-propagate gradients and update weights to correct logical inconsistencies. Weight updates may be determined by, but are not limited to, standard gradient descent methods. 

After computing an inference step for a single neuron or across the entire network, parameters may be updated. The standard approach is to compute upward and downward inference until convergence, to accumulate gradients and thereafter do network-wide updates --- where partial derivatives identify the source of losses. An alternative is to compute inference in both direction until convergence while simultaneously updating weights, i.e. upon detection of a \texttt{Contradiction}, inference is arrested and parameters are updated from the contradiction source to the leaves. This prevents or constrains the propagation of contradictions further into the network.

In contrast to normal neural network training where parameter updates are network-wide, the LNN allows independent formulae, disconnected subgraphs or partitioned subgraphs to learn parameter updates independently, i.e. subgraph inference and parameter convergence may prevent select nodes from future updates and thereby reduce computational idling and improve the convergence rate.

Given classical inputs, the expected classical outputs for a self-supervised LNN is equivalent to that of a supervised LNN, albeit with a possibly longer training time. In stark contrast to ordinary machine learning approaches, the LNN and it's dynamic activation function direct the network neurons towards logical self-consistency, without requiring many labeled data points to do so.

\section{Gradient-transparent clamping}
\label{apndx:Gradient-transparent clamping}
This section describes gradient-transparent clamping used in the Smokers and Friends and LUBM learning and noise experiments. Real-valued logic typically bounds truth values to a defined range, which necessitates the use of clamping operations on the results. Automatic differentiation provides clamps that can perform the value clamping, but it also nullifies the associated gradient which can disable learning of involved parameters. Gradient-transparent clamping addresses this by fully recovering gradients outside of the allowable region by utilizing computations detached from the gradient-tracking computational graph to assert the clamping while keeping the original gradient information.

\subsection{Clamping in neural networks}
\subsubsection{Smooth bounded activations}
Neuron activation functions based on the logistic function, including sigmoid $(1+e^{-x})^{-1}$ and tanh $(e^x-e^{-x})/(e^x+e^{-x})$ ensure outputs are in a bounded range, typically [-1, 1] or [0, 1], while ensuring differentiability across the entire domain. Bounded neuron activations ensure that values in the neural network don't grow too large and that there is a degree of interpretability at the output of neurons such as for binary classifiers or real-valued logic.

Smooth bounded activation functions $f()$ like sigmoid and tanh have two-sided saturation where gradients $\lim\limits_{x \to \infty}\partial f(x)/\partial x=0$ tend to zero in the extremes. The vanishing gradient problem affects learning with these bounded activation functions where neurons are saturated or where the chain rule in deep networks produce a negligible product of small gradients. The negative effect of vanishing gradients is that it can significantly attenuate signals that gradient descent depends upon for learning, effectively shutting down learning at some neurons. Residual connection is a solution in deep neural networks that skips over a stride of layers to shorten the number of interacting gradients and reduce the vanishing gradient problem. Another approach is to choose an activation function that does not have small gradients, like the ReLU.

\subsubsection{ReLU}
A rectified linear unit (ReLU) $\max(0,x)$ has a one-sided saturation of zero where $x<0$ with an associated gradient of 0, and a linearity with gradient 1 otherwise. ReLU can address the vanishing and exploding gradient problems since its gradients can only be 0 or 1. However, this activation is affected by the ``dying ReLU'' problem where, if a preactivation distribution is limited to the negative domain, then the input gradients never propagate through the ReLU such that it cannot learn. Leaky ReLU and various forms of parameterized ReLU have been proposed to address the absence of negative domain gradients.

The lower bound on the ReLU output can be shown to be useful when neurons are viewed as concept detectors that give a degree of presence of a feature through the linearity, but only a uniform statement about the absence of a feature through the rectification.
The ReLU lower bound can also possibly prevent correlated neurons by not propagating negative values corresponding to degrees of absence of a learnt feature. The ability to stop signal propagation could also reduce noise and extraneous information replication in the network.

The computational benefits of ReLU during learning is attributed to the gradient sparsity introduced by its zero gradients, which means that gradient calculation only needs to operate on a subset of the neurons. In addition, the two possible gradients of ReLU are constant scalars and no involved computation is required to determine its gradients.

\subsubsection{Information and gradient dropout}
Neural networks are universal function approximators that can embody functions of any complexity if provided with adequate capacity and a large enough set of neurons and interconnections between the neurons. Determination of the exact capacity required for arbitrary functions could be an intractable problem, so normally neural networks are given excess capacity or grow larger over a series of hyperparameter optimizations.

The problem of overfitting is where a trained neural network can not reach its objective performance on previously unseen data, so it can fail to generalize. Redundancy or excess capacity in neural networks allow for overfitting where the input training data could be memorized to optimize the stated objective. Various regularization methods like data augmentation and dropout have been proposed to prevent overfitting, which introduce random variations in the input data in the case of data augmentation or randomly remove neurons during training epochs.

Bounded activation functions that can saturate to an output and gradient of 0, such as sigmoid and ReLU could be considered to be performing a combination of information and gradient dropout. This can act as a regularization method that effectively removes capacity or disconnects certain neurons under the conditions of 0 saturation. The empirical success of dropout and ReLU has been attributed to the regularizing effects of the sparsity these methods introduce.

\subsection{Gradient-transparent clamping}
\subsubsection{Clamping in automatic differentiation}
The lower bound on ReLU $\max(0,x)$ performed through the clamp, min or max functions of automatic differentiation systems typically disconnects $x$ from this node in the computational graph and replaces it with a new node valued 0 with no prior computational history attached. This means that the loss gradient could be $\partial f(x)/\partial x=0$ even though $x$ has been involved in the determination of the clamping outcome. This gradient statement says that any change to $x$ will leave the output unchanged, which is no longer the case if an update to $x$ is sufficiently large to bring it back to the allowable domain.

Severing the argument when clamping so that it is no longer represented in the computation could introduce inaccuracies in the interpretation of calculated gradients. If the subdifferential for 0 is set to the right-hand discontinuity, so that 0 gives the 1 gradient of the linearity, then it can state that the ReLU will decrease to a negative value for gradient descent at a linear rate. So if $x=0$ the projected result of gradient descent with a step of $s=-0.1$ could be $x'=x+s=-0.1$ given a linear gradient, although this update could not materialize since it is already clamped at its starting value. So non-zero gradients provided do not give guidance on when the clamp could be encountered, which shows that even the normal application of clamping could result in incorrect gradient interpretations especially if function linearity is assumed.

Gradients provided by automatic differentiation both inside and outside of the clamped region are then subject to interpretation, and there is reliance on the learning update mechanisms to manage step sizes and momentum to navigate the pathological loss landscape introduced by clamping. Otherwise clamping can inform learning with gradients that indicate the loss can be changed by updating a parameter, even though clamping is then turned on at a fraction of the parameter step size. Clamping could also indicate that no learning is possible when the output is currently clamped, even though clamping turns off for a small parameter step in the right direction such that it does learn.

\subsubsection{Information dropout and gradient transparency}
Bounded activation functions that saturate at 0 can be said to perform regularization under certain preactivation conditions by stopping the propagation of dynamic information and gradients. Approaches to information and gradient dropout are especially relevant to neural networks with redundant capacity as a means of introducing sparsity to preventing overfitting and to address the vanishing gradient problem.

In the case of LNN as a one-to-one mapping of the syntax tree of a logical program, there is a significant difference to normal neural networks where hidden neurons do not necessarily relate to external concepts and where hidden layer widths can be arbitrarily sized. The problems of overfitting and dealing with redundancy are thus not of primary concern with LNN, although logical neurons and subnetworks could be copied to expand parameter capacity. The motivation for introducing gradient sparsity for normal neural networks through ReLU does not readily apply to LNN.

The vanishing gradient problem can be altogether avoided in LNN for certain loss terms that can be arrested neuron-wise so that a shorter gradient explanation can be calculated. However, it can be more complex for combined losses with supervised task-specific objectives calculated simultaneously over different batch dimensions representing alternate universes of truth value assignments. Smooth activations should then still be avoided even for LNN to address the vanishing gradient problem.

Information dropout through clamping in real-valued logic serves to keep truth values within the interpretable permissible range of the logic. \L ukasiewicz logic applies clamping aggressively to ensure logical soundness, so the logic operation output values need to be bounded. However, the associated gradient outside of the clamped region need not necessarily be bounded as well. In fact, the severing of the gradient outside the clamp prevents learning for a significant part of the operating range of the logical neuron activation.

Gradient-transparent clamping provides information dropout or value clamping while also leaving gradients unaffected so that any gradient calculation effectively ignores clamping. The benefit is that the primary purpose of clamping is still fulfilled, namely clamping output values, but that the full output domain retains gradients as if no clamp was applied. The ``dying ReLU'' problem can also be solved with gradient-transparent clamping, since learning can receive gradients across the full operating range such that gradient descent always has the optimization direction available for a parameter.

The perceived errors this would introduce would be of exactly the same nature as those made by also clamping the output value. In particular, a gradient inside the bounds could indicate that the output will change even when the clamp then immediately applies into the update step, yet this was not reflected in the gradient. Similarly, a gradient outside the bounds given by gradient-transparent clamping could indicate an output change, but then the clamp still applies after the update step. The learning optimizer should manage the step sizes and momentum to ensure robust learning under these scenarios, both with and without gradient-transparency.

\subsubsection{Gradient-transparent clamping}
Automatic differentiation normally severs a computed node when clamping the output and substitutes it with a new node representing the applied bound. Removing the computed node also removes trace of its involvement in the current output, which is undesirable as it was indeed part of the calculation. The idea is then to retain the input to the clamping operation as part of the resultant node in the computation graph. This necessitates the definition of a new clamping function that performs value clamping while keeping gradients as if no clamp was applied.

For a lower bound clamp at $x_{\min}$ the value clamping is performed when $x<x_{\min}$ to give an output of $x_{\min}$. This can be calculated as $x - \min(0,x^*-x_{\min})$ where $x^*$ denotes a copy of $x$ that has been detached from the computational graph and thus carries no history. By obtaining the disconnected value $x^*$ the value clamp can still be applied without the destructive interference otherwise caused if a tracked copy of the same node was used. Automatic differentiation tools normally do allow for detached value replication, and there are various capabilities that allow for value copy and reuse. The superscript $(1)$ denotes that gradients in the clamped regions are unscaled, as opposed to the linear scaling performed by gradient supplantation that will be explained later.
\begin{align}
    \min(x_{\max}, x)^{(1)}&=x - \max(0,x^*-x_{\max}),\\
    \max(x_{\min}, x)^{(1)}&=x - \min(0,x^*-x_{\min}).
\end{align}

Clamping can then be replaced with gradient-transparent clamping to recover gradients across the entire output range while still providing the intended value clamping. The provision here is that the parameter update strategy should consider the possibility of no output change for an insufficient step size, despite the gradient in the value-clamped region stating that an output change was expected. Again, this downside is akin to the gradient interpretation difficulties faced when clamping values and being near the boundary on the inside of the allowable region and stepping outside, in which case an output change was also expected but did not realize when the clamping immediately applies.

\subsection{Clamping in fuzzy logic}
\subsubsection{Real-unit interval}
The continuous t-norms of fuzzy logic perform binary operations on the real-unit interval [0, 1] that represent infinite-valued logic where truth values can be interpreted as ambiguous mixtures between \texttt{True} and \texttt{False}. Clamping the result of fuzzy logic operations like those of \L ukasiewicz logic is necessary to ensure truth values remain in the interpretable range of [0, 1]. The min() and max() functions clamp the computed values in all \L ukasiewicz logic operators, otherwise the result can be outside of the permissible truth value range of [0, 1]. For the \L ukasiewicz conjunction upward pass the clamping then bounds the output and provides the associated clamped gradients as follows.
\begin{align}
    \textstyle \bigotimes^{\beta}_{i \in I} x_i^{\otimes w_i} & \textstyle = \max(0, \min(1, \beta - \sum_{i \in I} w_i (1 - x_i))),\\
    \frac{\partial \left(\bigotimes^{\beta}_{i \in I} x_i^{\otimes w_i}\right)}{\partial \beta} & = \left\{\begin{array}{l@{\qquad}l}
  1 & \mathrm{if}~0\le\bigotimes^{\beta}_{i \in I} x_i^{\otimes w_i}\le 1,\\
  0 & \mathrm{otherwise},
    \end{array}\right.\\
    \frac{\partial \left(\bigotimes^{\beta}_{i \in I} x_i^{\otimes w_i}\right)}{\partial w_i} & = \left\{\begin{array}{l@{\qquad}l}
  (x_i - 1) & \mathrm{if}~0\le\bigotimes^{\beta}_{i \in I} x_i^{\otimes w_i}\le 1,\\
  0 & \mathrm{otherwise}.
    \end{array}\right.
\end{align}
Note that for a \texttt{True} input $x_i=1$ the corresponding gradient for the output in terms of $w_i$ is $(x_i-1)=0$, which means that it provides no gradients for updating $w_i$ even when no upper bound clamping is applied. The bias $\beta$ will have to be adjusted instead to obtain the desired output, such as a \texttt{False} output when all inputs are \texttt{True}.

\subsubsection{Gradient-transparent clamping}
Note that the gradients are non-zero only inside the real-unit interval, so any clamping that is applied normally nullifies the gradient information so that learning receives no useful gradient information. This is problematic given that a major part of the operating range of the logical conjunction undergoes clamping, depending on the parameters. The solution to this problem is to only perform value clamping but leave the gradients untouched, through gradient-transparent clamping. In this case the output gradient for $\beta$ is always 1 and for $w_i$ it is always $(x_i-1)$, both inside and outside the clamped region.

The parameterization of the proposed logical neurons is especially well-behaved and bounded, with $0\le w_i\le1$ and $0\le\beta\le 1+\sum_i w_i$ normally applying without loss of functionality. This is because any parameter configuration from these constraints can allow the useful functional range of bounded ReLU to be accessed. Consequently, the learning optimizer can set robust update step sizes and limit parameter updates, which means that the presence of transparent gradients from parameters in clamped situations requiring larger update steps can be handled feasibly.

The contradiction loss $\sum_j \max(0,L_j(\mathbf{\beta},W) - U_j(\mathbf{\beta},W))$ also involves clamping with the intent that a loss term should be activated only when the lower bound $L_j$ is higher than upper bound $U_j$. If there are no contradictions, we would not want to make any parameter updates based on this loss. So the clamping can be performed such that the gradients are also clamped and a zero gradient set when there are no contradictions. This means normal clamping can be performed, especially for loss terms in cases where no learning should be performed.

\subsubsection{Downward pass}
Downward pass at a weighted \L ukasiewicz conjunction can be determined by firstly unclamping its output, which involves changing a clamped lower bound at $L_{\otimes}=0$ to the minimum unclamped value $L_{\otimes}=\beta - \sum_{i \in I} w_i$ and similarly changing a clamped upper bound at $U_{\otimes}=1$ to the maximum unclamped value $U_{\otimes}=\beta$. Unclamping and extending the bounds where necessary ensures that all downward pass explanations can be generated.
\begin{align}
    \textstyle \bigotimes^{\beta'}_{i \in I} x_i^{\otimes w_i} & \textstyle = \max(0, \min(1, \beta - \sum_{i \in I} w_i (1 - x_i)))\\
     & \textstyle = \max\left(0, \min\left(1, \bigotimes^{\beta}_{i \in I} x_i^{\otimes w_i}\right)\right).
\end{align}
Unclamping $x'=\min(x_{\max}, x)$ is simply $\max(x', x)$ and similarly for $x'=\max(x_{\min}, x)$ we have $\min(x', x)$ as unclamping operation. Here the $x$ values are recalculated with gradient-tracking, although during clamping they are detached in the subtraction. Note that disabling gradient-tracking is not required for unclamping as it was for clamping, since there is no subtraction or its destructive interference that can nullify gradients. The unclamping of the bounded output to its extremes can then continue as
\begin{align}
    \textstyle L_{\otimes} & \textstyle = \min\left(\bigotimes^{\beta'}_{i \in I} x_i^{\otimes w_i},  \beta - \sum_{i \in I} w_i\right),\\
    \textstyle U_{\otimes} & \textstyle = \max\left(\bigotimes^{\beta'}_{i \in I} x_i^{\otimes w_i}, \beta\right).
\end{align}

The functional inverse can then be determined with known values for the conjunction output and all but one of the inputs $i$, where backward inference can then calculate a value for the remaining subject input $j$. The calculated proof for the target input is then also clamped to the real-unit interval before proof aggregation is performed.
\begin{align}
    \textstyle \bigotimes^{\beta}_{i \in I} x_i^{\otimes w_i} & \textstyle = \beta - \sum_{i \in I} w_i (1 - x_i)\\
    \textstyle w_j (1 - x_j) & \textstyle = \beta - \sum_{i \in I\setminus j} w_i (1 - x_i) - \bigotimes^{\beta}_{i \in I} x_i^{\otimes w_i}\\
    \textstyle (1 - x_j) & \textstyle = \beta / w_j - \sum_{i \in I\setminus j} w_i (1 - x_i) / w_j - \left.\left( \bigotimes^{\beta}_{i \in I} x_i^{\otimes w_i}\right ) \right/ w_j\\
    \textstyle x_j & \textstyle = \max\left(0, \min\left(1, 1 - \beta / w_j + \sum_{i \in I\setminus j} w_i (1 - x_i) / w_j + \left.\left( \bigotimes^{\beta}_{i \in I} x_i^{\otimes w_i}\right ) \right/ w_j\right)\right).\label{eqn:b-back1}
\end{align}
The conjunctive syllogism $(p\otimes \neg(p\otimes q)) \rightarrow \neg q$ provides logical inference to determine a backward inference result equivalent to the functional inverse since
\begin{align}
    \textstyle x_j & \textstyle = 1 - \beta / w_j + \left(1 - \left(1-\sum_{i \in I\setminus j} w_i (1 - x_i)\right)\right) / w_j + \left.\left( \bigotimes^{\beta}_{i \in I} x_i^{\otimes w_i}\right ) \right/ w_j\\
     & \textstyle = 1 - \beta / w_j + \left.\left(1 - \bigotimes^{1}_{i \in I\setminus j} x_i^{\otimes w_i}\right) \right/ w_j + \left.\left( \bigotimes^{\beta}_{i \in I} x_i^{\otimes w_i}\right ) \right/ w_j\\
     & \textstyle = {^{\beta/w_j}}\left( \left(\bigotimes^{1}_{i \in I\setminus j} x_i^{\otimes w_i}\right)^{\otimes 1/w_j} \rightarrow \left(\bigotimes^{\beta}_{i \in I} x_i^{\otimes w_i}\right)^{\oplus 1/w_j} \right).
\end{align}
where the weighted \L ukasiewicz implication is generally defined as
\begin{align}
    {}^{\beta}(x^{\otimes w_x} \rightarrow y^{\oplus w_y}) & = \max(0, \min(1, 1 - \beta + w_x (1 - x) + w_y y)) \\
    & = {}^{\beta}((1 - x)^{\oplus w_x} \oplus y^{\oplus w_y}). \notag
\end{align}
Note that there is a negation of the partial conjunction which also involves a swapping of its lower and upper bounds in the backward inference calculation.
The unclamped gradients obtained over the entire operating range with gradient-transparent clamping calculates as follows from \ref{eqn:b-back1}
\begin{align}
    \frac{\partial x_j}{\partial \beta} & = -1/w_j, \\
    \frac{\partial x_j}{\partial w_i} & = \frac{1-x_i}{w_j}, \\
    \frac{\partial x_j}{\partial w_j} & = -\frac{\beta}{w_j^2} - \sum_{i \in I\setminus j} \frac{w_i (1 - x_i)}{w_j^2} - \frac{\textstyle \bigotimes^{\beta}_{i \in I} x_i^{\otimes w_i}}{w_j^2} \frac{\partial \left( \bigotimes^{\beta}_{i \in I} x_i^{\otimes w_i}\right)}{\partial w_j}.
\end{align}
For weights $0\le w_j \le 1$ smaller than 1 all backward inference gradients $\lim\limits_{w_j \to 0}\partial x_j/\partial \beta=\infty$, $\lim\limits_{w_j \to 0}\partial x_j/\partial w_i=\infty$, and $\lim\limits_{w_j \to 0}\partial x_j/\partial w_j=\infty$ tend to become large as the weights become smaller. Gradient clipping can  deal with these large gradients when performing learning updates, or reverting to clamping gradients as per usual can also be considered.

\subsubsection{Gradient supplantation}\label{apndx:gradsup}
The introduction of a threshold-of-truth $\alpha$ allows for non-zero gradients in the regions where clamping will normally be performed. A tailored piecewise linear activation function provides a range of $[0,1-\alpha]$ to express the clamped \texttt{False} domain of a logical conjunction, thus it has a positive non-zero gradient associated with the value-region of classical \texttt{False}. The gradient states that a positive change in the function input will result in a positive increase in the output truth value, even though an insufficient step-size could still result in a classical \texttt{False} output.
Adjusting the threshold-of-truth $2/3\le \alpha\le 1$ can change the gradient magnitude in the classical regions, so that a more conservative gradient can be obtained for these regions that could make learning updates more accurate. 

In contrast, gradient-transparent clamping utilizes the existing gradients of the same magnitude as the unclamped region, so it offers more aggressive gradients in the classical regions compared to piecewise linear activations. An approach of gradient supplantation in gradient-transparent clamping could ensure equivalent gradient magnitudes to piecewise linear activations, but without the need to support the classical symmetry in range $[\alpha, 1]$ associated with \texttt{True}. Output values also do not have to be relaxed by $\alpha$ so that \texttt{False} is still only at an output of 0, but arbitrary gradients can be provided in the clamped regions with gradient-transparent clamping.

Basic gradient supplantation alters gradient-tracked values by scaling their gradients with a provided scalar $a$ under specified conditions. In gradient-transparent clamping the addition of gradient supplantation can scale the effective gradient where value clamping has been applied. Bounded ReLU can then exhibit an arbitrarily scaled gradient in its rectified region to allow for more accurate learning updates, since smaller gradients can be chosen for clamped regions. The scaling-based gradient supplantation uses indicator or boolean condition functions readily available in automatic differentiation libraries, in addition to value detachment $x^*$ from the computational graph for gradient-tracked $x$, and the adapted gradient-transparent functions are thus applied as
\begin{align}
    \min(x_{\max}, x)^{(a)}&=x(x\le x_{\max})+(ax-ax^*+x_{\max})(x_{\max}< x),\\
    \max(x_{\min}, x)^{(a)}&=(ax-ax^*+x_{\min})(x<x_{\min})+x(x_{\min}\le x).
\end{align}
Normal clamping that involves setting gradients in the clamped regions to zero, would then correspond with zero-scaled gradient-transparent clamping $\min(x_{\max}, x)^{(0)}$ and $\max(x_{\min}, x)^{(0)}$ that uses gradient supplantation with a scaling factor of $a=0$.

\section{Empirical evaluation (continued)}
\label{apndx:EmpiricalEvaluation}
\subsection{Smokers and friends}
This section has further details about the \textbf{Smokers and friends} experiment, to augment the overview given in the main paper in {\bf Empirical evaluation} section (Section 7) of the main paper, on page 7.

\textbf{Dataset.}
We use the Smokers and friends dataset from LTN experiment $\mathcal{K}_{\text{exp2}}$ \cite{serafini2016logic} with the small universe of constants a-h. Initial facts on three predicates, namely smokes $S(x)$, cancer $C(x)$, and friends $F(x,y)$ (open-world) are given.

\textbf{Axioms.}
We use the logical formulae, consisting of 5 plausible axioms, of the Smokers and Friends experiment in LTN experiment $\mathcal{K}_{\text{exp2}}$ \cite{serafini2016logic}.
We repeat the experiments including axioms induced by MLN~\cite{richardson2006markov} (total 8 axioms) on this dataset, seeded with the original 5 axioms. In particular we learn MLN formula structure and associated weights on this data, using Alchemy 2~\cite{richardson2007alchemy} with the command \texttt{./learnstruct -i smoking.mln -t smoking-a-h.db -o smoking-a-h-out.mln} (please find the attached code in the supplementary files).
\begin{table}[htbp!]
    \centering
    \begin{small}
    \caption{Learned parameters for each Smokers and friends experiment, to contrast ablation of logical connective weights, and show the effect of a larger simultaneous axiom set on bounds relaxation.}
    \begin{tabular}{ccccc}
    \toprule
        \textbf{Experiment} & \textbf{Axioms} & \textbf{Axiom bounds} & \textbf{Initial fact bounds} & \textbf{Connective weights}\\
    \midrule
        $P_1^5$ & 5 & $\checkmark$ & $\checkmark$ & $\checkmark$\\
        $P_2^5$ & 5 & $\checkmark$ & $\checkmark$ & \\
        $P_1^8$ & 8 & $\checkmark$ & $\checkmark$ & $\checkmark$\\
        $P_2^8$ & 8 & $\checkmark$ & $\checkmark$ & \\
    \bottomrule
    \end{tabular}
    \label{tab:smokers-specs1}
    \end{small}
\end{table}

\textbf{Experiments.}
We conduct four experiments, two on the smaller 5 axioms, and another two on the larger 8 axioms for the same data. The learning parameterization of the experiments in Table~\ref{tab:smokers-specs1} show ablation of the logical connective weights to ascertain its effect. The objective of the experiment is formulated in the loss function as $(1+\mathtt{contradiction})/(1+\mathtt{factalign}+\mathtt{tightbounds})$, where the goal is to reduce contradiction over the entire network as much as possible by balancing relaxation (through learning) of initial facts and all inferences made. This means that initial fact bounds and axiom bounds form learnable parameters and are updated through gradient descent with the loss gradient w.r.t. each parameter. A larger axiom set will place more simultaneous constraints that could produce more inconsistencies.

After each update a new training epoch is initialized with the updated bounds, followed by the Upward-Downward inference algorithm run until no new inferences are made. Backpropagation occurs over all of these inference passes for a given training epoch to provide a gradient trace to the initial computational graph leaves that include the adjusted fact/axiom bounds and logical connective weights. The learning settings in Table~\ref{tab:smokers-specs2} are all the same for the various experiments, and a relatively small number of epochs are required.
\begin{table}[htbp!]
    \centering
    \begin{small}
    \caption{Learning settings for $P_1^5, P_2^5, P_1^8, P_2^8$, with a decreasing learning rate schedule over 100 epochs from 0.1$\rightarrow$0. Gradient clipping for truth bound and logical connective weights ensure granular updates, and a minimum logical connective weight is kept to avoid division-by-zero during inference.}
    \begin{tabular}{ccccc}
    \toprule
        \textbf{Epochs} & \textbf{Learning rate} (start) & \textbf{Learning rate} (end) & \textbf{Weight} (min) & \textbf{Gradient clipping}\\
    \midrule
        100 & 0.1 & 0 & 0.01 & [-0.1, 0.1]\\
    \bottomrule
    \end{tabular}
    \label{tab:smokers-specs2}
    \end{small}
\end{table}

\begin{table}[htbp!]
    \centering
    \caption{Inferred fact bounds ($[L,U]$ as \%) for LTN $\mathcal{K}_{\text{exp2}}$ (cmp. \cite{serafini2016logic} Table 1) with learnt initial facts. A single number indicates matching bounds $L=U$. The purple shaded regions have groundtruth facts provided, which are adjusted through learning, followed by a final inference to convergence to produce the bounds below.}
    \includegraphics[width=0.9\textwidth,trim={1.7cm 1.9cm 16.1cm 1.8cm},clip]{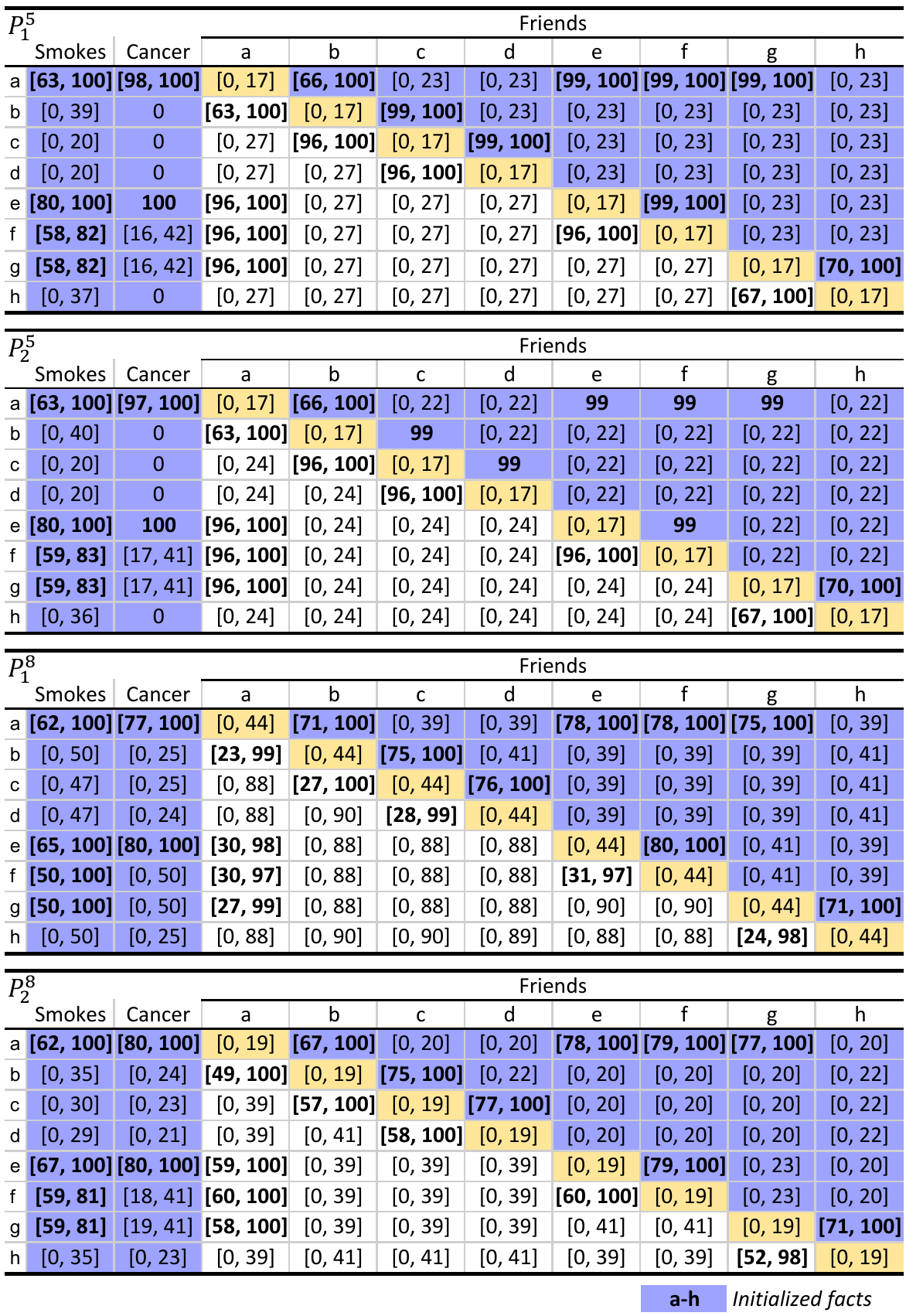}
    \label{tab:smokers-p5}
\end{table}

\textbf{Experimental results.}
Overall results in Tables~\ref{tab:smokers-p5} and~\ref{tab:smokers1*} show learnt initial fact and axiom truth value bounds for the different experiments, also showing additional loss values not given in the main paper.
The LTN degree of satisfiability could possibly be compared to the mean of the LNN truth value bounds, although some comparative deviances are noted especially for $\neg S(x)\kern-0.2em\lor\kern-0.2em\neg F(x,y)\kern-0.2em\lor\kern-0.2em S(y)$ where LTN reports high satisfiability of 0.96 while LNN records bounds of [65, 100]. This is possibly due to LTN not completely inferring friendship symmetry, whereas LNN applies friendship symmetry which is then followed by transitivity of smoking amongst friends that are untrue in some cases.
\begin{table}[htbp!]
    \centering
    \begin{small}
    \caption{Learnt LNN neuron weights (from $P_1^{8}$) and axiom lower bounds (as \%) for LTN experiment $\mathcal{K}_{\text{exp2}}$ (universe: a-h) \cite{serafini2016logic}, compares LTN degree of satisfiability (as \% for 5 axioms) to LNN $P_1^{5}, P_2^{5}$ and repeats in $P_1^{8}, P_2^{8}$ for 8 axioms including 3 induced by MLN~\cite{richardson2006markov}, with corresponding MLN log-probability weights, followed by axiom-wise contradiction counts. Loss function $(1+\mathtt{contradiction})/(1+\mathtt{factalign}+\mathtt{tightbounds})$ (normalized) component values after training show complete removal of contradictions by relaxing facts and inferences.
    Gradient descent ($\alpha=1, \beta=1, w_{\max}=1$ with weight normalization and gradient-transparent clamping) adjusts operand weights ($P_1$), initial axiom/fact bounds ($P_1$, $P_2$). Every training epoch performs inference initialized with updated bounds until convergence after the parameter update.
    }
    \begin{tabular}{@{ }l@{}c@{ }@{ }c@{ }c@{ }@{ }c@{ }c@{ }@{ }c@{ }c@{ }}
        \toprule
        \multicolumn{2}{@{}r@{}}{}&\multicolumn{4}{@{ }c@{ }}{\bf{LNN} $[L, U]$} \textit{100 epochs, lr: 0.1$\rightarrow$0}\\
        \cmidrule{3-4}\cmidrule{5-6}
        \textbf{Smokers and friends} [$\mathcal{K}_{\text{exp2}}$ (a-h)] & \textbf{LTN} & ${P_1^{5}}$ & ${P_2^5}$ & ${P_1^8}$ & ${P_2^8}$ & \textbf{MLN} & $L\kern-0.3em>\kern-0.3emU$\\
        \midrule
        $\exists y F(x,y)$ & 100 & 100 & 100 & 100 & 100 & 6.88 & 0\\
        $\neg F(x,x)$ & 98 & [83, 98] & [83, 98] & [56, 98] & [80, 98] & 0.26 & 0\\
        $\neg F(x,y)^{0.96}\kern-0.2em\lor\kern-0.2em F(y,x)^1$ & 90 & [96, 97] & [97, 100] & [51, 95] & [82, 97] & - & 0\\
        $\neg S(x)^{0.98}\kern-0.2em\lor\kern-0.2em\neg F(x,y)^1\kern-0.2em\lor\kern-0.2em S(y)^{0.97}$ & 96 & [65, 100] & [65, 100] & [65, 100] & [66, 100] & 3.53 & 2\\
        $\neg S(x)^1\kern-0.2em\lor\kern-0.2em C(x)^{0.98}$ & 77 & [57, 100] & [58, 100] & [50, 100] & [60, 100] & -1.35 & 2\\\cmidrule{1-4}
        $\neg F(x,y)^{0.97}\kern-0.2em\lor\kern-0.2em \neg S(y)^1\kern-0.2em\lor\kern-0.2em F(y,x)^{0.96}\kern-0.2em\lor\kern-0.2em S(x)^1$ &  &  &  & 100 & 100 & 6.87 & 0\\
        $\neg F(w,x)^1\kern-0.2em\lor\kern-0.2em\neg F(w,y)^1\kern-0.2em\lor\kern-0.2em \neg F(z,x)^1\kern-0.2em\lor\kern-0.2em C(z)^{0.97}$ &  &  &  & [73, 100] & [70, 100] & 4.33 & 51\\
        $\neg F(w,x)^1\kern-0.2em\lor\kern-0.2em\neg F(y,w)^1\kern-0.2em\lor\kern-0.2em \neg F(z,y)^1\kern-0.2em\lor\kern-0.2em \neg S(y)^{0.99}$ &  &  &  & [80, 100] & [77, 100] & 9.68 & 66\\
        \midrule
        Contradiction (remaining) & & 0 & 0 & 0 & 0 & & \textit{121}\\
        Factual $E_i[|L_i'-L_i|+|U_i'-U_i|]$ (start: 0.64)& & 0.42 & 0.43 & 0.27 & 0.37 & & \\
        Bound tightness $E_i[\exp(L_i-U_i)]$ (start: 1) & & 0.88 & 0.89 & 0.9 & 0.97 & & \\
        \midrule
        \textbf{Loss} (start)& & 1.897 & 1.897 & 46.276 & 46.276 & & \\
        \textbf{Loss} (end)& & 0.435 & 0.432 & 0.461 & 0.429 & & \\
        \bottomrule
    \end{tabular}
    \label{tab:smokers1*}
    \end{small}
\end{table}

The MLN formula weights are not directly comparable to truth values, and requires conversion to probabilities, although this conversion would be dependent on the data specifically. MLN axiom weight 6.88 for $\exists y F(x,y)$ states that a world with $n$ friendless people is $e^{6.88n}$ times less probable than a world where all have friends, other things being equal~\cite{richardson2006markov}, but LNN sets full truth as it does not conflict.
Still, a relative magnitude ranking can be compared, where we see that the last two axioms have relatively high weight or log-probability of being true for the data, in spite of the observation of many contradictions attributed to these axioms. This could possibly due to incorrect closed-world assumptions made about participating predicates.

The primary observation is that LNN can perform gradient descent with the system loss to remove contradiction by relaxing facts and axioms, carefully balancing impact on factual correctness and the usefulness (tightness) of inferences made. If facts deviate too much from groundtruth then factual incorrectness arises, and when an axiom truth is relaxed too much, then less tight and more unknown inferences will be made. Learning of neuron weights can possibly reduce bounds relaxation in $P_1^{8}$ when comparing to $P_2^{8}$ for the last two high-conflict axioms.

\textbf{Clamped gradients.}
We perform an experiment to measure the effect of available gradients across the clamped domain of \L ukasiewicz for the 8 axiom set where we update all parameters, namely $P_1^8$. The results in Figure~\ref{fig:smokers-cgrad} shows graphs for the system loss $(1+\mathtt{contradiction})/(1+\mathtt{factalign}+\mathtt{tightbounds})$ and its denominator components. Gradient supplantation, introduced in Section~\ref{apndx:gradsup}, is used here to introduce arbitrary gradients in the clamped domain. A comparison is made against the baseline of conventional \L ukasiewicz with no gradients in the clamped domain.
\begin{figure}
    \centering
    \begin{small}
    \includegraphics[width=\textwidth,trim={0.3cm 6.0cm 3.4cm 0.3cm},clip]{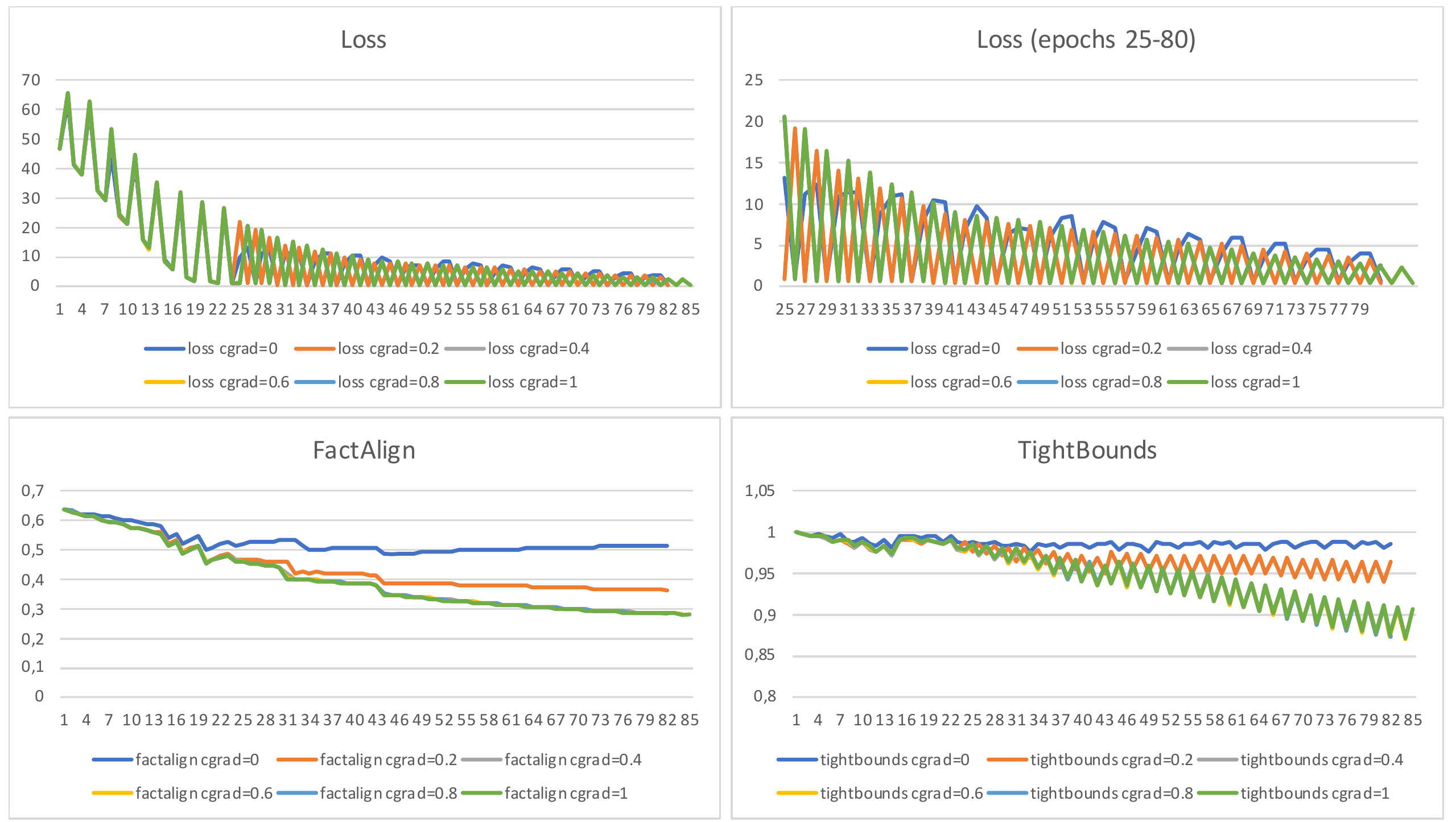}
    \caption{Gradient supplantation $x(0\le x)+c_{\text{grad}}(x-x^*)(x<0)$ (please see Section~\ref{apndx:Gradient-transparent clamping}) with arbitrary \L ukasiewicz conjunction \texttt{False} gradients $c_{\text{grad}}\in \{0, 0.2, 0.4, 0.6, 0.8, 1\}$ are compared in terms of the system loss $(1+\mathtt{contradiction})/(1+\mathtt{factalign}+\mathtt{tightbounds})$ and its denominator components for $P_1^8$. A conventional \L ukasiewicz conjunction \texttt{False} gradient $c_{\text{grad}}=0$ changes fewer parameters, but is still able to remove contradictions for this CNF system although at a slower convergence, yet it exhibits smaller deviance from initial facts and produces more useful inferences. \L ukasiewicz conjunctions with non-zero \texttt{False} gradients appear to contribute to converging slightly faster (at the top of the "flip-flop"), as it possibly has more parameters with gradients available, although the downside is larger relaxation of facts and axioms.}
    \label{fig:smokers-cgrad}
    \end{small}
\end{figure}

We note a "flip-flop" learning convergence pattern for the loss in Figure~\ref{fig:smokers-cgrad}, where slight contradictions register in alternate epochs which are subsequently corrected. This is possibly due to the optimum closely bordering a logically inconsistent state, so as not to relax bounds and axioms too much. Gradients detach for corrected contradictions, which mean that only alternate epochs show non-zero gradients near convergence. \texttt{tightbounds} only measures bounds separation for non-contradictory states, so the "flip-flop" pattern extends to this loss component.  

The pattern is probably just an artifact of the loss function, and the added degrees of freedom for parameter updates gradient-introduction in the clamped \L ukasiewicz domain gives. A standard decreasing learning rate, which also benefits from applied gradient clipping, can ensure some learning convergence at a degree of the real optimum.

A conventional \L ukasiewicz conjunction \texttt{False} gradient $c_{\text{grad}}=0$ converges slower than with non-zero gradients, as there are fewer parameters with gradients. In practice, we implement disjunctions via De Morgan's law in terms of negations and conjunctions. However, for disjunctive clauses and \texttt{True}-polarized axioms, there could still be sufficient gradient-tracked parameters for conventional \L ukasiewicz to change disjunctive clauses from \texttt{False} to \texttt{True} (the clamped region for disjunctions) to remove contradictions in this system of formulae. However, without gradients, one would normally not change a clamped result, e.g. \texttt{True} to \texttt{False} for disjunctions.

In the case of these CNF formulae, gradient introduction across the full operating domain of logical connectives may not be strictly necessary to remove contradictions. However, for more complex systems of formulae and facts a non-zero gradient in the clamped regions of logical connectives may be required to address some contradictions. Sigmoidal and tailored activation functions can also provide full gradient coverage for logical neurons.

\subsection{LUBM benchmark}
\label{sup:lubm_benchmark} 
This section gives additional details on {\bf LUBM benchmark} experiments as reported in {\bf Empirical evaluation} section (Section 7) of the main paper, on pages 7 and 8 (line numbers from 245 to 257). Our main goal in these experiments (using LUBM data) is to demonstrate the soundness and completeness of the reasoning performed by LNN. In addition, we also want to report on preliminary evaluation of our initial exploration towards achieving noise robustness using LNN, i.e., specifically to explore LNN's ability to handle noisy axioms that result in ontological inconsistencies.

{\bf Dataset.} Lehigh University Benchmark (LUBM)~\cite{lubm} is a synthetic OWL reasoning dataset in university domain together with 14 benchmark queries. In our experiments we considered LUBM data generated for $1$ university. This has in total $102707$ facts expressed in triples format in addition to axioms expressing university domain ontology in OWL format. We pre-processed these facts and axioms to convert them into equivalent LNN constructs so they could be consumed directly by our LNN code.

{\bf Soundness and completeness.} The purpose of this experiment is to evaluate the soundness and completeness of the results generated by LNN for $14$ LUBM benchmark queries. The $14$ queries are originally designed to evaluate the accuracy of symbolic reasoners. Accurate answering of these queries require varying degrees of reasoning capabilities from shallow to deep. Thus, accurate answers for all these queries by LNN should assert its deep reasoning capabilities.

In our approach, we built a single LNN for the entire LUBM data, with logical nodes and interconnections corresponding to various predicate nodes, logical connectives and their interactions, as expressed through the axioms. This resulted in LNN with $257$ nodes. The facts expressed are assigned as initial groundings to the corresponding nodes. We then made this initial network to go through multiple passes of forward and backward inferences, to allow all the nodes to gather their inferred groundings together with bound values, through reasoning as per network structure and initial facts. It took $4$ bidirectional passes for the network to converge with respect to the node level inference groundings and the corresponding upper and lower bound values. Then for each query, we attach a small query-specific network to the converged LNN in order to infer its groundings through a single forward pass within that particular query network alone. The final groundings computed at the query nodes correspond to the answers for the queries.

In this experiment, we compared LNN with a few standard symbolic reasoners namely Stardog~\cite{stardog}, Virtuoso~\cite{virtuoso} and Blazegraph~\cite{blazegraph}. A detailed comparison of the results for each of the $14$ benchmark queries is given in Table \ref{lubmReasoning}. LNN achieves $100\%$ precision and recall, thus demonstrating its sound and complete reasoning capabilities on LUBM OWL reasoning task. Among symbolic reasoners, Stardog also achieves $100\%$, while Virtuoso and Blazegraph achieve respective average recall values of only $72\%$ and $78\%$ indicating their lack of complete reasoning capabilities in case of queries that need deep reasoning.

\begin{table}[htpb!]
	\centering
	\caption{Soundness and Completeness for LUBM 1 university showing output sizes for each query }
	\label{lubmReasoning}
	\begin{tabular}{c c c c c}
		\toprule
		\textbf{Query} & \textbf{Virtuoso} & \textbf{Blazegraph} & \textbf{Stardog} & \textbf{LNN}  \\ \midrule
		Q1 & 4 & 4 & 4 & 4 \\
		Q2 & 0 & 0  &0 & 0 \\
		Q3 &6  & 6 & 6 & 6\\
		Q4 & 34 & 34 & 34 & 34 \\
		Q5 & 146 & 719 & 719 & 719 \\ 
		Q6 & 5916 & 5916 & 7790 & 7790\\
		Q7 & 59 & 59 & 67 & 67 \\ 
		Q8 & 5916 & 5916 & 7790 & 7790\\
		Q9 & 103 & 103 & 208 & 208 \\
		Q10 & 0& 0& 4 & 4 \\
		Q11 & 224 & 224& 224& 224\\
		Q12 & 15&0&15&15\\
		Q13& 1&1&1&1\\
		Q14& 5916&5916&5916&5916\\ 
		\midrule
		Precision & 1.0 & 1.0 & 1.0 & 1.0 \\
		Recall & 0.72 & 0.78 & 1.0 & 1.0 \\
		\bottomrule
	\end{tabular}
\end{table}

{\bf Noise handling.} Noisy/incorrect axioms typically result in inconsistent ontology and hamper the ability of the reasoning systems to perform inference. For example, if an axiom makes an incorrect statement about the domain and range for a specific binary variable, that likely would result in inconsistency if there is conflict with domain and range inferred for that variable from the rest of the axioms. Symbolic reasoners typically fail to answer queries when faced with such ontological inconsistencies. Typically, they end up alerting the presence of inconsistency, prompting for correction before proceeding with the inference.

In this experiment we explore the potential use of LNN to handle such ontological inconsistencies. LNNs perform inference by propagating and aggregating truth value bounds at various nodes across the network as per node specific activation. As a result, ontological inconsistencies reflect themselves as truth value bound contradictions at nodes corresponding to incorrect axioms in the network. This is because evidences for truth value bounds computed through different routes of the network are likely to contradict each other when aggregated at the inconsistent nodes. In our preliminary exploration we evaluated the usefulness of this behavior towards identifying the source of inconsistency and to further perform self-correction through down-weighting those parts of the network causing inconsistency, thus effectively switch them off from taking part in further inference.

To evaluate this, we added an incorrect axiom to the original set of LUBM axioms and built LNN using them. The specific noisy axiom we considered makes domain (concept to which subject in the binary predicate belongs) and range (concept to which the object in the binary predicate belongs) incorrect for one of the binary predicates, i.e., it wrongly asserts domain of the subject variable and range of the object variable. For example, we took one of the binary predicates \texttt{degreeFrom} with domain and range as \texttt{Person} and \texttt{University} respectively, and modified range to \texttt{Person}. Note that, in addition to the noisy axiom, we also added an additional (correct) axiom to assert that \texttt{University} and \texttt{Person} concepts are disjoint. This was added to make sure Ontology becomes inconsistent with conflict arising from noisy axiom.

As explained above, during inference such inconsistency is expected to result in bound value contradictions at node corresponding to the incorrect axiom. Once node with contradictory bound values is identified, we down-weight that node to effectively switch that off from taking part in further inferences. The network is then allowed to converge through multiple passes of forward and backward inference (after down-weighting inconsistent node that caused bound value contradictions). The converged network is then used to answer all the $14$ benchmark queries. In our experimental run, we found that the LNN is able to successfully locate and down-weight the inconsistent node, converge further through forward and backward inference passes, and then answer all the $14$ queries with $100\%$ precision and recall. We currently experimented with simple noise axioms that do not require deep reasoning to identify conflicts. As a future work, we plan to build upon this idea to further demonstrate the effectiveness of LNN in identifying and self-correcting various types of ontological inconsistencies.

\subsection{Additional information on TPTP benchmark}
This section gives additional information on the {\bf TPTP benchmark} used in the {\bf Empirical evaluation} section of the main paper. The TPTP (Thousands of Problems for Theorem Provers)~\cite{sutcliffe2009tptp,trail} is a comprehensive library for Automated Theorem Proving (ATP) systems, with over 22K problems. It contains problems in 53 different domains, represented in 4 different formalisms. These are First-order Form (FOF), Clausal Normal Form (CNF), typed higher-order form (THF) and typed first-order form (TFF). There are a total of 22686 problems, 17535 ($77\%$) of which have equality. The FOF subset contains 8630 problems spanning 41 domains, 6811 ($78\%$) of which have equality. An example problem file is named CSR001+1.p, with CSR indicating the Common Sense Reasoning domain, 001 the problem number in this domain, $+$ indicating FOF format and then 1 after $+$ signifying the difficulty level of the problem. Instead of a $+$ symbol, THF problems are identified by $\wedge$, CNF by $-$ and TFF by $=$.

The CSR domain we evaluated on has a total of 1030 problems with 937 of these in FOF. The difficulty level of its problems range from 1 to 100 with majority of them below level 7. After filtering out problems with functions and equality, the 25 remaining all had a difficulty level of 1. From these 25 problems, the LNN was able to prove all theorems.

\end{document}